\pgfplotsset{
    x tick style={color=black},
    y tick style={color=black}
}
\renewcommand{\labelenumi}{(\alph{enumi})}
\renewcommand\theenumi\labelenumi
\newtheorem{theorem}{Theorem}
\newtheorem{lemma}[theorem]{Lemma}
\newtheorem{corollary}[theorem]{Corollary}
\newtheorem{definition}[theorem]{Definition}
\newcommand{\oea}{${(1 + 1)}$~EA\xspace}
\newcommand{\oplea}{$(1+\lambda)$~EA\xspace}
\newcommand{\mplea}{${(\mu+\lambda)}$~EA\xspace}
\newcommand{\mpoea}{$(\mu+1)$~EA\xspace}
\newcommand{\om}{\textsc{OneMax}\xspace}
\newcommand{\onemax}{\om}
\newcommand{\R}{\ensuremath{\mathbb{R}}}
\newcommand{\N}{\ensuremath{\mathbb{N}}} 
\newcommand{\Z}{\ensuremath{\mathbb{Z}}}
\newcommand{\calA}{\ensuremath{\mathcal{A}}} 
\DeclareMathOperator{\Bin}{Bin}
\DeclareMathOperator{\XOR}{XOR}
\DeclareMathOperator{\mrand}{mrand}
\DeclareMathOperator{\crand}{crand}
\DeclareMathOperator{\ridx}{ridx}
\DeclareMathOperator{\Bernoulli}{Bernoulli}
\DeclareMathOperator{\TRUE}{TRUE}
\newcommand{\Var}{\mathrm{Var}\xspace} 
\newcommand{\eps}{\varepsilon}
\begin{document}
\sloppy

\title{Working Principles of Binary Differential Evolution\thanks{This is a significantly extended version of the 8-page conference paper~\cite{ZhengYD18}. All authors of this version have contributed equally. The authors are given in alphabetical order as common in theoretical computer science.}}

\author{Benjamin Doerr\\ \'Ecole Polytechnique, CNRS,\\ Laboratoire d'Informatique (LIX),\\ Palaiseau, France
\and Weijie Zheng\footnote{Work done while affiliated with Department of Computer Science and Technology in Tsinghua University and partially during a research stay at \'Ecole Polytechnique's computer science lab (LIX).}\\ Luoyanghe Village, \\ Huoshan Economic Development Zone, \\ Anhui, China}

\maketitle

\begin{abstract}
We conduct a first fundamental analysis of the working principles of binary differential evolution (BDE), an optimization heuristic for binary decision variables that was derived by Gong and Tuson (2007) from the very successful classic differential evolution (DE) for continuous optimization. We show that unlike most other optimization paradigms, it is stable in the sense that neutral bit values are sampled with probability close to $1/2$ for a long time. This is generally a desirable property, however, it makes it harder to find the optima for decision variables with small influence on the objective function. This can result in an optimization time exponential in the dimension when optimizing simple symmetric functions like OneMax. On the positive side, BDE quickly detects and optimizes the most important decision variables. For example, dominant bits converge to the optimal value in time logarithmic in the population size. This enables BDE to optimize the most important bits very fast. Overall, our results indicate that BDE is an interesting optimization paradigm having characteristics significantly different from classic evolutionary algorithms or estimation-of-distribution algorithms (EDAs).

On the technical side, we observe that the strong stochastic dependencies in the random experiment describing a run of BDE prevent us from proving all desired results with the mathematical rigor that was successfully used in the analysis of other evolutionary algorithms. Inspired by mean-field approaches in statistical physics we propose a more independent variant of BDE, show experimentally its similarity to BDE, and prove some statements rigorously only for the independent variant. Such a semi-rigorous approach might be interesting for other problems in evolutionary computation where purely mathematical methods failed so far. 
\end{abstract}

\section{Introduction}
The family of differential evolution (DE) heuristics, first proposed by Storn and Price in 1995~\cite{StornP97}, has become one of the most successful branches of evolutionary computation in continuous optimization and has been applied with great success to many real world problems, see, e.g., the survey~\cite{DasMS16}.

However, compared to the abundance of results in continuous optimization, DE for discrete search spaces is much less understood. The difficulties start with how to implement the inherently continuous working principles of DE in discrete search spaces. One approach is to embed the discrete optimization problem into a continuous setting and then utilize continuous DE. For instance, Pampar\'a, Engelbrecht, and Franken~\cite{PamparaEF06} employ angle modulation to generate binary strings from floating-point individuals. Engelbrecht and Pampar\'a~\cite{EngelbrechtP07} further use the sigmoid value of the individual as the probability to generate the bit value, and also propose a normalization mapping. 

Much less effort has been put into the design of truly discrete DE algorithms. Historically the first to do so, to the best of our knowledge, are Gong and Tuson~\cite{GongT07}. They apply the rigorous forma analysis method to derive in a generic way a DE variant for binary search spaces. Moraglio and Togelius as well as Moraglio, Togelius and Silva~\cite{MoraglioT09,MoraglioTS13} define discrete versions of DE via another generic approach, namely by requiring that certain geometric properties of the operators should be maintained. They demonstrate the usefulness of this approach not only for binary representations, but also for permutations and vectors of permutations. Recently, Santucci, Baioletti and Milani~\cite{SantucciBM16} propose another differential mutation for permutation. 

To the best of our knowledge, apart from the axiomatic definitions of the different binary DE algorithms, there are no theoretical analyses of these methods so far. This contrasts the increasing theoretical understandings on other evolutionary algorithms like simple mutation-based algorithms~\cite{DrosteJW02}, the compact Genetic Algorithm (cGA)~\cite{Droste06}, ant colony optimizers~\cite{Gutjahr08,NeumannW09}, and the univariate marginal distribution algorithm
(UMDA)~\cite{ChenTCY10}. The lack of theoretical work on binary DE could be caused by the relatively complicated dependencies in the stochastic process of a run of a DE heuristic. There are two types of the stochastic dependencies in DE, one from the reusing the same individuals when generating the mutant, and the other from the selection operator. As we shall see in this work, these dependencies lead to difficulties not seen in the analysis of the other evolutionary algorithms, which often treat the different bit positions independently (apart from the fitness-based selection). 

\paragraph*{Our results:}
Since a theoretical understanding of an evolutionary algorithm can be very useful for its future use, this paper conducts a first fundamental analysis of the working principles of the binary differential evolution (BDE) algorithm proposed by Gong and Tuson~\cite{GongT07}. We concentrate on this BDE, since it is the historically first and because we feel that its derivation via forma analysis makes it most likely that it inherits the true nature of DE from the continuous world. However, we expect that our results in a similar manner hold for other variants of BDE. 

We show that the stochastic dependencies discussed above lead to a behavior significantly different from what is observed with many other nature-inspired optimization heuristics, in particular those, for which a solid theoretical understanding exists. For example, many heuristics have the property that at any time any point of the search space can be generated (possibly with a small probability). For BDE, this is substantially different. We show that from the random initial population, only an exponentially small fraction of all individuals can be generated in one iteration (see Theorem~\ref{thm:badcase1}). In a similar vein, we present an objective function $f$ and a population $P$ such that BDE from this population with probability $1$ never finds the optimum of $f$. Here $P$ can be chosen exponentially large in the dimension and for each bit position each value may occur exponentially often (Theorem~\ref{thm:badcase2}). 

Unlike most other optimization paradigms for bit-string representations, we show that BDE is stable in the sense of Friedrich et al.~\cite{FriedrichKK16}, that is, neutral bit values are sampled with probability close to $\tfrac{1}{2}$ for a long time. We prove that BDE is stable when optimizing the Needle function, in which all bits are neutral before the optimum is found. Here, precisely, we show that for a time exponential in the population size all bit values are sampled with frequencies in $[\tfrac{1}{2}-\epsilon, \tfrac{1}{2}+\epsilon]$, where $\epsilon>0$ can be any small constant (Theorem~\ref{thm:stable}). 

The inherent dependencies in BDE prevent us from mathematically extending this stability result to arbitrary neutral bits. Therefore, similar to the mean-field approach in statistical physics, we analyze a simpler but similar model called iBDE in which each bit position is treated independently when generating the mutant. We experimentally show the similarity of the behavior between BDE and iBDE in neutral bits and theoretically show the stability of iBDE (Theorem~\ref{thm:iBDENeutral}). As a contrast, extending and sharpening results from~\cite{FriedrichKK16} (partially also mentioned without proof in~\cite{SudholtW16}), we show that in the algorithms UMDA and cGA, the sampling frequency of a neutral bit hits the absorbing boundaries 0 and 1 (or the artificial boundaries $\frac1D$ and $1-\frac1D$ when these are used) in expected times $\Theta(\mu)$ and $\Theta(K^2)$, see Section~\ref{sec:neutralothers}.

As a second positive property, we show that BDE can quickly detect and optimize the most important decision variables. For instance, we prove rigorously that a dominant bit converges to the optimal value in time logarithmic in the population size (Theorem~\ref{thm:domiTime}). We theoretically discuss the runtime of BDE for the LeadingOnes function under the assumption that the frequency of the ones in the population does not drop below a small constant fraction for a sufficiently long time. In this case, BDE finds the optimum of the $D$-dimensional LeadingOnes function in $O(D)$ iterations (Theorem~\ref{thm:BDEforLOwAssumption}). Similar to the discussion for neutral bits, we mathematically verify that this assumption holds for iBDE and experimentally show the similarity of BDE and iBDE in this respect. Analogous results hold for the optimization of the BinaryValue function (Theorem~\ref{thm:BDEforBVwAssumption} and Lemma~\ref{lem:BVassumiBDE}). 

Although stability is generally a desirable property (see~\cite{FriedrichKK16,DoerrK18} for examples how stable EDAs can outperform common EDAs, which are all unstable), stability can make it hard to find the optimal values of decision variables with small influence on the objective function. We take the OneMax function as an example, and prove that the expected runtime is at least exponential in the dimension when we initialize the population by setting each bit to $1$ with probability $0.6$ (Theorem~\ref{thm:rtlargeprob}). Note that such random individuals are actually better (in terms of fitness) than the usual random individuals having ones with probability $0.5$.

This result could indicate that generally BDE has difficulties with objective functions in which each bit position has only a small influence on the fitness. Such a behavior was previously observed~\cite{DoerrK18arxiv} for some algorithms which optimize dominant bits very fast, e.g., the CSA and the sc-GA. Our experimental analysis (in Section~\ref{sec:onemaxexp}) of the BDE optimizing OneMax is not fully conclusive, but indicates that the runtime of BDE on OneMax is super-polynomial. At the same time, we observe that for reasonable problem sizes BDE with the parameters suggested in the literature still optimizes OneMax in a reasonable time. However, we also observe that BDE profits almost not at all from larger population sizes (as long as the population size is large enough to prevent premature convergence).

The organization of the remainder of the paper is as follows. In Section 2, we give a brief introduction to BDE as proposed by Gong and Tuson~\cite{GongT07}. The stochastic dependencies and the proposed mean-field approaches are discussed in Section 3. Section 4 analyzes the behavior of neutral bits, whereas dominant bits are discussed in Section 5. Section 6 discusses possible negative consequences from stability for easy objective function. Section 7 concludes our work.

\section{Binary Differential Evolution}

This paper discusses the optimization behavior of Binary Differential Evolution (BDE) as proposed by Gong and Tuson~\cite{GongT07}. We concentrate on the variant DE/res/bin~\cite{GongT07}. This BDE algorithm with binomial crossover is shown in Algorithm~\ref{alg:originalBDE}. Throughout this paper, we consider the maximization of a $D$-dimensional pseudo-Boolean function $f: \{0,1\}^D \rightarrow \R$. If not indicated differently, the initial population $P^0$ consists of $N$ randomly generated individuals. 
\begin{algorithm}[!ht]
    \caption{originalBDE}
    {\small
    \begin{algorithmic}[1]
    \STATE {Generate the random initial population $P^0=\{X_i^0,i=1,\dots,N\}$}
    \FOR {$g=0,1,2,\dots$}
    \FOR {$i=1,2,\dots,N$}
    \STATEx {\quad\quad$\%\%$ \textit{Mutation}}
    \STATE {Generate mutually different $r_{1}, r_{2}, r_{3}$ from $\{1, \dots, N\} \backslash \{ i \}$} uniformly at random
    \STATE {Generate a random number $\mrand_{j} \in [0,1]$ for each $j\in\{1,\dots,D\}$}
    \STATE {Define the mutant $V_{i}^g$ via
    \begin{equation*}
    \begin{matrix}
    \mathrm{for}\ j\in\{1,\dots,D\},
    &
    \begin{aligned}
    V_{i,j}^g = \begin{cases}
    1-X_{r_{1},j}^g, & \text{if $X_{r_{2},j}^g \neq X_{r_{3},j}^g \ \mathrm{and} \ \mrand_j < F$;}\\
    X_{r_{1},j}^g, & \text{otherwise.}
    \end{cases}
    \end{aligned}
    \end{matrix}
    \end{equation*}
    }
    \STATEx {\quad\quad$\%\%$ \textit{Binomial Crossover}}
    \STATE {Generate a random number $\crand_{j} \in [0,1]$ for each $j\in\{1,\dots,D\}$}
    \STATE {Define the trial $U_{i}^g$ via
    \begin{equation*}
    \begin{matrix}
    \mathrm{for}\ j\in\{1,\dots,D\},
    &
\begin{aligned}
  U_{i,j}^g = \begin{cases}
  V_{i,j}^g, & \text{if $\crand_j \leq C$;}\\
  X_{i,j}^g, & \text{otherwise.}
  \end{cases}
 \end{aligned}
 \end{matrix}
\end{equation*}}
    \STATEx {\quad\quad$\%\%$ \textit{Selection}}
    \STATE {Select $X_i^{g+1}$ via 
    \begin{equation*}
    X_i^{g+1}=
    \begin{cases}
    X_i^{g}, & \text{if $X_i^g$ has the better fitness;}\\
    U_i^g, & \text{if $U_i^g$'s fitness is better or as good as $X_i^g$'s.}
    \end{cases}
    \end{equation*}}
    \ENDFOR
    \ENDFOR
    \end{algorithmic}
    \label{alg:originalBDE}
    }
\end{algorithm}

In the main optimization loop, for each individual $X_i^g$ of the parent population, a \emph{mutant} $V_i^g$ is generated as follows. Three mutually different indices $r_1,r_2$ and $r_3$ are picked randomly from $\{1,\dots,N\} \backslash \{i\}$. The individual $X_{r_1}^g$ is called the \emph{base vector}. The individuals $X_{r_2}^g$ and $X_{r_3}^g$ together with the random numbers $\mrand_j$ determine whether the $j$-th bit of  $X_{r_1}^g$ is flipped ($V_{i,j}^g = 1-X_{i,j}^g$) or not. 

Then a crossover between the mutant $V_i^g$ and its parent $X_i^g$ determines the \emph{trial vector} $U_i^g$. Among the two crossover operators commonly used in DE, exponential crossover and binomial crossover, we only discuss binomial crossover as this is closer to what is commonly used in discrete evolutionary optimization. Also, the experimental results conducted in~\cite{GongT07} suggest that binomial crossover leads to better results on the typical benchmark problems of the theory community. The binomial crossover of DE is a biased uniform crossover such that, for each bit position $j \in \{1,\dots,D\}$ independently, the trial $U_{i}^g$ inherits the $j$-th bit from $V_{i}^g$ with probability $C$, otherwise we have $U_{i,j}^g = X_{i,j}^g$.

Traditionally, in DE one ensures that the trial vector inherits at least one bit position from the mutant vector. For this, a random index $\ridx \in \{1,\dots,D\}$ is chosen and $U_{i}^g$ is defined by
\begin{align*}
  U_{i,j}^g = \begin{cases}
  V_{i,j}^g, & \text{if $\crand_j \leq C$ or $j=\ridx$}\\
  X_{i,j}^g, & \text{otherwise},
  \end{cases}
\end{align*}
that is, we enforce the bit position $\ridx$ to be taken from the mutant. In this first theoretical analysis of BDE, we omit this mechanism. The main reason is that it adds another technicality, but one which most likely does not change a lot. Note that the probability that (without this mechanism) no bit is taken from the mutant, is $(1-C)^D$, that is, exponentially small in $D$. Therefore, it is highly unlikely that during a polynomial runtime of the algorithm such an event happens. Hence throughout the paper, to make the analysis simpler, we omit this additional technicality.

The final step of BDE is an \emph{offspring-parent selection}. If the trial vector $U_i^g$ is at least as good (in terms of fitness) as its parent $X_i^g$, then it replaces the parent, that is, we have $X_i^{g+1}=U_i^g$. Otherwise, the parent $X_i^g$ will enter the next generation as $X_i^{g+1}$.

\section{Stochastic Dependencies and Mean-Field Approaches}

In this section, we demonstrate that the additional stochastic dependencies present in the random process describing a run of BDE lead to a significantly different behavior than what is observed in other evolutionary approaches. Inspired by mean-field approaches in statistical physics, we then propose a BDE variant with fewer dependencies. We shall see later in this work that it gives good approximations for the true BDE process. 

\subsection{Stochastic Dependencies}

From the description of BDE in the previous section, we observe a large number of the stochastic dependencies in BDE. In the mutation operator, three other individuals are used to generate the mutant. For this reason, the bits of the mutant are far from being independent. As we shall see, this has drastic consequences on which offspring can be generated in one generation and on the convergence behavior of BDE. The second type of dependencies stems from the selection operator. Selection always is a cause for dependencies, since it does not regard bits independently, but their combined influence on the fitness. For BDE, things are made worse by the parent-offspring selection mechanism which does not enable a competition between all parents and offspring. 

It is quite likely that BDE rather profits from these dependencies as they might favor the creation and survival of building blocks (in the mutation step) and favor diversity (in the selection step). From the view-point of gaining a rigorous understanding of the working principles of BDE, these dependencies create significant challenges, unfortunately. In the remainder of this section, we prove three results which show that and how the dependencies lead to a behavior significantly different from that of many other evolutionary algorithms, in particular those, for which a substantial theoretical understanding exists.

\subsubsection{Reachable Offspring} 
We say that an individual $X$ is \emph{reachable} from a parent population $P$ if $X$ can be generated with positive (possibly very small) probability from the parent population. In many evolutionary algorithms, each search point $X$ is reachable from any population. This is immediate for all algorithms which use standard bit mutation (flipping each bit independently with some probability like $1/D$). For most distribution-based heuristics like estimation-of-distribution-algorithms (EDAs) or ant colony optimizers (ACOs), again any search point can be generated as long as none of the frequencies or pheromone values (usually initialized at $\tfrac 12$) has converged to $0$ or $1$. When, as often done, these methods are used with artificial boundaries, preventing the frequencies or pheromone values from leaving an interval like $[\frac 1D, 1-\frac 1D]$, then at all times any search point is reachable. 

We now show that BDE is substantially different in this respect. It is clear that once a bit value has \emph{converged}, that is, in all individuals of the population the value of this bit is identical, then in all future individuals this bit will have this same value (and consequently, not all individual are reachable). However, also long before this convergence, in fact, already right after the initialization, with high probability, the vast majority of the individuals cannot be reached in one generation. The following result shows that for a given target search point $X^*$, with very high probability, starting from the initial random population, this $X^*$ and all search points in Hamming distance at most $\eps D$, $\eps < \frac 18$, cannot be reached in one generation. 
%
%

\begin{theorem}
Consider using BDE with population size $N$ to optimize a $D$-dimensional function $f$. Let $X^*$ be any target search point and $c\in(0,\tfrac{1}{8})$. Then with probability at least $1-N^4\exp(-2c^2D)$, BDE can generate no search point $X$ with Hamming distance $H(X,X^*) \le (\tfrac{1}{8}-c)D$ from the random initial population.
\label{thm:badcase1}
\end{theorem}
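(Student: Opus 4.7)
The plan is to show that, with high probability over the random initial population, for any choice of parent index $i$ and auxiliary indices $(r_1,r_2,r_3)$, the resulting trial vector $U_i^0$ is forced to disagree with $X^*$ on at least $(\tfrac18-c)D$ positions regardless of the algorithm's remaining random choices ($\mrand$ and $\crand$). Then a union bound over the at most $N^4$ such tuples yields the claim.

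First, I fix $i$ and an ordered triple $(r_1,r_2,r_3)$ of distinct indices from $\{1,\dots,N\}\setminus\{i\}$ and, for each bit position $j$, determine the set $S_j\subseteq\{0,1\}$ of values that $U_{i,j}^0$ can assume when $\mrand_j$ and $\crand_j$ range over $[0,1]$. Inspecting the algorithm: if $X_{r_2,j}^0\neq X_{r_3,j}^0$, then $V_{i,j}^0$ can be either $X_{r_1,j}^0$ or $1-X_{r_1,j}^0$, so $S_j=\{0,1\}$; if $X_{r_2,j}^0=X_{r_3,j}^0$, then $V_{i,j}^0=X_{r_1,j}^0$ is deterministic, and the crossover leaves $U_{i,j}^0\in\{X_{i,j}^0,X_{r_1,j}^0\}$. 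Let $Z_j$ be the indicator that $X^*_j\notin S_j$. The analysis above shows $Z_j=1$ exactly when $X_{r_2,j}^0=X_{r_3,j}^0$ and both $X_{i,j}^0$ and $X_{r_1,j}^0$ equal $1-X^*_j$; since the four relevant bits are mutually independent uniform on $\{0,1\}$, we obtain $\Pr[Z_j=1]=\tfrac12\cdot\tfrac14=\tfrac18$.

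The crucial structural observation is that, for fixed $(i,r_1,r_2,r_3)$, the indicators $Z_1,\dots,Z_D$ are mutually independent, because each $Z_j$ is a function only of the bits of the initial population in column $j$, and different columns are independent by the random initialisation. Hoeffding's inequality applied to $\sum_j Z_j$ then gives $\Pr[\sum_j Z_j\le(\tfrac18-c)D]\le\exp(-2c^2D)$. Whenever $\sum_j Z_j>(\tfrac18-c)D$, the trial vector $U_i^0$ must differ from $X^*$ on more than $(\tfrac18-c)D$ positions no matter what values $\mrand$ and $\crand$ take, so $H(U_i^0,X^*)>(\tfrac18-c)D$.

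To finish, I take a union bound over all admissible tuples $(i,r_1,r_2,r_3)$, of which there are at most $N(N-1)(N-2)(N-3)\le N^4$. With probability at least $1-N^4\exp(-2c^2D)$, the bound $H(U_i^0,X^*)>(\tfrac18-c)D$ holds simultaneously for every such tuple, meaning no realisation of the first generation of BDE can produce a search point within Hamming distance $(\tfrac18-c)D$ of $X^*$. The main conceptual step is the identification of $Z_j$ as depending only on the initial population (and not on $F$, $C$, $\mrand$, $\crand$); the rest is a direct Hoeffding plus union bound. No real obstacle arises beyond carefully separating the two layers of randomness (initialisation versus the algorithm's internal random choices).
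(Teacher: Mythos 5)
Your proposal is correct and follows essentially the same route as the paper: your indicator $Z_j$ (that $X^*_j$ is unreachable in column $j$) coincides with the paper's event $(X_{i,j}\neq X^*_j)\wedge(X_{r_1,j}=X_{i,j})\wedge(X_{r_2,j}=X_{r_3,j})$, giving the same probability $\tfrac18$, the same column-wise independence, the same additive Chernoff/Hoeffding bound $\exp(-2c^2D)$, and the same union bound over the at most $N^4$ index tuples. The only cosmetic difference is that you phrase the argument via the set $S_j$ of attainable bit values, which makes the separation between initialisation randomness and the algorithm's internal randomness slightly more explicit.
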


Note that the upper bound $N^4\exp(-2c^2D)$ for the probability of being able to generate some search point $X$ with $H(X,X^*) \le (\tfrac{1}{8}-c)D$ is exponentially small in $D$ unless we work with an exponentially large population. 

\begin{proof}[Proof of Theorem~\ref{thm:badcase1}]
Let $X^* \in \{0,1\}^D$. Let $(i,r_1,r_2,r_3)$ be four mutually exclusive indices from $\{1,\dots,N\}$ and let $X_i,X_{r_1},X_{r_2},X_{r_3}$ be  the corresponding individuals from the random initial population $P^0$. For all $j \in \{1, \dots, D\}$ let $Y_j$ be the indicator random variable for the event 
\[(X_{i,j} \neq X^*_j) \wedge (X_{r_1,j}=X_{i,j}) \wedge (X_{r_2,j}=X_{r_3,j}).\] 
Note that by the definition of BDE, the event $Y_j = 1$ implies that any mutant $V$ arising from $X_{r_1},X_{r_2},X_{r_3}$ has $V_j = X_{i,j}$. Consequently, any trial vector $U$ generated from these individuals has $U_j = X_{i,j}$. Hence regardless of the result of the parent-offspring selection, the individual $X_i^{1}$ of the next generation will be different from $X^*$ in the $j$-th bit position. 

We now estimate the number $Y := \sum_{j = 1}^D Y_j$ of bit positions in which any offspring from $X_i,X_{r_1},X_{r_2},X_{r_3}$ necessarily differs from $X^*$. Since the $X_i,X_{r_1},X_{r_2},X_{r_3}$ are independently generated random individuals, we have $\Pr[Y_j = 1] = \frac 18$ and thus $E[Y]=\tfrac{1}{8}D$. Since further the bit-positions of random individual are also independent, the $Y_j, j \in \{1, \dots, D\}$, are mutually independent as well. Consequently, the classic additive Chernoff bound (see, e.g., Theorem~1.11 in~\cite{Doerr11bookchapter}), shows that  
\begin{equation}
\Pr[Y \le (\tfrac{1}{8}-c)D] \le \exp(-2c^2D).
\label{eq:Yi}
\end{equation}

Let $A(i,r_1,r_2,r_3)$ represent the event that for the given choice $(i,r_1,r_2,r_3)$, the corresponding individuals $X_i,X_{r_1},X_{r_2},X_{r_3}$ are able  to generate some offspring $X_i^1$ with $H(X^1_i,X^*) < (\tfrac{1}{8}-c)D$. From (\ref{eq:Yi}), we have 
\begin{align*}
\Pr[A(i,r_1,r_2,r_3)] = \Pr[Y \le (\tfrac{1}{8}-c)D] \le \exp(-2c^2D).
\end{align*}

The event that some individual $X$ with Hamming distance $H(X,X^*)  \le (\tfrac{1}{8}-c)D$ can be generated from the initial population is the union of the events $A(i,r_1,r_2,r_3)$ over all choices of $(i,r_1,r_2,r_3)$. Hence the probability that some individual $X$ with Hamming distance $H(X,X^*)  \le (\tfrac{1}{8}-c)D$ can be generated from the initial population, is at most
\begin{equation*}
\begin{split}
\Pr[\exists{}& (i,r_1,r_2,r_3): A(i,r_1,r_2,r_3)]\\
={}&\Pr\bigg[\bigcup\limits_{(i,r_1,r_2,r_3)} A(i,r_1,r_2,r_3)\bigg]\\
\le{}& \sum\limits_{(i,r_1,r_2,r_3)} \Pr[A(i,r_1,r_2,r_3)]\\
\le{}&N(N-1)(N-2)(N-3)\exp(-2c^2D).
\end{split}
\end{equation*}

\end{proof}

The same argument as above leads to a global view on the problem, namely that the expected number of reachable individuals is very small (compared to the size $2^D$ of the search space). Recall here that reachable does not mean that the individual is generated or it is likely to be generated, it just means that there is a theoretical chance that it shows up as offspring. Hence the following result shows that for the vast majority of individuals it is a priori clear that they cannot show up as offspring of the initial population. 

\begin{theorem}
  The expected number of individuals which are reachable from the random initial population is at most $N^4 (\frac 78)^D \cdot 2^D = N^4 1.75^D$.
\end{theorem}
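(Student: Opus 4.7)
The plan is to combine the per-bit calculation already carried out in the proof of Theorem~\ref{thm:badcase1} with linearity of expectation over all $2^D$ potential targets. First, I would fix an arbitrary candidate $X^* \in \{0,1\}^D$ and bound the probability that $X^*$ is reachable from the random initial population $P^0$. For any fixed quadruple of pairwise different indices $(i,r_1,r_2,r_3)$, reusing the random variable $Y_j$ of Theorem~\ref{thm:badcase1}, the event ``the quadruple cannot produce the bit $X^*_j$ at position $j$'' happens with probability $1/8$. Hence for each $j$ the complementary event (that $X^*_j$ is producible at position $j$) has probability $7/8$.

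Second, I would observe that since the four initial individuals $X_i, X_{r_1}, X_{r_2}, X_{r_3}$ have independent and uniform bits across positions, the per-position producibility events are independent over $j \in \{1,\dots,D\}$. Bit-wise producibility for the given quadruple is also sufficient for $X^*$ to be reachable from that quadruple, because the fresh random choices $\mrand_j$ and $\crand_j$ can independently steer each coordinate of the trial vector. Therefore the probability that the fixed quadruple can produce $X^*$ is at most $(7/8)^D$.

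Third, I would apply a union bound over the at most $N(N-1)(N-2)(N-3) \le N^4$ choices of $(i,r_1,r_2,r_3)$ to conclude $\Pr[X^* \text{ reachable from } P^0] \le N^4 (7/8)^D$. Then linearity of expectation over all $2^D$ possible values of $X^*$ gives
\[
E[\#\text{reachable individuals}] \le 2^D \cdot N^4 (7/8)^D = N^4 (7/4)^D = N^4 \cdot 1.75^D,
\]
which is the claimed bound.

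There is no real obstacle here, as the argument is essentially a direct repackaging of Theorem~\ref{thm:badcase1}. The one point that merits explicit verification is that per-bit producibility across a fixed quadruple genuinely factorizes into a product of $7/8$ factors; this in turn relies on the independence of the bits of the initial random individuals and on the fact that, once bit-wise producibility holds, the independent random numbers $\mrand_j$ and $\crand_j$ leave positive probability for every coordinate to take the desired value simultaneously.
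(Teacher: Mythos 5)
Your argument is correct and is essentially the paper's proof read in the dual order: the paper computes, per quadruple, the expected number of distinct reachable offspring as $E\big[\prod_j W_j\big]=(\tfrac74)^D$ with $W_j\in\{1,2\}$ indicating whether bit $j$ is forced, whereas you sum $\Pr[X^* \text{ reachable}]\le N^4(\tfrac78)^D$ over all $2^D$ targets; both rest on the same per-bit forcing event, the same independence across bit positions, and the same union bound over quadruples, and yield the identical bound $N^4(\tfrac74)^D$. Your explicit remark that bit-wise producibility implies joint reachability (via the independent $\mrand_j$, $\crand_j$) is a point the paper leaves implicit, and it is correct under the standing assumption $0<F$ and $0<C<1$.
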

\begin{proof}
Let $(i,r_1,r_2,r_3)$ be four mutually exclusive indices from $\{1,\dots,N\}$ and let $X_i,X_{r_1},X_{r_2},X_{r_3}$ be the corresponding individuals from the random initial population $P^0$. We compute the expected number of different offspring which could be generated from the fixed indices $(i,r_1,r_2,r_3)$. For all $j \in \{1, \dots, D\}$, let the random variable $W_j$ be $W_j= 1$ if all possible offspring $U$ satisfy $U_{j}=X_{i,j}$, and let $W_j=2$ otherwise. It is easy to see that 
\[\Pr[W_j=1]=\Pr[(X_{r_1,j}=X_{i,j}) \wedge (X_{r_2,j}=X_{r_3,j})]=\tfrac{1}{4}\] 
and thus $E[W_j]=\tfrac{7}{4}$. Now the number of different individuals that can be generated from $X_i,X_{r_1},X_{r_2},X_{r_3}$ is $Z=\prod_{j=1}^D W_j$. Since the $W_j$ are independent, we have $E[Z]=(\tfrac{7}{4})^D$. Via a union bound over the choices of $(i,r_1,r_2,r_3)$, we obtain that the expected number of reachable individuals is at most $N^4 (\tfrac{7}{4})^D=N^4 1.75^D$.
\end{proof}

Our two results on reachability only show that from the initial population very few individuals can be reached. Due to the complicated randomized process describing a run of BDE, we cannot show such a result for all iterations of BDE. We would suspect, though, that in a typical run of BDE on a typical optimization problem this phenomenon exists throughout the run and rather becomes stronger due to loss of diversity.

\subsubsection{Convergence}
The fact that not all search points can be generated at all times implies that the classic convergence proofs fail for BDE. We now show that not only the classic proofs fail, but that indeed BDE does not necessarily converge, and this even when the initial population is large and highly diverse (in the sense that at all bit positions all bit values occur frequently). This result again demonstrates that the stochastic dependencies inherent in the search process lead to an optimization behavior substantially different from what is observed in classic evolutionary algorithms.

Before stating and showing this result, we note that the random initial population with probability $1 - (1 - 2^{-N+1})^D \le D 2^{-N+1}$ contains a bit position in which all individuals have the same bit value (``converged bit''), which could be a trivial reason for non-convergence. However, as the above estimate shows,  for $N$ mildly larger than $\log_2 D$ the initial population with high probability contains both zeros and ones in each bit position, so this problem is easy to avoid (and it would also be easy to detect).

\begin{theorem}
There is a fitness function $f : \{0,1\}^D \rightarrow \R$ and an initial population $P^0$ without converged bits such that BDE in an arbitrary long runtime does not find the optimum of $f$. The initial population $P^0$ can be chosen of size exponential in $D$ and with all bit values appearing exponentially often at all positions.
\label{thm:badcase2}
\end{theorem}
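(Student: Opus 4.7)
The plan is to exhibit a pair $(f, P^0)$ in which the population stays forever within the set of Hamming-weight-one strings, so that the unique global optimum $x^\ast = 1^D$ can never be produced. Fix $D \geq 5$, let $M$ be any integer exponential in $D$ (for concreteness $M = 2^D$), and let $P^0$ contain $M$ copies of each of the $D$ unit vectors $e_1, \dots, e_D$, giving $N = DM$. At every bit position $j$, exactly $M$ individuals (the copies of $e_j$) carry the value $1$ and $(D-1)M$ carry the value $0$, so both bit values appear exponentially often at every position and $P^0$ has no converged bit. As fitness function I take $f(1^D) = 2$, $f(e_k) = 1$ for $k = 1,\dots,D$, and $f(x) = 0$ otherwise, so $1^D$ is the unique maximum.

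The core of the argument is a deterministic one-step lemma: if $P^g \subseteq \{e_1, \dots, e_D\}$, then for every choice of distinct indices $(i, r_1, r_2, r_3)$ and every realisation of the random numbers $\mrand_j$ and $\crand_j$, the resulting trial vector $U_i^g$ differs from $1^D$. To prove it, write $k_\ell$ for the index of the unit vector equal to $X_\ell^g$. The set $T = \{k_i, k_{r_1}, k_{r_2}, k_{r_3}\}$ has at most four elements, so since $D \geq 5$ there is some $j \in \{1, \dots, D\} \setminus T$. For this $j$ one has $X_{i,j}^g = X_{r_1,j}^g = X_{r_2,j}^g = X_{r_3,j}^g = 0$. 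In particular $X_{r_2,j}^g = X_{r_3,j}^g$, so by the mutation rule $V_{i,j}^g = X_{r_1,j}^g = 0$, and the binomial crossover then forces $U_{i,j}^g \in \{X_{i,j}^g, V_{i,j}^g\} = \{0\}$. Hence $U_i^g$ has a $0$ at position $j$ and cannot equal $1^D$.

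The invariant $P^g \subseteq \{e_1, \dots, e_D\}$ now follows by induction on $g$: since $U_i^g \neq 1^D$, we have $f(U_i^g) \leq 1 = f(X_i^g)$, so the selection rule either keeps the admissible parent $X_i^g$ (when $f(U_i^g) < 1$) or replaces it by $U_i^g$ with $f(U_i^g) = 1$, i.e.\ by some $e_{k'}$; in both cases $X_i^{g+1}$ remains a unit vector. Applying the one-step lemma in every generation then shows that $1^D$ is never produced, so BDE fails to find the optimum no matter how long it runs. I do not expect a real technical obstacle here; the only point of care is that the ``no converged bit'' condition must hold for $P^0$ itself, which is exactly why we inflate each $e_k$ to $M$ copies instead of using a single copy.
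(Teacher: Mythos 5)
Your proof is correct and follows essentially the same strategy as the paper's: trap the population in a set of low-Hamming-weight strings, observe that some bit position must be zero in all four individuals used to produce an offspring (so the high-weight optimum is unreachable), and design the fitness so that selection preserves the trap. The paper uses the set of all strings of weight below $0.2D$ with the optima being all strings of weight at least $0.8D$, while you use the cleaner special case of unit vectors with unique optimum $1^D$; both satisfy the size and non-convergence requirements.
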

\begin{proof}
Consider a function $f: \{0,1\}^D \rightarrow \R$ such that 
\begin{itemize}
\item the global optima (maxima) all are search points $X$ with $\| X \|_1 \ge 0.8D$, and
\item for all search points $X_1$ with $\|X_1\|_1 \in \{0.2D, \dots, 0.8D-1\}$ and $X_2$ with $\| X_2 \|_1 < 0.2D$, we have $f(X_1) < f(X_2)$.
\end{itemize}
We say a population has the property $\calA$ when all individuals $X$ in the population satisfy $\| X \|_1 < 0.2D$. 

Let  $(i, r_1,r_2,r_3)$ denote $4$ mutually different indices from $\{1,\dots, N\}$ and let $X_i^g,X_{r_1}^g,X_{r_2}^g,X_{r_3}^g$ be the corresponding individuals from a population $P^g$ with property $\calA$. Consider 
\begin{equation*}
Z^g=\{j \in \{1, \dots, D\} \mid X_{i,j}^g=X_{r_1,j}^g=X_{r_2,j}^g=X_{r_3,j}^g=0\}.
\end{equation*} 
By the definition of BDE, any mutant $V_i^g$ arising from $X_{r_1}^g,X_{r_2}^g,X_{r_3}^g$ has $V_{i,j}^g=0$ for all $j \in Z^g$. Consequently, any trial vector $U_i^g$ generated from $X_i^g$ and such a mutant has $U_{i,j}^g=0$ for $j\in Z^g$. Since $X_i^g,X_{r_1}^g,X_{r_2}^g$ and $X_{r_3}^g$ are from $P^g$ with property $\calA$, we have $\| X_{i}^g \|_1 < 0.2D,\| X_{r_1}^g \|_1 < 0.2D,\| X_{r_2}^g \|_1 < 0.2D$, and $\|X_{r_3}^g \|_1 < 0.2D$. Hence,
\begin{align*}
|Z^g| > D-0.2D-0.2D-0.2D-0.2D=0.2D.
\end{align*}
Hence there are at least $0.2D$ zeros in $U_i^g$, that is, we have $\| U_i^g \|_1 < 0.8D$.

From this, we immediately conclude that BDE cannot generate any $X$ with $\| X\|_1 \ge 0.8D$ from $P^g$. Moreover, we also observe that $P^{g+1}$ has property $\calA$. Since $\|U_i^g\|_1 < 0.8D$ and $\|X_i^g\|_1 < 0.2D$, we have $f(U_i^g) \ge f(X_i^g)$ only when $\|U_i^g\|_1 < 0.2D$. Hence the successor $X_i^{g+1}$ of $X_i^g$ in the next population in any case has $\|X_i^{g+1}\|_1 < 0.2D$.
 
Hence the next generation $P^{g+1}$ has the property $\calA$ as well. By induction, we obtain that when starting with a population having property $\calA$, we always keep a population with this property, which hence does not contain an optimal solution.

It remains to show that there are initial populations with property $\calA$ that do not have any bit converged. However, this is trivial -- we may just take the set of all $X$ with $\|X\|_1 <0.2D$. This population has size $\binom{D}{<0.2 D} = \exp(\Theta(D))$ and has a fraction of $\frac 15 - o(1)$ of ones in each bit position. However, also an initial population composed of $D$ random strings having a $1$ at each position with probability $\frac 1 {10}$ has property $\calA$ with probability $1 - \exp(-\Theta(D))$.
\end{proof}

Clearly, the construction used in the proof above is artificial. However, it points out that BDE does not necessarily converge, and, more importantly, that non-convergence can be determined already by a population that has no converged bits. 

This also shows that it is a non-trivial problem to detect if a run of BDE has entered a state from which it cannot generate the whole search space anymore. Note that this question is trivial for most EDAs and ACO algorithms since any search point can be generated if and only if there are no converged frequencies or pheromone values, a criterion that is easy to check.

\subsection{Mean-Field Approaches and Independent BDE (iDBE)}

In statistical physics often the situation arises that the stochastic interactions between different particles are too hard to grasp mathematically. A common solution, called mean-field theory, is to disregard some of the dependencies and to conduct a mathematical analysis of the simplified model. The results obtained in the simplified model, naturally, are not immediately valid for the original model, but they can point into the right direction and they can be made plausible by arguing, possibly supported by experiments, that the simplification does not lead to a significant discrepancy of the two models.

Since the dependencies caused by the mutation operator of BDE impose significant difficulties for the mathematical analysis of BDE, we shall resort to a similar approach in some of the following analyses. To this aim, we propose a variant of BDE, called \emph{independent BDE (iBDE)}, which generates the bits of a mutant independently, but is otherwise identical to BDE. More precisely, when generating a mutant $V_i$, for each bit position $j$ independently,  we select mutually different (and different from $X_i$) individuals $X_{r_1,j}, X_{r_2,j}, X_{r_3,j}$ to generate $V_{i,j}$. See Alg.~\ref{alg:iBDE} for the precise pseudocode. 

Whenever in the following sections we resort to analyzing iBDE, we shall also argue for the similarity between iBDE and the original BDE in the particular respect regarded. Note that iBDE and BDE do differ in some respects. For example, the reachability and convergence results shown in this section naturally are not valid for iBDE. When the current population has no converged bits, then any individual can be generated.
\begin{algorithm}[!ht]
    \caption{iBDE}
    {\small
    \begin{algorithmic}[1]
    \STATE {Generate the random initial population $P^0=\{X_i^0,i=1,\dots,N\}$}
    \FOR {$g=0,1,2,\dots$}
    \FOR {$i=1,2,\dots,N$}
    \STATEx {\quad\quad$\%\%$ \textit{Modified Mutation}}
    \FOR {$j=1,2,\dots,D$}
    \STATE {Generate mutually different $r_{1}, r_{2}, r_{3}$ from $\{1, \dots, N\} \backslash \{ i \}$} uniformly at random
    \STATE {Generate a random number $\mrand_{j} \in [0,1]$}
    \STATE {Generate the $j$-th bit position value of the mutant $V_{i}^g$ via
    \begin{equation*}
    \begin{aligned}
    V_{i,j}^g = \begin{cases}
    1-X_{r_{1},j}^g, & \text{if $X_{r_{2},j}^g \neq X_{r_{3},j}^g \ \mathrm{and} \ \mrand_j < F$;}\\
    X_{r_{1},j}^g, & \text{otherwise.}
    \end{cases}
    \end{aligned}
    \end{equation*}
    }
    \ENDFOR
    \STATEx {\quad\quad$\%\%$ \textit{Binomial Crossover}}
    \STATE {Generate a random number $\crand_{j} \in [0,1]$ for each $j\in\{1,\dots,D\}$}
    \STATE {Define the trial $U_{i}^g$ via
    \begin{equation*}
    \begin{matrix}
    \mathrm{for}\ j\in\{1,\dots,D\},
    &
\begin{aligned}
  U_{i,j}^g = \begin{cases}
  V_{i,j}^g, & \text{if $\crand_j \leq C$;}\\
  X_{i,j}^g, & \text{otherwise.}
  \end{cases}
 \end{aligned}
 \end{matrix}
\end{equation*}}
    \STATEx {\quad\quad$\%\%$ \textit{Selection}}
    \STATE {Select $X_i^{g+1}$ via 
    \begin{equation*}
    X_i^{g+1}=
    \begin{cases}
    X_i^{g}, & \text{if $X_i^g$ has the better fitness}\\
    U_i^g, & \text{if $U_i^g$ has the not worse fitness}
    \end{cases}
    \end{equation*}}
    \ENDFOR
    \ENDFOR
    \end{algorithmic}
    \label{alg:iBDE}
    }
\end{algorithm}

\section{Stability, Behavior of Neutral Bits}
\label{sec:neutral}

When a bit-position has no influence on the fitness, then it would make sense that its sampling frequency in EDAs or ACOs stays close to $\frac 12$ for a long time. A property trying to grasp this idea was called \emph{stable} by  Friedrich, K\"otzing, and Krejca~\cite{FriedrichKK16}. Unfortunately, as shown in~\cite{FriedrichKK16}, all classic EDAs and ACOs are not stable. The recent works of Witt~\cite{Witt17} and Lengler, Sudholt, and Witt~\cite{LenglerSW18} show that instability, more precisely, the early and unmotivated move of frequencies to boundary values can lead to a considerable performance loss when optimizing the \onemax function.

In this section, we demonstrate that BDE is more stable than the classic EDAs and ACO algorithms. To this aim, we both show stability results for BDE and iBDE and we show improved instability results for the EDA called UMDA and the compact genetic algorithm (cGA). We start by making precise what we mean by stability.

\subsection{Stability of EDAs and BDE}

Let $f : \{0,1\}^D \to \R$ be an objective function to be optimized. We say that $i \in \{1, \dots, D\}$ is a \emph{neutral} bit-position if for all $x, y \in \{0,1\}^n$ with $x_j = y_j$ for all $j \in \{1, \dots, D\} \setminus \{i\}$ we have $f(x) = f(y)$. In other words, the fitness of a search point does not depend on the value of the $i$-th bit. We note that such a bit-position was called \emph{$f$-independent} in~\cite{FriedrichKK16}.
The following formal definition of stability was given in~\cite{FriedrichKK16}.
\begin{definition}[\cite{FriedrichKK16}]
An $n$-Bernoulli-$\lambda$-EDA A is \emph{stable} if, for all $f$-independent positions $i$ of A, the limit distribution of frequency $p_i^{(t)}$, as $t \rightarrow \infty$, exists and is symmetric around $\tfrac{1}{2}$, taking its maximum at $\tfrac{1}{2}$, and is strictly monotonically decreasing from $\tfrac{1}{2}$ toward the borders.
\end{definition}

Since the main aspect of instability is that frequencies without good reason approach too fast the boundaries, we propose an alternative definition based on the time until a frequency leaves a constant-length region around the middle value $\tfrac 12$. We define this property formally for BDE and use analogous notions for other algorithms. 
\begin{definition}
A BDE with population size $N$ is \emph{stable} if there is a constant $\delta \in (0,\tfrac{1}{2})$ such that for any objective function $f$ and any $f$-independent position~$j$, the frequency $p_j^{(g)} := \frac 1N \sum_{i=1}^N X_{i,j}^g$ with high probability remains in $[\frac{1}{2}-\delta,\frac{1}{2}+\delta]$ for a super-polynomial (in $N$) number of iterations.
\end{definition}

\subsection{Stability of BDE When Optimizing the Needle Function}

As our first argument for the stability of BDE, we prove rigorously that when optimizing the Needle function via BDE, then the bit frequencies stay close to $\frac 12$ for a time exponentially long in the population size $N$. This result stands in sharp contrast to our later results showing, e.g., that the expected time until a neutral bit hits one of the boundary values in a run of the cGA is $O(K^2)$ iterations (where $K$ is the hypothetical population size of the cGA) and is $O(\mu^2)$ iterations for UMDA.

We recall that the $D$-dimensional Needle function is the fitness function $f: \{0,1\}^D \to \{0,1\}$ defined by $f(X) = 1$ if and only if $X = (1,\dots,1)$. Hence up to the hitting time of the optimum, all bits behave neutrally. 

To not obscure the main proof by two lengthy calculations, we formulate their results as separate lemmas before the main proof. These might, nevertheless, be results of independent interest as they compute the dynamics of a single bit subject to mutation and crossover. Since this analysis does not consider selection, it is independent of the fitness function and thus applies to all fitness functions. The main finding in the following lemma (see also Figure~\ref{fig:hx}) is that there is a strong drift towards the middle value of an equal number of zeros and ones. This is the main reason for the fact that bits without a clear fitness-signal stay close to this undecided situation in BDE, unlike for many other algorithms.
 
\begin{lemma}
Consider one iteration of BDE with population size $N$ optimizing some $D$-dimensional function. Let $Y_g$ denote the number of ones in a certain bit position among all individuals of the population of generation $g$. Let $\tilde{Y_g}$ denote the number of ones in this position in the trial population $\{U_1^g, \dots, U_N^g\}$. Then
\begin{align*}
E[\tilde{Y_g}\mid Y_g]=\frac{4FCY_g^3-6FCNY_g^2+((2FC+1)N^2-3N+2)Y_g}{(N-1)(N-2)}.
\end{align*}
\label{lem:expectation}
\end{lemma}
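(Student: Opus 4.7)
My plan is to use linearity of expectation to write $E[\tilde Y_g \mid Y_g] = \sum_{i=1}^{N} \Pr[U_{i,j}^g = 1 \mid P^g]$ (for the fixed bit position $j$) and then compute each summand by conditioning on $X_{i,j}^g \in \{0,1\}$. The key structural observation is that the uniform-without-replacement choice of $(r_1,r_2,r_3)$ from $\{1,\dots,N\}\setminus\{i\}$ is exchangeable in the remaining indices, so the conditional probability $p_a := \Pr[U_{i,j}^g = 1 \mid X_{i,j}^g = a, P^g]$ depends on $P^g$ only through $Y_g$ (and on the parameters $N, F, C$). Grouping the $Y_g$ individuals with bit~$1$ and the $N-Y_g$ individuals with bit~$0$ then gives
\[
E[\tilde Y_g \mid Y_g] = Y_g\, p_1 + (N - Y_g)\, p_0.
\]

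To evaluate $p_a$, I decompose the offspring generation. By binomial crossover, $p_a = (1 - C)a + C \cdot \Pr[V_{i,j}^g = 1 \mid X_{i,j}^g = a, Y_g]$. For the mutant, let $B := X_{r_1,j}^g$ and let $\delta$ be the indicator of the event $X_{r_2,j}^g \neq X_{r_3,j}^g$. By definition $V_{i,j}^g$ equals $B$ unless $\delta = 1$ and $\mrand_j < F$, in which case it equals $1 - B$; since $\mrand_j$ is independent of everything else, this rewrites as $\Pr[V_{i,j}^g = 1 \mid X_{i,j}^g = a, Y_g] = E[B] + F \cdot E[\delta(1 - 2B)]$. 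The joint law of $(B,\delta)$ given $X_{i,j}^g = a$ and $Y_g$ is purely hypergeometric: $\Pr[B = 1] = (Y_g - a)/(N-1)$, and given $B = b$ the probability that the pair $(X_{r_2,j}^g, X_{r_3,j}^g)$ drawn from the other $N-2$ indices disagrees is $\frac{2(Y_g - a - b)(N - 2 - Y_g + a + b)}{(N-2)(N-3)}$.

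Substituting these expressions into $Y_g p_1 + (N - Y_g) p_0$ and separating the $F$-independent from the $F$-dependent part, the former collapses to $Y_g$ (as it must, since crossover alone preserves the bit frequency in expectation), while the latter initially carries the denominator $(N-1)(N-2)(N-3)$. The main obstacle and key step is the algebraic identity
\[
(Y_g - 1)(N + 1 - 2Y_g) + (N - 1 - Y_g)(N - 1 - 2Y_g) = (N - 3)(N - 2Y_g),
\]
which causes the factor $(N-3)$ to cancel and produces the compact intermediate form
\[
E[\tilde Y_g \mid Y_g] \;=\; Y_g + \frac{2FC\, Y_g(N-Y_g)(N-2Y_g)}{(N-1)(N-2)}.
\]
Expanding the cubic $Y_g(N-Y_g)(N-2Y_g) = 2Y_g^3 - 3NY_g^2 + N^2 Y_g$ and combining with $(N-1)(N-2)Y_g = (N^2 - 3N + 2) Y_g$ over the common denominator then recovers the claimed numerator $4FCY_g^3 - 6FCNY_g^2 + ((2FC+1)N^2 - 3N + 2)Y_g$ exactly.
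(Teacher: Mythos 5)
Your proposal is correct: the hypergeometric probabilities, the identity \((Y_g-1)(N+1-2Y_g)+(N-1-Y_g)(N-1-2Y_g)=(N-3)(N-2Y_g)\), and the final expansion all check out, and the intermediate form \(Y_g+\tfrac{2FC\,Y_g(N-Y_g)(N-2Y_g)}{(N-1)(N-2)}\) is indeed equal to the claimed expression. The global frame is the same as the paper's: both write \(E[\tilde Y_g\mid Y_g]=Y_g\,p_1+(N-Y_g)\,p_0\) and compute \(p_a\) from the sampling-without-replacement law of \((X_{r_1,j},X_{r_2,j},X_{r_3,j})\). Where you genuinely diverge is in how \(p_a\) is evaluated. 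The paper enumerates the disjoint events leading to \(U_{i,1}^g=1\) (three cases for \(X_{i,1}^g=0\), six for \(X_{i,1}^g=1\)) and then grinds through the resulting quartic-over-cubic expression. You instead encode the mutation as \(\Pr[V_{i,j}^g=1]=E[B]+F\,E[\delta(1-2B)]\), peel off the crossover as \(p_a=(1-C)a+C\Pr[V_{i,j}^g=1]\), and split the sum into an \(F\)-independent part (which collapses to \(Y_g\), reflecting that base-vector copying plus crossover is frequency-preserving in expectation) and an \(F\)-dependent part that factors cleanly. This buys two things the paper's route does not make explicit: the cancellation of \((N-3)\) happens through a single transparent identity rather than a long polynomial simplification, and the resulting form \(Y_g+\tfrac{2FC\,Y_g(N-Y_g)(N-2Y_g)}{(N-1)(N-2)}\) exhibits the drift toward \(N/2\) directly (the sign of the correction term is that of \(N-2Y_g\)), which is exactly the qualitative phenomenon the paper later extracts from Figure~\ref{fig:hx} and uses in Theorem~\ref{thm:stable}. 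The paper's brute-force enumeration is more mechanical but arguably easier to audit line by line; your version is shorter and more structurally informative.
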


\begin{proof}
%

Without loss of generality, let the certain bit be the first bit. For a given parent $X_i^g$, we recall that $U_i^g$ is generated via a bit-wise recombination of $X_i^g$ and $V_i^g$. We determine the distribution of $U_{i,1}^g$ in the two cases that $X_{i,1}^g=1$ and $X_{i,1}^g=0$.

When $X_{i,1}^g=0$, in order to have $U_{i,1}^g=1$, $U_{i,1}^g$ must stem from $V_{i,1}^g$ and $V_{i,1}^g$ must be 1. This happens in exactly the following three cases.
\begin{itemize}
\item $X_{r_1,1}^g=1, X_{r_2,1}^g = X_{r_3,1}^g, \crand_1 \le C$.
\item $X_{r_1,1}^g=1, X_{r_2,1}^g \ne X_{r_3,1}^g, \mrand_1\ge F, \crand_1 \le C$.
\item $X_{r_1,1}^g=0, X_{r_2,1}^g \ne X_{r_3,1}^g, \mrand_1< F, \crand_1 \le C$.
\end{itemize}
Hence, recalling that $Y_g$ represents the number of ones in the first bit among all individuals of the generation $g$, we obtain
\begin{equation}
\begin{split}
\Pr[U_{i,1}^g{}{}&=1 \mid X_{i,1}^g=0]\\
={}&\frac{Y_g ((Y_g-1)(Y_g-2)+(N-Y_g-1)(N-Y_g-2) )}{(N-1)(N-2)(N-3)}C\\
{}&+\frac{Y_g(N-Y_g-1)(Y_g-1)}{(N-1)(N-2)(N-3)}2(1-F)C\\
{}&+\frac{(N-Y_g-1)Y_g(N-Y_g-2)}{(N-1)(N-2)(N-3)}2FC.
\end{split}
\label{eq:on0}
\end{equation}

Similarly, for $X_{i,1}^g=1$, the possible cases are the following.
\begin{itemize}
\item $X_{r_1,1}^g=1, X_{r_2,1}^g=X_{r_3,1}^g$.
\item $X_{r_1,1}^g=1, X_{r_2,1}^g\ne X_{r_3,1}^g, \mrand_1 \ge F$.
\item $X_{r_1,1}^g=1, X_{r_2,1}^g\ne X_{r_3,1}^g, \mrand_1 < F, \crand_1 > C$.
\item $X_{r_1,1}^g=0, X_{r_2,1}^g=X_{r_3,1}^g, \crand_1 > C$.
\item $X_{r_1,1}^g=0, X_{r_2,1}^g\ne X_{r_3,1}^g, \mrand_1 < F$.
\item $X_{r_1,1}^g=0, X_{r_2,1}^g\ne X_{r_3,1}^g, \mrand_1 \ge F, \crand_1 > C$.
\end{itemize}
Thus we have
\begin{equation}
\begin{split}
\Pr[U_{i,1}^g{}{}&=1\mid X_{i,1}^g=1]\\
={}&\frac{(Y_g-1) ((N-Y_g)(N-Y_g-1)+(Y_g-2)(Y_g-3) )}{(N-1)(N-2)(N-3)}\\
{}&+\frac{(Y_g-1)(Y_g-2)(N-Y_g)}{(N-1)(N-2)(N-3)}2(1-F+F(1-C))\\
{}&+\frac{(N-Y_g) ((N-Y_g-1)(N-Y_g-2)+(Y_g-1)(Y_g-2) )}{(N-1)(N-2)(N-3)}(1-C)\\
{}&+\frac{(N-Y_g)(N-Y_g-1)(Y_g-1)}{(N-1)(N-2)(N-3)}2(F+(1-F)(1-C)).
\label{eq:on1}
\end{split}
\end{equation}

Based on the conditional probabilities (\ref{eq:on0}) and (\ref{eq:on1}), since $\tilde{Y_g}=\sum_{i=1}^N U_{i,1}^g$, we obtain
\begin{equation}
\begin{split}
E[{}\tilde{Y_g}{}& \mid Y_g]\\
={}&(N-Y_g)\Pr[U_{i,1}^g=1 \mid X_{i,1}^g=0]+Y_g\Pr[U_{i,1}^g=1 \mid X_{i,1}^g=1]\\
={}&\frac{(N-Y_g)(N-Y_g-1)Y_g(N-Y_g-2)}{(N-1)(N-2)(N-3)}(2FC+C+1-C)\\
{}&+\frac{(N-Y_g)(N-Y_g-1)Y_g(Y_g-1)}{(N-1)(N-2)(N-3)}(2(1-F)C+1+2(1-C+FC))\\
{}&+\frac{(N-Y_g)Y_g(Y_g-1)(Y_g-2)}{(N-1)(N-2)(N-3)}(C+2(1-FC)+1-C)\\
{}&+\frac{Y_g(Y_g-1)(Y_g-2)(Y_g-3)}{(N-1)(N-2)(N-3)}\\
={}&\frac{(N-Y_g)(N-Y_g-1)Y_g(N-Y_g-2)}{(N-1)(N-2)(N-3)}(2FC+1)\\
{}&+3\frac{(N-Y_g)(N-Y_g-1)Y_g(Y_g-1)}{(N-1)(N-2)(N-3)}+\frac{Y_g(Y_g-1)(Y_g-2)(Y_g-3)}{(N-1)(N-2)(N-3)}\\
{}&+\frac{(N-Y_g)Y_g(Y_g-1)(Y_g-2)}{(N-1)(N-2)(N-3)}(3-2FC)\\
={}&\frac{4FCY_g^3-6FCNY_g^2+((2FC+1)N^2-3N+2)Y_g}{(N-1)(N-2)}.
\label{eq:drift}
\end{split}
\end{equation}
\end{proof}

To gain a better understanding of the quantity $E[\tilde{Y_g} \mid Y_g]$ just computed, let us define (for implicitly given $F$ and $C$) the function $H_N : [0,N]\to [0,N]$ by
\begin{equation*}
H_N(z) = \frac{4FCz^3-6FCNz^2+((2FC+1)N^2-3N+2)z}{(N-1)(N-2)},
\end{equation*}
so that $E[\tilde{Y_g} \mid Y_g] = H_N(Y_g)$. Going from absolute numbers to relative numbers, we also define $h(x)=\frac{H_N(Nx)}{N}$ for all $x \in [0,1]$. Figure~\ref{fig:hx} visualizes this function for two sets of parameter values.

\begin{figure}
\centering
\begin{tikzpicture}
\begin{axis}[
    axis lines = left,
    xlabel = $x$,
    ylabel = {$h(x)$},
    ymin = 0,
    legend style={
      cells={anchor=east},
      legend pos=outer north east,},
    ]

\addplot [
    domain=0:1, 
    samples=100, 
    color=black,
]
{(4*0.9*0.9*50^3*x^3-6*0.9*0.9*50*50^2*x^2+((2*0.9*0.9+1)*50^2-3*50+2)*50*x)/(50*49*48)};
\addlegendentry[right]{$N=50,F=C=0.9$}
 
\addplot [densely dotted,
    domain=0:1, 
    samples=100, 
    color=black,
    ]
{(4*0.9*0.9*10^3*x^3-6*0.9*0.9*10*10^2*x^2+((2*0.9*0.9+1)*10^2-3*10+2)*10*x)/(10*9*8)};
\addlegendentry[right]{$N=10,F=C=0.9$} 

\addplot [dotted,
    domain=0:1, 
    samples=100, 
    color=black,
    ]
{x};
\end{axis}
\end{tikzpicture}
\caption{A visualization of $h(x) := H(Nx) / N$, that is, the expected ratio of ones in a position of the trial population when the parent population has a ratio of $x$ ones in this position. For $N$ not too small, this function is monotonically increasing. However, for $x \in (0,\frac 12)$, it is strictly larger than $x$ and for $x \in (\frac 12,1)$ it is strictly smaller than $x$ as visible from the comparison with the dotted straight line depicting the identity function. In the absence of a strong fitness signal, this leads to a drift of the ratio to $\frac 12$, which is the reason for the stability results we prove in this section.}\label{fig:hx}
\end{figure}
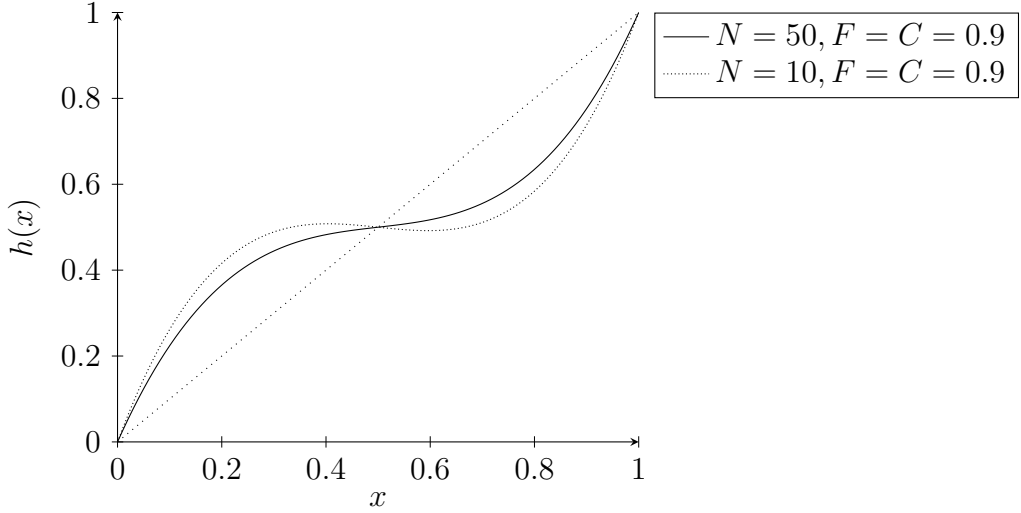

In the following Lemma~\ref{lem:mono}, we show that $E[\tilde{Y_g}\mid Y_g]$ is a monotonically increasing function with respect to $Y_g$ when $N$ is at least some constant (depending on $F$ and $C$).

\begin{lemma}
Let $F, C \in [0,1]$, $FC < 1$, and $N\in [\frac{3}{1-FC},\infty ) \cap \N$. 
Then $H_N(z)$ is monotonically increasing.
\label{lem:mono}
\end{lemma}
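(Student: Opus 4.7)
The plan is to show that $H_N$ is monotonically increasing by verifying $H_N'(z) \ge 0$ for all real $z$, not just on $[0,N]$, since the denominator $(N-1)(N-2)$ is positive for all $N\ge 3$ and one only needs to analyze the sign of the numerator of the derivative.

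Differentiating the explicit formula for $H_N(z)$ from Lemma~\ref{lem:expectation}, I would obtain
\begin{equation*}
H_N'(z) \;=\; \frac{12FC z^2 - 12FC N z + \bigl((2FC+1)N^2 - 3N + 2\bigr)}{(N-1)(N-2)}.
\end{equation*}
If $FC = 0$, the numerator reduces to $N^2 - 3N + 2 = (N-1)(N-2) \ge 0$ for $N\ge 2$, and we are done. So I assume $FC > 0$, in which case the numerator is a quadratic in $z$ opening upward; hence its global minimum is attained at $z^* = N/2$. Plugging $z^*$ in gives
\begin{equation*}
12FC\cdot\frac{N^2}{4} - 12FC\cdot\frac{N^2}{2} + (2FC+1)N^2 - 3N + 2 \;=\; (1-FC)N^2 - 3N + 2.
\end{equation*}

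It therefore remains to check $(1-FC)N^2 - 3N + 2 \ge 0$ for $N \ge 3/(1-FC)$. Viewing this as a quadratic in $N$ with leading coefficient $1-FC > 0$, its larger root is
\begin{equation*}
N_+ \;=\; \frac{3 + \sqrt{1 + 8FC}}{2(1-FC)}.
\end{equation*}
Since $FC\le 1$, we have $\sqrt{1+8FC}\le 3$, so $N_+ \le 6/(2(1-FC)) = 3/(1-FC)$. Thus the hypothesis $N \ge 3/(1-FC)$ places $N$ beyond the larger root, which guarantees $(1-FC)N^2 - 3N + 2 \ge 0$. Combining the pieces, $H_N'(z) \ge 0$ on all of $\mathbb{R}$, which proves the claim.

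No real obstacle is expected; the only delicate point is converting the somewhat loose stated assumption $N\ge 3/(1-FC)$ into the sharper quadratic condition, but the bound $\sqrt{1+8FC}\le 3$ does this cleanly. A brief sanity check against the parameter ranges (for instance, $F=C=0.9$ gives $1-FC = 0.19$ and threshold $N\approx 15.8$, consistent with the $N=10$ curve in Figure~\ref{fig:hx} being slightly non-monotone near the boundaries and the $N=50$ curve being clearly monotone) would confirm the analysis.
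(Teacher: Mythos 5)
Your proof is correct and follows essentially the same route as the paper: differentiate, observe the derivative's numerator is an upward-opening quadratic in $z$ minimized at $z=N/2$, and verify that $(1-FC)N^2-3N+2\ge 0$ under the hypothesis $N\ge 3/(1-FC)$. The only cosmetic differences are that the paper verifies the last inequality by the direct estimate $(1-FC)N^2\ge 3N$ rather than via the quadratic root formula, and that your separate treatment of $FC=0$ is in fact slightly more careful than the paper, whose proof tacitly assumes $FC>0$.
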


\begin{proof}
Let $g(z)=4az^3-6aNz^2+((2a+1)N^2-3N+2)z$, where $a=FC$. It is easy to see that $g(z)$ and $H_N(z)$ have the same monotonicity. We have
\begin{equation*}
\begin{split}
g'(z)&=12az^2-12aNz+(2a+1)N^2-3N+2,\\
g''(z)&=24az-12aN.
\end{split}
\end{equation*}
Since $a>0$, we have $g''(z)\le0$ for $z\le\frac{N}{2}$ and $g''(z)\ge 0$ for $z\ge \frac{N}{2}$. Hence $g'(z)$ has a unique minimum at $z=\frac{N}{2}$. Therefore,
\begin{equation*}
g'(z)\ge g'(\tfrac{N}{2})=(1-a)N^2-3N+2.
\end{equation*}
Since $N\ge \frac{3}{1-a}$, we have 
\begin{align*}
(1-a)N^2-3N+2 \ge 3N-3N+2=2.
\end{align*}
Hence $g'(z)$ is positive and thus $g(z)$ and $H_N(z)$ are monotonically increasing.
\end{proof}

We are now in the position to prove that BDE is stable when optimizing the Needle function, that is, that the frequencies of the ones in all bit positions stay close to $0.5$ for a long time. The precise statement in Theorem~\ref{thm:stable} is formulated for a single bit position, but it is clear that a simple union bound implies that also all bit positions stay close to $0.5$ for a time exponential in $N$ (if $D$ is sub-exponential in $N$).

\begin{theorem}\label{thm:stable}
Consider using BDE with population size $N\ge\max\{\frac{3}{1-FC},\frac{15625\ln 2}{288(FC)^2}\}$ to optimize the $D$-dimensional Needle function. Let $Y_g$ denote the number of ones in a certain bit position among all individuals of generation $g$. There is a constant $c>0$, depending on $F$ and $C$ only, such that
\begin{equation*}
\Pr [\forall g\in\{0,\dots,T\}: Y_g\in[0.4N,0.6N] ]\ge 1-2(T+1)\exp(-cN)
\end{equation*}
for all $T\in \N$.
\end{theorem}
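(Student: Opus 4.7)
On Needle, the fitness is $0$ everywhere except at $(1,\dots,1)$, so as long as no optimum has been sampled the ``not worse'' branch of the selection rule always fires and $X_i^{g+1} = U_i^g$ for every $i$. I will therefore analyze the pure mixing recurrence $Y_{g+1} = \tilde Y_g$, absorbing into the final failure probability the (negligible) event that some trial in the $T+1$ iterations hits $(1,\dots,1)$ --- a crude bound of the form $(T+1)N \cdot \alpha^D$ with $\alpha < 1$ suffices under the same induction hypothesis that frequencies remain near $\tfrac12$.

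\textbf{Drift and concentration per step.} Lemma~\ref{lem:expectation} gives $E[\tilde Y_g \mid Y_g] = H_N(Y_g)$, and a direct algebraic simplification produces the factorization
\[
H_N(z) - z \;=\; \frac{2FC \cdot z\,(2z - N)(z - N)}{(N-1)(N-2)},
\]
which vanishes only at $z \in \{0, N/2, N\}$ and points toward $N/2$ elsewhere. Its magnitude at the endpoints $z \in \{0.4N, 0.6N\}$ equals $\frac{0.096\, FC\, N^3}{(N-1)(N-2)} \ge \gamma N$ with $\gamma = \Theta(FC)$, so the monotonicity of $H_N$ (Lemma~\ref{lem:mono}, valid under $N \ge 3/(1-FC)$) forces $H_N([0.4N, 0.6N]) \subseteq [0.4N + \gamma N,\, 0.6N - \gamma N]$. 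Conditional on $P^g$ the single-column trial values $U_{1,1}^g, \dots, U_{N,1}^g$ are \emph{independent} $\{0,1\}$-random variables, because for each $i$ the indices $(r_1, r_2, r_3)$ and the uniforms $\mrand_1, \crand_1$ are drawn with fresh independent randomness. Hence the additive Chernoff bound yields
\[
\Pr\bigl[\, |\tilde Y_g - H_N(Y_g)| \ge \eta N \,\bigm|\, P^g \,\bigr] \;\le\; 2\exp(-2\eta^2 N),
\]
and choosing $\eta$ slightly below $\gamma$ (calibrated so that the exponent matches the explicit constant $15625\ln 2 / (288(FC)^2)$ in the hypothesis on $N$) gives $\Pr[Y_{g+1} \notin [0.4N, 0.6N] \mid Y_g \in [0.4N, 0.6N]] \le 2\exp(-cN)$ for some $c = c(F,C) > 0$.

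\textbf{Base case, union bound, and main obstacle.} Since $P^0$ is uniform, $Y_0 \sim \Bin(N, \tfrac12)$ lies in $[0.45N, 0.55N] \subset [0.4N, 0.6N]$ with probability $1 - 2\exp(-\Omega(N))$ by Chernoff; a union bound over the $T+1$ one-step failure events (plus the negligible optimum-sampling event) then delivers the target bound $1 - 2(T+1)\exp(-cN)$ of the theorem. The only conceptually subtle point is the conditional-independence step invoked for Chernoff: the dependencies that the paper flags in Section~3 (and which motivate introducing iBDE) arise \emph{across different bit positions of the same trial}, not \emph{across different individuals at one fixed bit position}, so restricting to the single column $j = 1$ recovers genuine independence and makes the Chernoff step rigorous. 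Beyond that the work is bookkeeping: match the drift $\gamma \asymp FC$ against the Chernoff deviation $\eta$ and extract the explicit threshold on $N$ stated in the theorem.
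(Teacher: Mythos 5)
Your proposal is correct and follows essentially the same route as the paper's proof: on Needle the selection step is vacuous (so $Y_{g+1}=\tilde Y_g$), the drift of $Y_g$ toward $N/2$ at the endpoints $0.4N$ and $0.6N$ has magnitude $\tfrac{12}{125}FCN$ by Lemmas~\ref{lem:expectation} and~\ref{lem:mono}, and a per-step concentration bound combined with a Chernoff base case and induction yields the claim. The only substantive difference is the concentration step, where you use the (correct) observation that, conditional on $P^g$, the single-column trial bits $U_{1,1}^g,\dots,U_{N,1}^g$ are independent because each is a function of that individual's fresh randomness and the fixed population, and then apply Hoeffding, whereas the paper reaches the identical bound $2\exp(-2\eta^2N)$ via Azuma/bounded differences on the per-individual randomness vectors $Z_i^g$; your factorization $H_N(z)-z=\frac{2FC\,z(2z-N)(z-N)}{(N-1)(N-2)}$ is also a tidier way to read off the drift than the paper's two separate endpoint computations.
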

\begin{proof}
We first consider the behavior of $Y_{g+1}$ when $Y_g \in [0.4N, 0.6N]$. Without loss of generality, let the certain bit be the first bit. Then $Y_{g+1}=\sum_{i=1}^NX_{i,1}^{g+1}$. 

By the definition of the Needle function, for a given parent $X_i^g\ne (1,\dots,1)$, we have $f(U_i^g)\ge 0 =f(X_i^g)$ regardless of the value of the trial vector $U_i^g$. Hence, due to the parent-offspring selection, we have $X_i^{g+1}:=U_i^g$ and thus $E[Y_{g+1} \mid Y_g]=E[\tilde{Y_g} \mid Y_g]=R_N(Y_g)$, where $\tilde{Y_g}=\sum_{i=1}^N U_{i,1}^g$ defined in Lemma \ref{lem:expectation}. 

From Lemma \ref{lem:mono}, we know that $4FCY_g^3-6FCNY_g^2+((2FC+1)N^2-3N+2)Y_g$ and thus $E[Y_{g+1} \mid Y_g]$ are monotonically increasing with respect to $Y_g$. For $Y_g=0.4N$, we have
\begin{equation}
\begin{split}
4FC&Y_g^3-6FCNY_g^2+((2FC+1)N^2-3N+2)Y_g\\
&=(\tfrac{2}{5}+\tfrac{12}{125}FC)N^3-\tfrac{6}{5}N^2+\tfrac{4}{5}N\\
&=\tfrac{2}{5}N(N-1)(N-2)+\tfrac{12}{125}FCN^3\\
&\ge(\tfrac{2}{5}+\tfrac{12}{125}FC)N(N-1)(N-2),
\end{split}
\label{eq:2_5N}
\end{equation}
and for $Y_g=0.6N$, we have
\begin{equation}
\begin{split}
4FC&Y_g^3-6FCNY_g^2+((2FC+1)N^2-3N+2)Y_g\\
&=(\tfrac{3}{5}-\tfrac{12}{125}FC)N^3-\tfrac{9}{5}N^2+\tfrac{6}{5}N\\
&=\tfrac{3}{5}N(N-1)(N-2)-\tfrac{12}{125}FCN^3\\
&\le(\tfrac{3}{5}-\tfrac{12}{125}FC)N(N-1)(N-2).
\end{split}
\label{eq:3_5N}
\end{equation}
From (\ref{eq:2_5N}) and (\ref{eq:3_5N}), we conclude
\begin{equation*}
(\tfrac{2}{5}+\tfrac{12}{125}FC)N \le E [Y_{g+1} \mid Y_g\in [0.4N,0.6N] ]\le (\tfrac{3}{5}-\tfrac{12}{125}FC)N.
\end{equation*}

For $i=1,2,\dots,N$, let $Z_i^g$ be the random vector that contains all random variables generated in iteration $i$ of the inner loop, that is, 
\begin{equation*}
Z_i^g=(\mrand_{1},\dots,\mrand_{D},\crand_{1},\dots,\crand_{D},r_{1},r_{2},r_{3}),
\end{equation*}
where each element is the one used in iteration $i$ (for reasons of readability, we suppressed an extra index $i$ in the definition of the algorithm).
It is easy to see that $Z_1^g,Z_2^g,\dots,Z_N^g$ are independent. Given the current population $P^g$, $Y_{g+1}$ can be considered as a function of $Z_g=(Z_1^g,\dots,Z_N^g)$, denoted by $Y_{g+1}=s(Z_g)$. Obviously,  $X_i^{g+1}$ depends only on $Z_i^g$. Since $X_i^{g+1}$ contributes at most one to $Y_{g+1}$, we see that for $Z_g, \tilde{Z_g}$ that differ only in the $Z_i^g$ part, we have $|s(Z_g)-s(\tilde{Z_g})|\le 1$. Applying Azuma's inequality (Theorem 1.15 in \cite{Doerr11bookchapter}), we compute
\begin{align*}
\Pr& [Y_{g+1}\ge 0.6N \mid Y_g\in [0.4N,0.6N] ]\\
& \le \Pr\big[Y_{g+1}\ge E [Y_{g+1} \mid Y_g\in [0.4N, 0.6N] ]+\tfrac{12}{125}FCN \mid Y_g\in [0.4N,0.6N]\big]\\
& \le \exp(-cN)
\end{align*}
and
\begin{align*}
\Pr& [Y_{g+1}\le 0.4N \mid Y_g\in [0.4N,0.6N] ]\\
& \le \Pr\big[Y_{g+1}\le E [Y_{g+1} \mid Y_g\in [0.4N, 0.6N] ]-\tfrac{12}{125}FCN \mid Y_g\in [0.4N,0.6N]\big]\\
& \le \exp(-cN),
\end{align*}
where $c=\frac{288}{15625}(FC)^2$. This shows
\begin{equation}
\Pr [Y_{g+1}\in [0.4N,0.6N] \mid Y_g\in [0.4N,0.6N] ] \ge 1-2\exp(-cN).
\label{eq:ind}
\end{equation}

Since $E[Y_0]=0.5N$, by a simple Chernoff inequality (Theorem 1.11 in \cite{Doerr11bookchapter}), we have
\begin{align*}
&\Pr[Y_0\ge 0.6N]\le \exp(-c_0N),\\
&\Pr[Y_0\le 0.4N]\le \exp(-c_0N)
\end{align*}
for $c_0=\frac{1}{50}$ and consequently
\begin{equation}
\Pr [Y_0\in [0.4N, 0.6N] ]\ge 1-2\exp(-c_0N).
\label{eq:base}
\end{equation}

With (\ref{eq:ind}) and (\ref{eq:base}), a simple induction gives
\begin{align*}
\Pr[\forall{}{}& g\in\{0,\dots,T\}: Y_g\in[0.4N, 0.6N]]\\
={}&\Pr \big[Y_T\in[0.4N, 0.6N] \mid \forall g\in\{0,\dots,T-1\}: Y_g\in[0.4N, 0.6N] \big]\\
{}&\cdot \Pr [\forall g\in\{0,\dots,T-1\}: Y_g\in[0.4N, 0.6N] ]\\
\ge{}&(1-2\exp(-cN))(1-2\exp(-c_0N))(1-2\exp(-cN))^{T-1}\\
={}&(1-2\exp(-c_0N))(1-2\exp(-cN))^{T}\ge(1-2\exp(-cN))^{T+1},
\end{align*}
where we use the fact that $c_0>c$.
Since $N\ge \frac{15625\ln 2}{288(FC)^2}$, we have $-2\exp(-cN) \ge -1$. Using Bernoulli's inequality, we obtain
\begin{align*}
\Pr [\forall &g\in\{0,\dots,T\}: Y_g\in [0.4N,0.6N] ]
\ge 1-2(T+1)\exp(-cN).
\end{align*}
\end{proof}

\subsection{The Behavior of an Arbitrary Neutral Bit}
\label{sec:Anyneutral}
In the previous subsection, we proved rigorously that BDE is very stable when optimizing the Needle function. We are not able to show a similar stability result for neutral bits of an arbitrary function. The reasons are the stochastic dependencies both from the mutation operator and the selection. Note that for the Needle function, we did not have these difficulties because the trial population always survives (until the optimum is found). 

To also have a result for the stability with respect to arbitrary neutral bits, we now resort to our mean-field approach, that is, we argue with experimental data for the fact that neutral bits behave similarly in iBDE and BDE and then prove that frequencies of neutral bits in a run of iBDE stay in the middle region for an exponential (in $N$) time. 
%

\subsubsection{Experimental Comparison of the Behavior of Neutral Bits in BDE and iBDE}

To experimentally argue for the fact that BDE and iBDE have a similar behavior in neutral bits, we regard the classic LeadingOnes benchmark function $f : \{0,1\}^n \to \Z$ defined first in~\cite{Rudolph97} by 
\begin{equation}
\begin{split}
f(X)=
\begin{cases}
0,&X=(0,...,0)\\
\max\{j \in \{1, \dots, D\} \mid \prod_{i=1}^{j} X_i=1\},&\mathrm{otherwise}
\end{cases}
\end{split}
\label{eq:LO1}
\end{equation}
for all $X = (X_1, \dots, X_D) \in \{0,1\}^D$.
The last bit position of the LeadingOnes function is a neutral bit until the optimum is found. However, selection plays an important role in the optimization of LeadingOnes, so it appears that this example is of a nature very different from the Needle function.

In our experiments we use the setting $D=1000, N=1000, F=0.2$, and $C=0.3$ (for both BDE and iBDE). For each algorithm, 100 independent runs are conducted. Among these 100 independent runs, the minimum, maximum, and $10\%,50\%,90\%$ quantiles of the frequencies of ones in the last bit (which is neutral longest) are plotted in Figure~\ref{fig:NeutralQuantiles}. Also depicted in this figure is the minimum frequency of ones among all bit positions and all runs. For one randomly picked run, Figure~\ref{fig:BDEandiBDEonNeutral} shows the frequency of ones in the last bit over time. 

These two visualizations indicate that the frequency of ones in bit positions that are still neutral oscillates in a small corridor around $0.5$ without that significant differences between the two algorithms are visible. Consequently, it appears reasonable that a behavior proven for neutral bits in a run of iBDE via mathematical means (such as Theorem~\ref{thm:iBDENeutral}) is valid for BDE as well. 

To have all experimental results on the LeadingOnes function in one subsection, we now present some more results which will be used in Section~\ref{sec:dominant}. Table~\ref{tbl:LOruntime} gives the minimum, average and maximum runtimes among the 100 independent runs. Figure~\ref{fig:BDEandiBDEfitonLOmBW} plots the average fitness over time. 

\begin{figure}[H]
  \centering
  \begin{minipage}[t]{1\textwidth}
  \centering
  \includegraphics[width=3.8in]{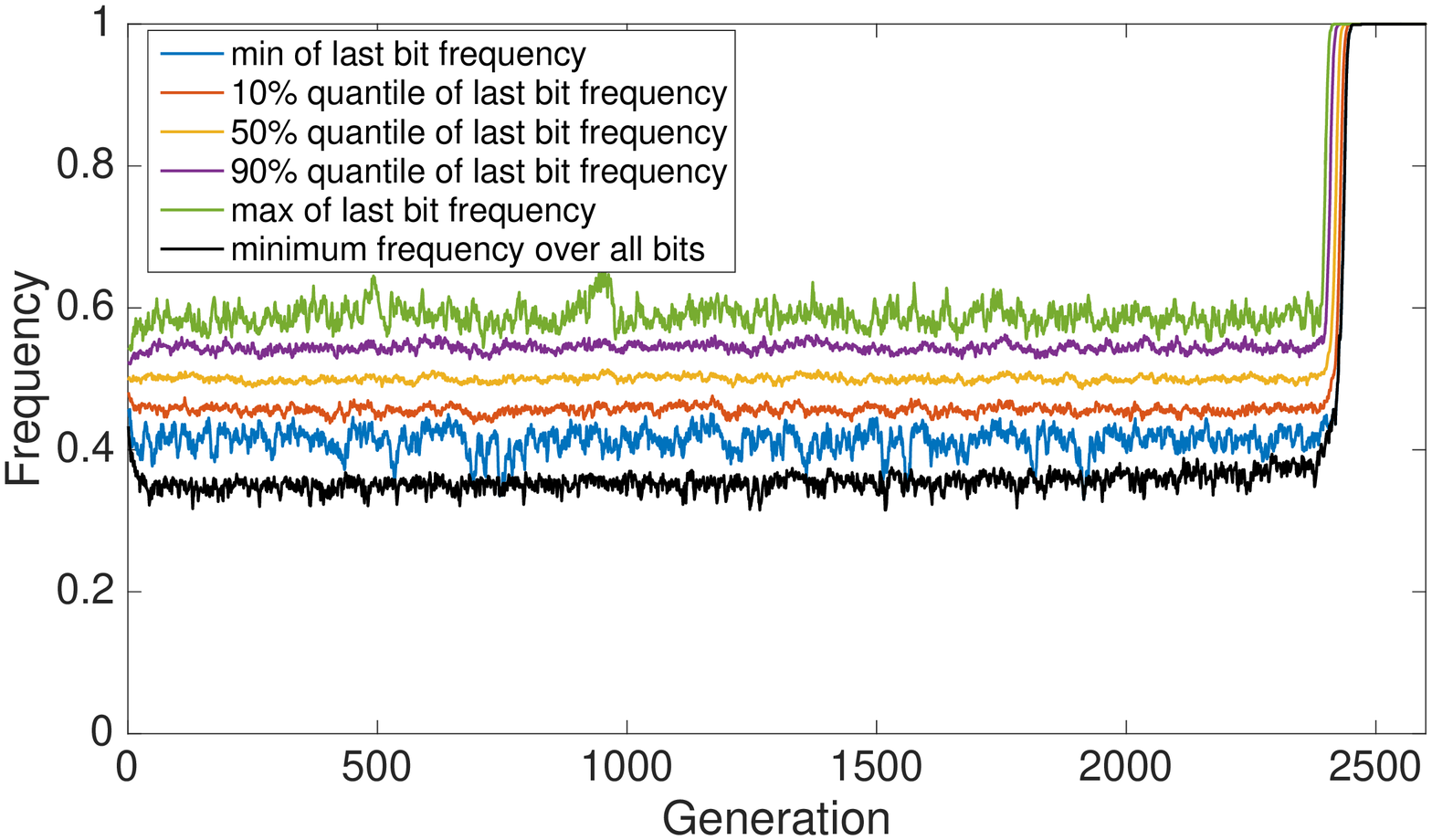}
  \end{minipage}
\vspace{1.pt}
  \begin{minipage}[t]{1\textwidth}
  \centering
  \includegraphics[width=3.8in]{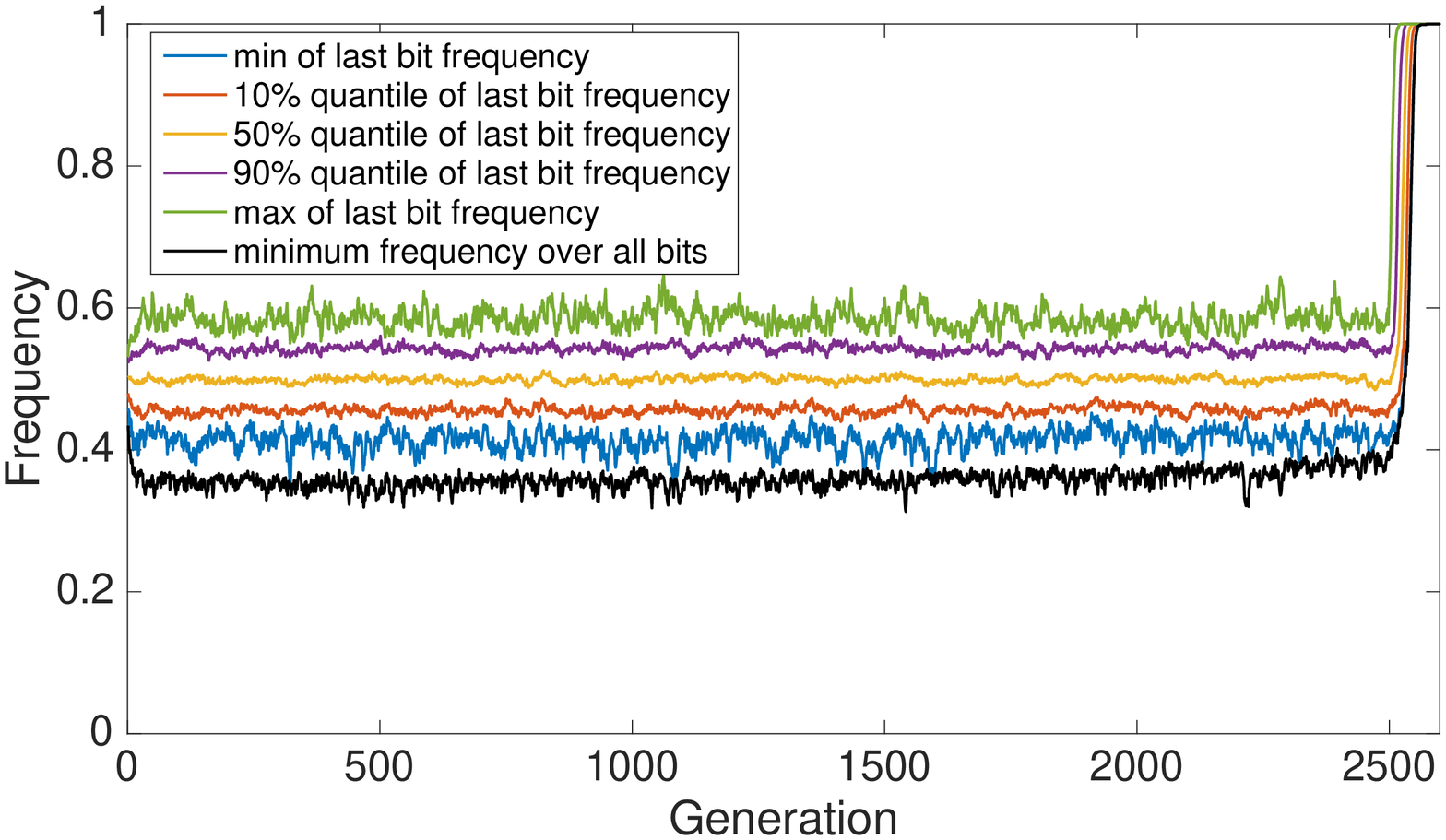}
  \end{minipage}
  \caption{The minimum, maximum, and $10\%,50\%,90\%$ quantiles of the frequency of ones in the last bit position among 100 runs of BDE (top) and iBDE (bottom) optimizing the LeadingOnes function ($D=1000, N=1000, F=0.2$, $C=0.3$). Also depicted are the minimum frequency of ones in all bit positions and all runs.}
    \label{fig:NeutralQuantiles}
\end{figure}

\begin{figure}[!ht]
\centering
\includegraphics[width=3.8in]{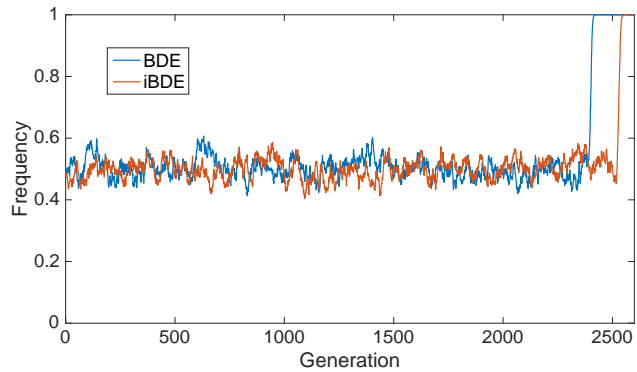}
\caption{The frequency of ones in the last bit position for exemplary runs of BDE and iBDE on the LeadingOnes function  ($D=1000, N=1000, F=0.2$, $C=0.3$).}
\label{fig:BDEandiBDEonNeutral}
\end{figure}

\begin{table}[H]
\centering
  \caption{The runtimes of BDE and iBDE optimizing the LeadingOnes function in 100 independent runs  ($D=1000, N=1000, F=0.2$, $C=0.3$).}
  \label{tbl:LOruntime}
    \begin{tabular}{cccc}
    \hline
    & minimum & average & maximum\\
   \hline
    BDE & 2359 & 2387 & 2404\\
    iBDE & 2467 & 2497 & 2515 \\
  \hline
 \end{tabular}
\end{table}%

\begin{figure}[!ht]
\centering
\includegraphics[width=3.8in]{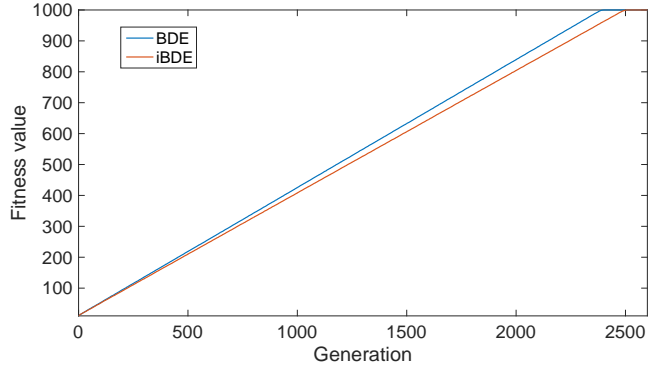}
\caption{Average fitness over time for BDE and iBDE optimizing the LeadingOnes function ($D=1000, N=1000, F=0.2$, $C=0.3$, 100 independent runs).}
\label{fig:BDEandiBDEfitonLOmBW}
\end{figure}



\subsubsection{Theoretical Analysis of the Behavior of a Neutral Bit for iBDE}

We now mathematically analyze the behavior of a neutral bit for iBDE. Naturally, this part has some similarity with the analysis of BDE on the Needle function. A crucial additional difficulty to overcome in the proof of the main result in Theorem~\ref{thm:iBDENeutral} is that, unlike for the Needle function, we cannot anymore assume that always the offspring wins the parent-offspring selection. We solve this problem by first fixing an inheritance pattern (which describes whether the parent or the offspring bit-value survives, note that this depends only on the non-neutral bits) and then analyzing the random process of the neutral bit conditional on this inheritance pattern. 

Before the main analysis, we extract some computations as lemmas to make the  core arguments more concise. Lemma~\ref{lem:mutprob} computes the probability for a particular mutant bit have the value 1. This result is true regardless of the neutrality of the bit.
\begin{lemma}
Consider an iteration of iBDE or BDE with population size $N\ge 4$. Let $i \in \{1,\dots, N\}$ and $j \in \{1,\dots, D\}$. Let $Y_g^-$ denote the number of ones in the $j$-th bit position among all individuals of generation $g$ except the $i$-th individual. Then the probability for generating value $1$ in the $j$-th position of the $i$-th mutant is 
\begin{align*}
\frac{4F(Y_g^-)^3-6F(N-1)(Y_g^-)^2+((2F+1)N^2-(5+4F)N+2F+6)Y_g^-}{(N-1)(N-2)(N-3)}.
\end{align*}
\label{lem:mutprob}
\end{lemma}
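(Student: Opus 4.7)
The plan is to compute $\Pr[V_{i,j}^g = 1]$ directly by enumerating the disjoint events under which the mutation rule outputs the value $1$. Inspecting the definition of $V_{i,j}^g$, these are: (i) $X_{r_1,j}^g = 1$ and $X_{r_2,j}^g = X_{r_3,j}^g$, where the ``otherwise'' branch copies $X_{r_1,j}^g = 1$; (ii) $X_{r_1,j}^g = 1$, $X_{r_2,j}^g \neq X_{r_3,j}^g$ and $\mrand_j \geq F$, again producing $1$ via the ``otherwise'' branch; and (iii) $X_{r_1,j}^g = 0$, $X_{r_2,j}^g \neq X_{r_3,j}^g$ and $\mrand_j < F$, producing $1 - X_{r_1,j}^g = 1$ via the flip branch. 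These three cases exhaust the event $\{V_{i,j}^g = 1\}$.

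Next I would evaluate each probability using that $r_1, r_2, r_3$ are drawn uniformly without replacement from the $N-1$ indices different from $i$, exactly $Y := Y_g^-$ of which correspond to bit-value $1$ at position $j$. The variable $\mrand_j$ is independent of the index choice and contributes an extra factor $1$, $1-F$, or $F$ to cases (i), (ii), (iii), respectively. Counting ordered index triples and dividing by the common denominator $(N-1)(N-2)(N-3)$ yields the three numerators $Y\bigl[(Y-1)(Y-2) + (N-1-Y)(N-2-Y)\bigr]$, $(1-F)\cdot 2Y(Y-1)(N-1-Y)$, and $F\cdot 2Y(N-1-Y)(N-2-Y)$. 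A useful observation here is that this single-bit marginal depends only on the count $Y_g^-$ and not on any cross-position structure; the distinction between BDE and iBDE lies only in how $(r_1,r_2,r_3)$ are correlated across different bit positions $j$, so the result applies to both algorithms verbatim.

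The remaining step is purely algebraic: sum the three contributions and collect powers of $Y$. I expect no conceptual difficulty; the main risk is a sign or arithmetic slip in the linear coefficient, which receives contributions $2$ from expanding $Y(Y-1)(Y-2)$, $(N-1)(N-2)$ from the $(N{-}1{-}Y)(N{-}2{-}Y)$ part of case (i), $-2(N-1)$ from the $F$-free piece of case (ii), $2F(N-1)$ from its $F$-piece (combined with the contribution of the $F$-piece from case (iii)), and $2F(N-1)(N-2)$ from case (iii). Tidying up produces the coefficients $4F$ of $Y^3$, $-6F(N-1)$ of $Y^2$, and $(2F+1)N^2 - (5+4F)N + (2F+6)$ of $Y$, matching the claimed expression. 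The hypothesis $N \geq 4$ is used only to ensure that the denominator $(N-1)(N-2)(N-3)$ is nonzero, i.e., that three mutually distinct indices can actually be drawn from $\{1,\dots,N\}\setminus\{i\}$.
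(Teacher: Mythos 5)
Your proposal is correct and follows essentially the same route as the paper: the same case distinction into the three disjoint events producing a mutant bit value of $1$, the same counting of ordered index triples over the $N-1$ eligible individuals with denominator $(N-1)(N-2)(N-3)$, and the same algebraic collection of the coefficients of $(Y_g^-)^3$, $(Y_g^-)^2$, and $Y_g^-$. Your added remark that the single-bit marginal is identical for BDE and iBDE because the two algorithms differ only in the cross-position correlation of $(r_1,r_2,r_3)$ is exactly the (implicit) reason the paper states the lemma for both algorithms.
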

\begin{proof}
By definition of the mutation operator, we have $V_{i,j}^g=1$ if and only if one of the following cases holds.
\begin{itemize}
\item $X_{r_1,j}^g=1, X_{r_2,j}^g=X_{r_3,j}^g$.
\item $X_{r_1,j}^g=1, X_{r_2,j}^g \ne X_{r_3,j}^g, \mrand_j \ge F$.
\item $X_{r_1,j}^g=0, X_{r_2,j}^g \ne X_{r_3,j}^g, \mrand_j < F$.
\end{itemize}
Hence, recalling that $Y_g^-$ denotes the number of ones in the $j$-th bit position among the population $P^g$ except $X_i^g$, we obtain
\begin{align*}
\Pr[V_{i,j}^g{}&=1 \mid P^g]\\
={}&\frac{Y_g^-((Y_g^--1)(Y_g^--2)+(N-1-Y_g^-)(N-2-Y_g^-))}{(N-1)(N-2)(N-3)}\\
{}&+\frac{Y_g^-(Y_g^--1)(N-1-Y_g^-)2(1-F)}{(N-1)(N-2)(N-3)}+\frac{(N-1-Y_g^-)Y_g^-(N-2-Y_g^-)2F}{(N-1)(N-2)(N-3)}\\
={}&\frac{4F(Y_g^-)^3-6F(N-1)(Y_g^-)^2+((2F+1)N^2-(5+4F)N+2F+6)Y_g^-}{(N-1)(N-2)(N-3)}.
\end{align*}
\end{proof}

To gain a better understanding of the probability computed above, let us define (for implicitly given $F$) the function $R_N: [0,N-1] \rightarrow [0,\infty)$ by
\begin{align*}
R_N(y)=\frac{4Fy^3-6F(N-1)y^2+((2F+1)N^2-(5+4F)N+2F+6)y}{(N-1)(N-2)(N-3)},
\end{align*}
so that the probability we just computed is $R_N(Y_g^-)$. Going from absolute numbers of ones to relative numbers, we also define $r(x)=R_N(x(N-1))$ for all $x \in [0,1]$. Figure~\ref{fig:rx} visualizes this function for two sets of parameter values.
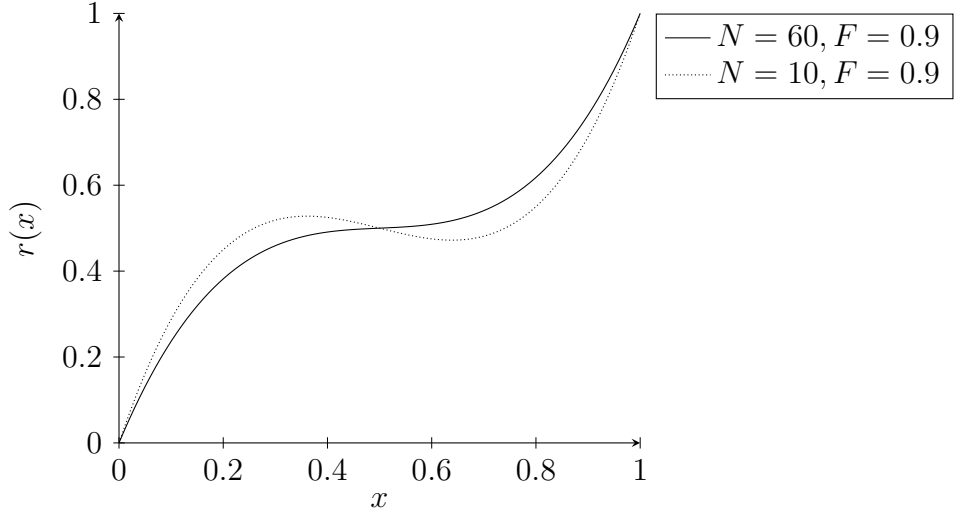
\begin{figure}
\centering
\begin{tikzpicture}
\begin{axis}[
    axis lines = left,
    xlabel = $x$,
    ylabel = {$r(x)$},
    ymin = 0,
    legend style={
      cells={anchor=east},
      legend pos=outer north east,},
    ]

\addplot [
    domain=0:1, 
    samples=100, 
    color=black,
]
{(4*0.9*59^3*x^3-6*0.9*59*59^2*x^2+((2*0.9+1)*60^2-(5+4*0.9)*60+2*0.9+6)*59*x)/(59*58*57)};
\addlegendentry[right]{$N=60,F=0.9$}
 
\addplot [densely dotted,
    domain=0:1, 
    samples=100, 
    color=black,
    ]
{(4*0.9*9^3*x^3-6*0.9*9*9^2*x^2+((2*0.9+1)*10^2-(5+4*0.9)*10+2*0.9+6)*9*x)/(9*8*7)};
\addlegendentry[right]{$N=10,F=0.9$} 
\end{axis}
\end{tikzpicture}
\caption{A visualization of $r(x) := R_N(x(N-1))$, that is, the probability for generating the value $1$ in the $j$-th position of the mutant. For $N$ not too small, this function is monotonically increasing.}\label{fig:rx}
\end{figure}

In the following Lemma~\ref{lem:mutprobin}, we collect a few useful properties of $R_N$, in particular, that $R_N$ when $N$ is at least some constant (depending on $F)$.
\begin{lemma}
Let $F \in (0,1)$ and $N \in [\tfrac{5-2F}{1-F},+\infty) \cap \N$. Then the following statements hold.
\begin{itemize}
\item $R_N(y)$ is monotonically increasing. 
\item $R_N(\tfrac{12}{25}(N-1)) > \tfrac{12}{25}$. When $N\ge \tfrac{625}{24F}$, $R_N(\tfrac{13}{25}N) <  \tfrac{13}{25}$.
\item When $N>\tfrac{3125-1224F}{625-612F}$, we have $R_N(\tfrac{8}{25}(N-1)) < \tfrac{12}{25}$ and $R_N(\tfrac{17}{25}N) > \frac{13}{25}$.
\end{itemize}
\label{lem:mutprobin}
\end{lemma}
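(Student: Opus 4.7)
The plan is to prove each of the three bullets by direct manipulation of the numerator of $R_N$. Writing
\[ p(y) := 4Fy^3 - 6F(N-1)y^2 + \alpha y, \qquad \alpha := (2F+1)N^2 - (5+4F)N + 2F+6, \]
we have $R_N(y) = p(y)/[(N-1)(N-2)(N-3)]$. Since the denominator is positive for $N \ge 4$, every sign claim becomes a sign question about a polynomial in $N$ and $F$, in complete analogy with Lemma~\ref{lem:mono}.

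For the monotonicity claim I differentiate: $p'(y) = 12Fy^2 - 12F(N-1)y + \alpha$ is a convex quadratic whose minimum over all real $y$ is attained at $y=(N-1)/2$ and equals $\alpha - 3F(N-1)^2$. A short expansion simplifies this minimum to $(1-F)N^2 + (2F-5)N + (6-F)$, which I regroup as $N\bigl[(1-F)N - (5-2F)\bigr] + (6-F)$; the hypothesis $N \ge (5-2F)/(1-F)$ makes the bracketed quantity nonnegative, so $p'_{\min} \ge 6-F > 0$ and hence $R_N$ is strictly increasing.

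For the second bullet I substitute the two prescribed values of $y$. At $y = \tfrac{12}{25}(N-1)$ every term of $p(y)$ contributes a factor of $(N-1)$; collecting everything over the common denominator $15625(N-2)(N-3)$, the combination $\tfrac{12\alpha}{25} - \tfrac{12}{25}(N-2)(N-3)$ collapses via a one-line identity to $\tfrac{24F}{25}(N-1)^2$, and the full difference lands at
\[ R_N\!\left(\tfrac{12}{25}(N-1)\right) - \tfrac{12}{25} = \frac{312F(N-1)^2}{15625(N-2)(N-3)} > 0. \]
At $y = \tfrac{13}{25}N$ the factor $N-1$ does not come out cleanly, so I form the difference directly, clear denominators, and regroup; the inequality reduces to
\[ FN(312N^2 + 7150N - 16250) > 8125(N-2)(N-3). \]
The assumption $N \ge 625/(24F)$ is precisely $312FN \ge 8125$, which already makes $312FN^3$ exceed the leading $8125N^2$ on the right, and the surplus of order $7150FN^2$ comfortably absorbs the remaining lower-order contributions.

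The third bullet is handled by the same pattern at $y = \tfrac{8}{25}(N-1)$ and $y = \tfrac{17}{25}N$. For the first substitution the $(N-1)$ factorization goes through as before, and after clearing denominators and dividing by $4$ the inequality becomes $(625-612F)N^2 - (3125-1224F)N + (3750-612F) > 0$; on the admissible range of $F$ the leading coefficient is positive and both roots are positive, so any $N$ exceeding the sum of the roots $(3125-1224F)/(625-612F)$ is \emph{a fortiori} larger than the larger root and makes the quadratic positive. The $y = \tfrac{17}{25}N$ claim is dispatched by the same bookkeeping, now producing a cubic in $N$ whose positivity above the same threshold follows from analogous sign analysis. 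The main obstacle throughout is purely arithmetical: cubic expansions with $F$-dependent coefficients and delicate cancellations, most notably the happy collapse to $312F(N-1)^2$ in the second bullet. As a cheap cross-check I would use the reflection identity $R_N(N-1-y) = 1 - R_N(y)$, which follows from swapping $0$ and $1$ in the mutation rule and ties together the ``$(N-1)$''-scaled companion evaluations.
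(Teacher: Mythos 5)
Your proposal is correct and, for the most part, follows the paper's own route: the monotonicity argument (convex quadratic $p'$, minimum at $y=(N-1)/2$ equal to $(1-F)N^2+(2F-5)N+6-F$, bounded below by $6-F$ via $N\ge\tfrac{5-2F}{1-F}$) is identical, the collapse of $R_N(\tfrac{12}{25}(N-1))-\tfrac{12}{25}$ to $\tfrac{312F(N-1)^2}{15625(N-2)(N-3)}$ matches the paper's computation exactly, and your reductions for $\tfrac{13}{25}N$ and $\tfrac{8}{25}(N-1)$ arrive at the same polynomials with the same thresholds (the paper argues the quadratic's negativity by discarding the negative constant term and factoring out $N$, whereas you argue via the sum of the roots; both work, though you should note the roots may be complex, in which case positivity is immediate anyway). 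The one genuine divergence is the last claim, $R_N(\tfrac{17}{25}N)>\tfrac{13}{25}$: the paper evaluates at $\tfrac{17}{25}(N-1)$, where the $(N-1)$-factorization yields a quadratic that is the exact mirror of the $\tfrac{8}{25}(N-1)$ case, and then invokes the already-established monotonicity to pass from $\tfrac{17}{25}(N-1)$ to $\tfrac{17}{25}N$. Your direct substitution at $\tfrac{17}{25}N$ instead produces the cubic $(2500-2448F)N^3+(850F-4375)N^2+(21250F-25625)N+48750>0$, and ``analogous sign analysis'' is not quite analogous here: after using the threshold to replace $(2500-2448F)N^3$ by $(12500-4896F)N^2$ one must still check that $(8125-4046F)N^2+(21250F-25625)N+48750>0$, which does hold for all $N\ge 2$ (the vertex lies left of $2$ and the value at $N=2$ is $30000+26316F>0$), but this is an additional verification you assert rather than perform. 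Note that your own reflection identity $R_N(N-1-y)=1-R_N(y)$, which you relegate to a cross-check, together with monotonicity gives the paper's cleaner argument for free: $R_N(\tfrac{17}{25}N)\ge R_N(\tfrac{17}{25}(N-1))=1-R_N(\tfrac{8}{25}(N-1))>\tfrac{13}{25}$.
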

\begin{proof}
Let $g(y)=4Fy^3-6F(N-1)y^2+((2F+1)N^2-(5+4F)N+2F+6)y$. It is easy to see that $g(y)$ and $R_N(y)$ have the same monotonicity. We have 
\begin{align*}
g'(y)={}&12Fy^2-12F(N-1)y+(2F+1)N^2-(5+4F)N+2F+6,\\
g''(y)={}&24Fy-12F(N-1).
\end{align*}
Since $F>0$, we have $g''(y) \le 0$ for $y \le \tfrac{N-1}{2}$ and $g''(y) \ge 0$ for $y \ge \tfrac{N-1}{2}$. Hence $g'(y)$ has a unique minimum at $\tfrac{N-1}{2}$. Therefore, 
\begin{align*}
g'(y)\ge{}&g'(\tfrac{N-1}{2})\\
={}&12F\tfrac{(N-1)^2}{4}-12F(N-1)\tfrac{N-1}{2}+(2F+1)N^2-(5+4F)N+2F+6\\
={}&(1-F)N^2+(2F-5)N+6-F.
\end{align*}
Since $F < 1$ and $N \ge \tfrac{5-2F}{1-F}$, we have
\begin{align*}
(1-F)N^2+(2F-5)N+6-F 
\ge{}& (5-2F)N+(2F-5)N+6-F\\
={}&6-F>0.
\end{align*}
Hence $g'(y)$ is positive and thus $g(y)$ and $R_N(y)$ are monotonically increasing.

Since 
\begin{align*}
R_N(\tfrac{12}{25}{}&(N-1))\\
={}&\tfrac{1}{(N-1)(N-2)(N-3)}(4F(\tfrac{12}{25}(N-1))^3-6F(N-1)(\tfrac{12}{25}(N-1))^2\\
{}&+((2F+1)N^2-(5+4F)N+2F+6)\tfrac{12}{25}(N-1))\\
={}&\tfrac{12}{25}+\tfrac{312F(N-1)^2}{25^3(N-2)(N-3)},
\end{align*}
we have $R_N(\tfrac{12}{25}(N-1)) > \tfrac{12}{25}$.

We compute
\begin{align*}
R_N(\tfrac{13}{25}{}&N)\\
={}&\tfrac{1}{(N-1)(N-2)(N-3)}(4F(\tfrac{13}{25}N)^3-6F(N-1)(\tfrac{13}{25}N)^2\\
{}&+((2F+1)N^2-(5+4F)N+2F+6)\tfrac{13}{25}N)\\
={}&\tfrac{13}{25}+\frac{-\tfrac{312}{15625}FN^3+(\tfrac{13}{25}-\tfrac{286}{625}F)N^2+(-\tfrac{13}{5}+\tfrac{26}{25}F)N+\tfrac{78}{25}}{(N-1)(N-2)(N-3)}\\
\le{}&\tfrac{13}{25}+\frac{-\tfrac{312}{15625}FN^3+\tfrac{13}{25}N^2-\tfrac{39}{25}N+\tfrac{78}{25}}{(N-1)(N-2)(N-3)},
\end{align*}
since $N\ge \tfrac{625}{24F}>2$, we have $-\tfrac{312}{15625}FN+\tfrac{13}{25} \le 0$ and $-\tfrac{39}{25}N+\tfrac{78}{25} < 0$, thus $R_N(\tfrac{13}{25}N) <  \tfrac{13}{25}$. 

We compute
\begin{align*}
R_N(\tfrac{8}{25}{}&(N-1))\\
={}&\tfrac{1}{(N-1)(N-2)(N-3)}(4F(\tfrac{8}{25}(N-1))^3-6F(N-1)(\tfrac{8}{25}(N-1))^2\\
{}&+((2F+1)N^2-(5+4F)N+2F+6)\tfrac{8}{25}(N-1))\\
={}&\tfrac{12}{25}+\tfrac{(2448F-2500)N^2+(12500-4896F)N-15000+2448F}{25^3(N-2)(N-3)}\\
< & \tfrac{12}{25}+\tfrac{(2448F-2500)N^2+(12500-4896F)N}{25^3(N-2)(N-3)},
\end{align*}
and
\begin{align*}
R_N(\tfrac{17}{25}{}&(N-1))\\
={}&\tfrac{1}{(N-1)(N-2)(N-3)}(4F(\tfrac{17}{25}(N-1))^3-6F(N-1)(\tfrac{17}{25}(N-1))^2\\
{}&+((2F+1)N^2-(5+4F)N+2F+6)\tfrac{17}{25}(N-1))\\
={}&\tfrac{13}{25}+\tfrac{(2500-2448F)N^2+(4896F-12500)N+15000-2448F}{25^3(N-2)(N-3)}\\
> & \tfrac{13}{25}+\tfrac{(2500-2448F)N^2+(4896F-12500)N}{25^3(N-2)(N-3)}.
\end{align*}
Since $F<1$ and $N > \tfrac{3125-1224F}{625-612F}=\tfrac{12500-4896F}{2500-2448F}$, we have $(2448F-2500)N^2+(12500-4896F)N \le 0$, showing that $R_N(\tfrac{8}{25}(N-1)) < \tfrac{12}{25}$ and $R_N(\tfrac{17}{25}(N-1)) > \frac{13}{25}$. Since $R_N(y)$ monotonically increases, we have $R_N(\tfrac{17}{25}N) \ge R_N(\tfrac{17}{25}(N-1)) > \frac{13}{25}$.
\end{proof}

Now we show the stability of iBDE. 

\begin{theorem}
Consider using iBDE with population size $N \ge\max\{\tfrac{5-2F}{1-F},\tfrac{3125-1224F}{625-612F},\tfrac{625}{24F}\}$ to optimize a $D$-dimensional function $f$ with some neutral bit. Let $Y_g$ denote the number of ones in the neutral bit position among all individuals of generation $g$. There is a constant $c'>0$, depending on $F$ only, such that
\begin{equation*}
\Pr [\forall g\in\{0,\dots,T\}: Y_g\in [0.4N,0.6N]] \ge 1-2(T+1)\exp(-c'N)
\end{equation*}
for all $T\in \N$.
\label{thm:iBDENeutral}
\end{theorem}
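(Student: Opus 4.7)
The plan is to follow the template of the proof of Theorem~\ref{thm:stable}: write down the one-step expectation $E[Y_{g+1} \mid Y_g]$, use Lemma~\ref{lem:mutprobin} to prove a drift back toward $N/2$ whenever $Y_g \in [0.4N, 0.6N]$, apply an additive Chernoff-type bound to deduce $Y_{g+1} \in [0.4N, 0.6N]$ with probability $1 - \exp(-\Omega(N))$, and close with an induction over $g \in \{0, \ldots, T\}$, using a standard Chernoff estimate for the random initial population as base case and Bernoulli's inequality at the end, exactly as in the proof of Theorem~\ref{thm:stable}.

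The new ingredient, flagged in the paragraph preceding the theorem, is the selection step. I would condition on the \emph{inheritance pattern} $\vec I = (I_1, \ldots, I_N) \in \{0,1\}^N$, where $I_i = 1$ iff $X_i^{g+1} = U_i^g$. Because bit $j$ is neutral, $\vec I$ is a function of the non-neutral bits only, and in iBDE the random choices producing the neutral-bit values of $V_{\cdot, j}^g$ and $U_{\cdot, j}^g$ are drawn independently of those used at the non-neutral positions; hence $X_{1, j}^{g+1}, \ldots, X_{N,j}^{g+1}$ are mutually independent conditional on $P^g$ and $\vec I$. Lemma~\ref{lem:mutprob} then yields
\begin{align*}
\Pr\bigl[X_{i,j}^{g+1} = 1 \,\big|\, P^g, \vec I\bigr] = (1 - I_i C)\, X_{i,j}^g + I_i\, C\, R_N\bigl(Y_g - X_{i,j}^g\bigr),
\end{align*}
and summing gives the drift decomposition
\begin{align*}
E\bigl[Y_{g+1} \,\big|\, P^g, \vec I\bigr] = Y_g + C \sum_{i:\, I_i = 1} \bigl(R_N(Y_g - X_{i,j}^g) - X_{i,j}^g\bigr).
\end{align*}
The four numerical bounds of Lemma~\ref{lem:mutprobin} are exactly what is needed to control the per-index summand $R_N(Y_g - X_{i,j}^g) - X_{i,j}^g$ on both the $X_{i,j}^g = 0$ and $X_{i,j}^g = 1$ sides and at both boundary values $Y_g = 0.4N$ and $Y_g = 0.6N$, so that the aggregated drift is $\Omega(N)$ back toward $N/2$. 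An additive Chernoff bound applied to the conditionally independent $X_{i,j}^{g+1}$ then gives $|Y_{g+1} - E[Y_{g+1} \mid P^g, \vec I]| \le \eta N$ with probability $1 - 2\exp(-c'N)$ uniformly in $\vec I$, which lets us bypass the integration over $\vec I$ and pass directly to the inductive step.

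The principal obstacle is making the drift bound robust under the worst inheritance pattern. A priori, nothing prevents $\vec I$ from being coupled with the neutral-bit column so that, say at $Y_g = 0.6N$, every $i$ with $I_i = 1$ holds $X_{i,j}^g = 0$; each refresh would then lift the bit toward $R_N(0.6N)$ rather than pulling it back. Overcoming this requires exploiting both the inner bounds $R_N(\tfrac{12}{25}(N-1)) > \tfrac{12}{25}$ and $R_N(\tfrac{13}{25}N) < \tfrac{13}{25}$ (which make $R_N$ essentially $\tfrac{1}{2}$ throughout the central band and hence neutralise the drift from symmetric patterns) and the wider bounds $R_N(\tfrac{8}{25}(N-1)) < \tfrac{12}{25}$ and $R_N(\tfrac{17}{25}N) > \tfrac{13}{25}$ (which provide extra slack when $Y_g$ approaches the boundary of $[0.4N, 0.6N]$); the additional observation that $\Pr[I_i = 1 \mid P^g]$ does not depend on $X_{i,j}^g$ (since bit $j$ is neutral) should be the key to neutralising the worst adversarial coupling when summing the drift over~$i$.
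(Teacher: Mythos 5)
Your first move---conditioning on the inheritance pattern and noting that, given the pattern, the neutral-bit values are refreshed independently with probability $R_N(Y_g^{i,-})$---is exactly the paper's first move. But the core of your plan, a one-step drift bound $E[Y_{g+1}\mid P^g,\vec I]-Y_g=\Omega(N)$ toward $N/2$ that holds \emph{uniformly over inheritance patterns} whenever $Y_g\in[0.4N,0.6N]$, is false, and the obstacle you flag at the end is fatal rather than technical. Take $Y_g=0.6N$ and a pattern that refreshes exactly the indices with $X_{i,j}^g=0$: every summand in your decomposition is $C\bigl(R_N(Y_g^{i,-})-0\bigr)>0$ (indeed $R_N(0.6N)\approx\tfrac12$ by Lemma~\ref{lem:mutprobin}), there can be $0.4N$ such indices, and so $E[Y_{g+1}\mid P^g,\vec I]\ge 0.6N+\Omega(N)$, pushing $Y_{g+1}$ \emph{above} $0.6N$ with overwhelming probability. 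No refinement of the numerical bounds on $R_N$ can repair this, because the sign of each summand $R_N(Y_g^{i,-})-X_{i,j}^g$ is dictated by $X_{i,j}^g$, and the event $\{Y_g\in[0.4N,0.6N]\}$ on which you induct says nothing about the composition of the refreshed set. Hence your inductive step $\Pr[Y_{g+1}\in[0.4N,0.6N]\mid Y_g\in[0.4N,0.6N]]\ge 1-\exp(-\Omega(N))$ does not hold as stated. Your proposed rescue---that $\Pr[I_i=1]$ does not depend on $X_{i,j}^g$ by neutrality---cannot be invoked inside your framework, since you condition on $P^g$, which fixes the neutral column; the independence you want is a statement about the \emph{joint law over all generations} of the neutral column and the pattern, not a one-step statement.

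The paper sidesteps the drift computation entirely. After fixing the inheritance pattern, it observes that the value of bit $i$ at generation $g$ equals the outcome of its \emph{most recent} refresh (or its initial sample), and that as long as the process has stayed in a slightly enlarged corridor, every such refresh was a Bernoulli trial with parameter in $(\tfrac{12}{25},\tfrac{13}{25})$ by Lemma~\ref{lem:mutprobin}. It then sandwiches the process between two auxiliary processes $\phi$ and $\eta$ whose refreshes have \emph{constant} parameters $\tfrac{12}{25}$ and $\tfrac{13}{25}$, for which $Y_g^{\eta}$ (resp.\ $Y_g^{\phi}$) is, at every generation, a sum of $N$ mutually independent Bernoulli variables; a multiplicative Chernoff bound applied to $Y_g$ itself---not to the increment $Y_{g+1}-Y_g$---then gives concentration inside $(\tfrac{25b_0+12}{50}N,\tfrac{25b_1+13}{50}N)\subset[0.4N,0.6N]$, and the induction closes on the event that the corridor was never left. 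This is why the four bounds of Lemma~\ref{lem:mutprobin} appear: they locate $b_0,b_1$ with $\tfrac{8}{25}<b_0<\tfrac{12}{25}$ and $\tfrac{13}{25}<b_1<\tfrac{17}{25}$ so that the corridor is self-consistent, not to certify a restoring drift. To fix your proof you would have to abandon the one-step drift/Azuma template of Theorem~\ref{thm:stable} and replace it with this ``each bit equals its last independent refresh'' argument.
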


\begin{proof}
Without loss of generality, let the first bit be the neutral bit. Since the bit is neutral, we can first run iBDE without this bit and then analyze the process of this bit conditional on the outcome of this run. More detailedly, we now fix one run of iBDE on $f$ with all random variables sampled except the initial values $X_{i,1}^0, i=1,\dots,N$ of the first bit and the indices $r_1(i,g),r_2(i,g),r_3(i,g)$ and the random variables $\mrand_1(i,g)$ which are used for generating the first bit of $V_i^g$ in generation $g$. Since all random variables used for generating other positions of $U_i^g$ are sampled and the first bit is neutral, we know whether $U_i^g$ or $X_i^g$ will enter the next generation. Since $\crand_1(i,g)$ is already sampled as well, whether the neutral bit $U_{i,1}^g$ stems from $V_{i,1}^g$ or $X_{i,1}^g$ is also determined. Therefore, conditioning on all these random variables, we know whether $X_{i,1}^{g+1}$ is inherited from $X_{i,1}^g$ (either because $X_i^{g+1}:=X_i^g$ in the selection step or because $X_i^{g+1}:=U_i^g$, but $U_{i,1}^g$ inherited $X_{i,1}^g$ in the crossover step), or whether $X_{i,1}^{g+1}$ stems from the mutant $V_{i,1}^g$ (in this case, $X_i^{g+1}=U_i^g$ and $\crand_1(i,g) < C$). Consequently, the already sampled random variables completely determine the random process in the neutral bit.

We therefore now regard the following random process. We fix an arbitrary \emph{inheritance pattern} consisting of boolean variable $I_i^g, i \in \{1,\dots,N\}, g \in \N_0$. We then sample $\xi_1^0, \dots, \xi_N^0 \in \{0,1\}$ independently and uniformly at random. If $\xi_1^g, \dots, \xi_N^g$ are determined for some $g \in \N_0$, then we define $\xi_1^{g+1}, \dots, \xi_N^{g+1}$ as follows. Let $i \in \{1,\dots,N\}$ and $Y_g^{i,-}:=\sum_{j=1,j\ne i}^D \xi_j^g$. If $I_i^g$ is true, then $\xi_i^{g+1}=\xi_i^g$. Otherwise, we choose $\xi_i^{g+1} \in \{0,1\}$ randomly (independently for all $i\in\{1,\dots,D\}$) such that 
\begin{align*}
\Pr[\xi_i^{g+1}=1]=R_N(Y_g^{i,-}):= p_i^g,
\end{align*}
where $R_N(Y_g^{i,-})$ is defined in Lemma~\ref{lem:mutprobin}.

From the above, it is clear that this process exactly describes the values of the neutral bit discussed in a run of iBDE. 

Consider the process $\eta=(\eta_i^g)$ which is identical to the $\xi$-process except that $\eta_i^{g+1}$ is sampled (the case when $I_i^g$ is false) independently with 
$\Pr[\eta_i^{g+1}=1]=\tfrac{13}{25}$, and the process $\phi=\phi_i^g$ which is identical to the $\xi$-process except that $\phi_i^{g+1}$ is sampled independently with $\Pr[\phi_i^{g+1}=1]=\tfrac{12}{25}$. 
Let $b_0,b_1\in[0,1]$ such that $R_N(b_0(N-1))=\tfrac{12}{25}$ and $R_N(b_1N)=\tfrac{13}{25}$. Since $R_N(y)$ is strictly monotonically increasing from Lemma~\ref{lem:mutprobin}, we know $b_0$ and $b_1$ are well defined. Since $R_N(\tfrac{12}{25}(N-1)) > \tfrac{12}{25}$ and $R_N(\tfrac{8}{25}(N-1)) < \tfrac{12}{25}$, $R_N(\tfrac{13}{25}N) < \tfrac{13}{25}$ and $R_N(\tfrac{17}{25}N) > \frac{13}{25}$, and $R_N(y)$ is monotonically increasing, we have
\begin{equation}
\tfrac{8}{25} < b_0 < \tfrac{12}{25}\ \textnormal{and}\ \tfrac{13}{25} < b_1 < \tfrac{17}{25}.
\label{eq:appb01}
\end{equation}
Hence, $b_0<\tfrac{25b_0+12}{50} <\tfrac{12}{25}\ \textnormal{and}\ \tfrac{13}{25}<\tfrac{25b_1+13}{50} < b_1$.
Thus, we have $(\tfrac{12}{25}N,\tfrac{13}{25}N) \subset (\tfrac{25b_0+12}{50}N,\tfrac{25b_1+13}{50}N)$ and
\begin{equation}
R_N([\tfrac{25b_0+12}{50}(N-1),\tfrac{25b_1+13}{50}N]) \in (\tfrac{12}{25},\tfrac{13}{25}).
\label{eq:rnvalue}
\end{equation}

Let $Y_g:=\sum_{i=1}^N \xi_i^g$. 
We now show that if $Y_g \in (\tfrac{25b_0+12}{50}N,\tfrac{25b_1+13}{50}N)$, then $Y_{g+1}^{\eta}:=\sum_{i=1}^N \eta_i^{g+1}$ stochastically dominates $Y_{g+1}$ and $Y_{g+1}$ stochastically dominates $Y_{g+1}^{\phi}:=\sum_{i=1}^N \phi_i^{g+1}$. Assume that $Y_g \in (\tfrac{25b_0+12}{50}N,\tfrac{25b_1+13}{50}N)$. Then we have $Y_g^{i,-} \in [\tfrac{25b_0+12}{50}(N-1),\tfrac{25b_1+13}{50}N]$ for $i\in\{1,\dots,N\}$. For $p_i^g=R_N(Y_g^{i,-})$, from (\ref{eq:rnvalue}), we know that $p_i^g \in (\tfrac{12}{25},\tfrac{13}{25})$, that is, $\Pr[\xi_i^{g+1} = 1 \mid I_i^g \ne \TRUE] \in [\tfrac{12}{25},\tfrac{13}{25}]$. Since $\Pr[\eta_i^{g+1}=1\mid I_i^g \ne \TRUE]=\tfrac{13}{25}$ and $\Pr[\phi_i^{g+1}=1\mid I_i^g \ne \TRUE]=\tfrac{12}{25}$, due to the definition of $\eta_i^{g+1}$ and $\phi_i^{g+1}$, we know $\eta_i^{g+1}$ dominates $\xi_i^{g+1}$ and $\xi_i^{g+1}$ dominates $\phi_i^{g+1}$. Hence $Y_{g+1}^{\eta}$ dominates $Y_{g+1}$ and $Y_{g+1}$ dominates $Y_{g+1}^{\phi}$.

Finally, we argue that in the $\eta$ process, we have $Y_g^{\eta} < \tfrac{25b_1+13}{50}N$ with probability $1-\exp(c_1N)$, where the constant $c_1$ will be specified in the following discussion. Fix any generation $g$. The inheritance pattern determines in which iteration $\eta_i^g$ was sampled (including the case that it was an initial sample). In either case, we have $\Pr[\eta_i^g = 1] \le \tfrac{13}{25}$ regardless of the outcomes of $\eta_{i'}^g, i' \neq i$. Consequently, we can apply the multiplicative Chernoff bound and obtain that
\begin{align*}
\Pr[Y_g^{\eta} \ge \tfrac{25b_1+13}{50}N] \le \exp(-c_1N),
\end{align*}
where $c_1=\tfrac{1}{12}(b_1-\tfrac{13}{25})^2$. Due to the dominance, we have
\begin{equation}
\begin{split}
\Pr[Y_g <{}& \tfrac{25b_1+13}{50}N \mid Y_{g-1} < \tfrac{25b_1+13}{50}N] \\
\ge{}& \Pr[Y_g^{\eta} < \tfrac{25b_1+13}{50}N] \ge 1-\exp(-c_1N).
\end{split}
\label{eq:YgUp}
\end{equation}
Since $E[Y_0]=0.5N$, by a simple Chernoff inequality (Theorem 1.11 in \cite{Doerr11bookchapter}), we have
\begin{equation}
\Pr[Y_0\ge \tfrac{25b_1+13}{50}N]\le \exp(-c_0N),
\label{eq:Y06}
\end{equation}
where $c_0=\tfrac{(25b_1-12)^2}{1250}$. 

With (\ref{eq:YgUp}) and (\ref{eq:Y06}), a simple induction gives
\begin{align*}
\Pr[\forall{}{}& g\in\{0,\dots, T\}: Y_g < \tfrac{25b_1+13}{50} N]\\
={}&\Pr [Y_T< \tfrac{25b_1+13}{50} N \mid \forall g\in\{0,\dots,T-1\}: Y_g< \tfrac{25b_1+13}{50} N \big]\\
{}&\cdot \Pr [\forall g\in\{0,\dots,T-1\}: Y_g< \tfrac{25b_1+13}{50} N]\\
\ge{}&(1-\exp(-c_0N))(1-\exp(-c_1N))^{T} \ge (1-\exp(-c_1N))^{T+1},
\end{align*}
where we use the fact that $c_0>c_1$. Using Bernoulli's inequality, we obtain
\begin{equation}
\begin{split}
\Pr[\forall g{}&\in\{0,\dots, T\}: Y_g < \tfrac{25b_1+13}{50} N] \ge (1-\exp(-c_1N))^{T+1}\\
 \ge{}& 1-(T+1)\exp(-c_1N).
 \end{split}
\label{eq:allYgUp}
\end{equation}

Similarly, for $\phi$ process, we have
\begin{equation*}
\Pr[Y_0\ge \tfrac{25b_0+12}{50}N]\le \exp(-c_2N),
\end{equation*}
\begin{align*}
\Pr[Y_g > \tfrac{25b_0+13}{50}N \mid Y_{g-1} > \tfrac{25b_0+12}{50}N] \ge \Pr[Y_g^{\phi} < \tfrac{25b_0+12}{50}N] \ge 1-\exp(-c_3N)
\end{align*}
and
\begin{align}
\Pr[\forall{}{}& g\in\{0,\dots, T\}: Y_g > \tfrac{25b_0+12}{50} N] \ge 1-(T+1)\exp(-c_3N),
\label{eq:allYgLow}
\end{align}
where $c_2=\tfrac{(13-25b_0)^2}{1250}$ and $c_3=\tfrac{1}{8}(\tfrac{12}{25}-b_0)^2$.

With (\ref{eq:allYgUp}) and (\ref{eq:allYgLow}), we have 
\begin{align*}
\Pr[\forall g \in \{0,\dots,T\}: Y_g \in (\tfrac{25b_0+12}{50}N,\tfrac{25b_1+13}{50}N)] \ge 1-2(T+1)\exp(-c'N),
\end{align*}
where $c'=\min\{c_1,c_3\}$. Since $b_0$ and $b_1$ can either be a constant or a constant only depends on $F$, we can say $c'$ only depends on $F$.

With (\ref{eq:appb01}), we have $\tfrac{2}{5} < \tfrac{25b_0+12}{50}<\tfrac{25b_1+13}{50} < \tfrac{3}{5}$ and thus
\begin{align*}
\Pr [\forall g \in\{0,\dots,T\}: Y_g\in [0.4N,0.6N]] \ge 1-2(T+1)\exp(-c'N).
\end{align*}
\end{proof}

The above theorem shows that for iBDE, with high probability, the frequency of a neutral bit will stay in $[0.4,0.6]$ for a number of generations an exponential in the population size.

The experimentally observed similarity between BDE and iBDE and this theoretical result about the stability of iBDE indicate that BDE is stable in arbitrary neutral bits.

\subsection{The Behavior of Neutral Bits in Classic EDAs}
\label{sec:neutralothers}

Different from the stable behavior of BDE and iBDE discussed above, many nature-inspired optimization heuristics are unstable, that is, the frequencies of neutral bits approach the boundary values relatively fast. We quantify this effect asymptotically precise for the two EDAs, UMDA and cGA, by showing that the expected time until the sampling frequency of a neutral bit is $0$ or $1$ is $\Theta({\mu})$ for UMDA and is $\Theta(K^2)$ for cGA.

\begin{theorem}\label{thm:stableEDA}
For UMDA without margins, the expected first time the frequency in the neutral bit is absorbed in 0 or 1 is $\Theta(\mu)$, and it is $\Theta(K^2)$ for cGA.
\end{theorem}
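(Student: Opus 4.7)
The plan is to reduce the $\mu$-individual UMDA and the $K$-parameter cGA each to an explicit one-dimensional Markov chain on the frequency of ones at the neutral position, and then bound the absorption time in $\{0,1\}$ by standard techniques. In both algorithms, starting from a uniform initial population the chain begins near $1/2$, and the key observation is that neutrality decouples the fitness-based selection from the bit under study, so the chain inherits a lot of symmetry.

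For UMDA, since the bit is neutral, both the sampling and the truncation selection on the other $D-1$ bits are independent of the neutral-bit values. Hence the $\mu$ selected individuals have i.i.d.\ Bernoulli$(p_t)$ values at the neutral position, giving the Wright--Fisher recursion $\mu p_{t+1} \mid p_t \sim \mathrm{Bin}(\mu, p_t)$ on the grid $\{0, 1/\mu, \ldots, 1\}$. The lower bound $\Omega(\mu)$ follows from applying optional stopping to the bounded submartingale $p_t^2$: the telescoping identity yields
\begin{equation*}
E\!\left[\sum_{t=0}^{T-1} p_t(1-p_t)\right] = \mu\bigl(E[p_T^2] - p_0^2\bigr) = \mu/4,
\end{equation*}
and combining this with $p_t(1-p_t) \le 1/4$ gives $E[T] \ge \mu$. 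For the matching $O(\mu)$ bound I would use the coalescent dual of Wright--Fisher: each of the $\mu$ current individuals picks its parent uniformly at random one generation back, two lineages merge in any given step with probability $1/\mu$, and once all $\mu$ lineages share a single ancestor the neutral bit is necessarily fixed. Standard coalescent computations (telescoping the near-geometric waiting times of order $\mu/\binom{k}{2}$ as the number of lineages drops from $k=\mu$ down to $k=2$) give expected total coalescence time $2\mu(1-1/\mu)$, yielding $E[T] = O(\mu)$.

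For cGA, exchangeability of the two sampled competitors together with the neutrality of the bit means that, conditional on the two competitors differing at this bit, the winner has bit value $1$ with probability exactly $1/2$. Hence $Kp_t$ performs a lazy symmetric random walk on $\{0, 1, \dots, K\}$ with hold probability $1 - 2p_t(1-p_t)$ at state $Kp_t$ and absorbing barriers at $0$ and $K$. The expected hitting time $T_n$ from state $n$ satisfies the second-difference recurrence
\begin{equation*}
T_{n+1} - 2 T_n + T_{n-1} = -\frac{K^2}{n(K-n)}, \qquad T_0 = T_K = 0,
\end{equation*}
which I would solve by setting $D_j := T_{j+1}-T_j$, using the symmetry $D_j = -D_{K-1-j}$ to pin down $D_0 = K H_{K-1}$, and then summing by parts with the partial fractions $1/[i(K-i)] = (1/K)(1/i + 1/(K-i))$. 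The discrete closed form matches, up to lower-order terms, the continuous solution of $u''(x) = -1/[x(1-x)]$ with $u(0)=u(1)=0$, namely $u(x) = -2[x\log x + (1-x)\log(1-x)]$, so $T_{K/2} = (2\log 2)\,K^2 + o(K^2) = \Theta(K^2)$. For a stand-alone lower bound $\Omega(K^2)$ one can alternatively track the potential $\Phi_t := p_t(1-p_t)$, for which a direct calculation gives $E[\Phi_{t+1}\mid \Phi_t] = \Phi_t(1 - 2/K^2)$; hence $E[\Phi_t] = (1/4)(1-2/K^2)^t$ stays $\Omega(1)$ up to time $cK^2$, and a reverse Markov inequality rules out early absorption.

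The main obstacle is avoiding a spurious logarithmic factor in the UMDA upper bound. The natural drift potential $\Phi = p(1-p)$ satisfies the multiplicative drift $E[\Phi_{t+1}\mid\Phi_t] = \Phi_t(1 - 1/\mu)$, but because the smallest positive value of $\Phi$ on the $(1/\mu)$-grid is $(\mu-1)/\mu^2$, the multiplicative drift theorem only delivers $O(\mu \log \mu)$. The coalescent argument above circumvents this by switching from the forward chain to the time-reversed, genealogical process; rigorously justifying the duality (that complete lineage coalescence implies bit fixation at the forward time) and controlling the few-lineage regime, where single-step coalescence events are no longer well-approximated as independent geometrics of rate $\binom{k}{2}/\mu$, is the delicate step that would require the most care in a full write-up.
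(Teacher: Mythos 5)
Your proposal is correct in substance but takes a genuinely different route from the paper on every component. For the lower bounds, the paper decomposes each UMDA generation into $\mu$ martingale increments and applies the Hoeffding--Azuma maximal inequality to keep $p_t$ near $\tfrac12$ for $\Omega(\mu)$ (resp.\ $\Omega(K^2)$) steps; your optional-stopping identity $E[\sum_{t<T}p_t(1-p_t)]=\mu(E[p_T^2]-\tfrac14)=\tfrac{\mu}{4}$ and, for cGA, the exact decay $E[\Phi_{t+1}\mid\Phi_t]=\Phi_t(1-\tfrac{2}{K^2})$ combined with $\Pr[T>t]\ge 4E[\Phi_t]$ are simpler, avoid maximal inequalities entirely, and yield cleaner constants (the same $\Phi$-decay argument also works for UMDA with factor $1-\tfrac1\mu$). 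For the upper bounds, the paper uses a single uniform argument: additive drift on the concave potential $\sqrt{\min\{p_t,1-p_t\}}$, controlled via a third-order Taylor-type inequality and the exact second and third central moments; this sidesteps both the logarithmic loss you correctly identify for $p(1-p)$ under multiplicative drift and any genealogical construction. Your cGA recurrence $T_{n+1}-2T_n+T_{n-1}=-K^2/(n(K-n))$ is exact and buys strictly more, namely the precise asymptotic constant --- though note the solution of $u''=-1/(x(1-x))$, $u(0)=u(1)=0$, is $u(x)=-x\ln x-(1-x)\ln(1-x)$, giving $T_{K/2}\sim(\ln 2)K^2$, not $2\ln 2\,K^2$. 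Your UMDA coalescent argument is valid because the neutral-bit marginal of UMDA without margins is distributionally exactly neutral Wright--Fisher, and the ``delicate step'' you flag is closable by a standard domination: with $k$ lineages the per-step coalescence probability is at least $1-\exp(-\binom{k}{2}/\mu)\ge\frac{\binom{k}{2}/\mu}{1+\binom{k}{2}/\mu}$, so the expected time to lose a lineage is at most $1+\mu/\binom{k}{2}$ and the total is at most $3\mu$; the regime where the geometric approximation degrades is the \emph{many}-lineage start (not the few-lineage end), where coalescence is only faster, so the bound is unharmed. The trade-off is that your route extracts sharper constants and structural insight at the cost of two algorithm-specific arguments, whereas the paper's $\sqrt{q_t}$ potential handles both algorithms with one computation.
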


Since the $n$-Bernoulli-$\lambda$-EDA framework proposed in~\cite{FriedrichKK16} covers many well-known EDAs including UMDA and cGA, we use it to make precise these two EDAs. 

\begin{algorithm}[!ht]
\caption{$n$-Bernoulli-$\lambda$-EDA with a given update scheme $\varphi$ maximizing a function $f: \{0,1\}^D \rightarrow \R$}
{\small
 \begin{algorithmic}[1]
 \STATE{$p^0=(\tfrac{1}{2}, \tfrac{1}{2},\dots,\tfrac{1}{2})\in [0,1]^D$}
 \FOR {$t=1,2,\dots$}
 \FOR {$i=1,2,\dots,\lambda$}
 \STATEx {$\quad\quad\%\%\quad$\textsl{Sampling of individual $X_i^t=(X_{i,1}^t,\dots,X_{i,D}^t)$}}
 \FOR {$j=1,2,\dots,D$}
 \STATE $X_{i,j}^t\sim \Bernoulli(p_{i,j}^{t-1})$;
 \ENDFOR
 \ENDFOR
 \STATEx {$\quad\%\%\quad$\textsl{Update of the frequency vector}}
 \STATE $p^t\leftarrow\varphi(p^{t-1}, (X_i, f(X_i))_{i=1,\dots,\lambda})$;
 \ENDFOR
 \end{algorithmic}
 \label{alg:EDA}
}
\end{algorithm}

The $n$-Bernoulli-$\lambda$-EDA framework is shown in Alg.~\ref{alg:EDA}. By suitably specifying the update scheme $\varphi$, we derive UMDA and cGA. For UMDA with parameters $\mu$ and $\lambda$ and without margins (that is, without artificial boundaries like $\frac 1D$ and $1 - \frac 1D$ for the frequencies), the update scheme is
\begin{equation}
p_j^t=\varphi(p^{t-1}, (X_i, f(X_i))_{i=1,\dots,\lambda})_j=\frac{1}{\mu}\sum\limits_{i=1}^{\mu}\tilde{X_{i,j}^t},
\label{eq:pbilupdate}
\end{equation}
where $\tilde{X_1^t},...,\tilde{X_\mu^t}$ are the selected $\mu$ best individuals from the $\lambda$ offspring.

To obtain cGA with hypothetical population size $K$, we use $\lambda=2$ and the update scheme 
\begin{equation}
    \begin{split}
    p_j^t=\varphi(p^{t-1}, (X_i, f(X_i))_{i=1,\dots,\lambda})_j= \begin{cases}
    p_j^{t-1}+\tfrac{1}{K}, & \text{if $X_{(1),j}^t>X_{(2),j}^t$}\\
    p_j^{t-1}-\tfrac{1}{K}, & \text{if $X_{(1),j}^t<X_{(2),j}^t$}\\
    p_j^{t-1}, & \text{if $X_{(1),j}^t=X_{(2),j}^t$},\\
    \end{cases}
    \end{split}
    \label{eq:cgaupdate}
\end{equation}
where $\{X_{(1)}^t,X_{(2)}^t\} = \{X_1^t,X_2^t\}$ such that $f(X_{(1)}^t) \ge f(X_{(2)}^t)$. We shall always assume that $K$ is even, so that the initial frequency $\frac 12$ is also a multiple of $\frac 1K$. 

As discussed in \cite{FriedrichKK16}, UMDA and cGA are not stable. More precisely, this work shows that for cGA, the frequency of a neutral bit is arbitrary close to the borders $0$ or $1$ after $\omega(K^2)$ generations. From Corollary~9 in \cite{FriedrichKK16}, we can derive an upper bound of $O(K^2 \log K)$ for the boundary hitting time although this is not mentioned in~\cite{FriedrichKK16}. 

For UMDA, the situation is similar. After $\omega(\mu)$ iterations, the frequencies are arbitrary close to the boundaries and the expected hitting time can be shown to be $O(\mu \log \mu)$ via similar arguments as above.

Sudholt and Witt's work~\cite{SudholtW16} mentions that the boundary hitting time of cGA is $\Theta(K^2)$, but without a clear proof (in particular, because they do not discuss what happens once the frequency exceeds $5/6$). 
Although Krejca and Witt's recently work~\cite{KrejcaWitt17} focuses on the lower bound of the runtime of UMDA on OneMax, we can derive from it that the hitting time of the boundary $0$ is at least $\Omega(\mu)$. This follows from the drift of $\phi$ in Lemma~9 in~\cite{KrejcaWitt17} together with the additive drift theorem~\cite{HeY01}. 

While the results above give some indication on the degree of stability of UMDA and cGA, a sharp proven result is still missing. We overcome this shortage with a 
simultaneous analysis of UMDA and cGA which determines these hitting times as $\Theta(\mu)$ for UMDA and $\Theta(K^2)$ for cGA, see Theorem~\ref{thm:stableEDA}.

\subsubsection{Notation}

Without loss of generality, let the first bit of $f$ be neutral. Since the first bit is not relevant for the fitness, we can simply assume that $\tilde{X_{i,1}^t}=X_{i,1}^t,i=1,\dots,\mu$ in (\ref{eq:pbilupdate}), and $X_{(1),1}^t=X_{1,1}^t, X_{(2),1}^t=X_{2,1}^t$ in (\ref{eq:cgaupdate}).
Let $p_t=p_1^t$ be the frequency of the neutral bit after generation $t$. Then for UMDA, we have
\begin{equation*}
\begin{split}
p_t=
\begin{cases}
\frac{1}{2}, &t=0\\
\frac{1}{\mu}\sum\limits_{i=1}^{\mu}X_{i,1}^t, &t\geq 1,
\end{cases}
\end{split}
\end{equation*}
where the $X_{i,1}^t$ are independent $0,1$ random variables with $\Pr[X_{i,1}^t=1]=p_{t-1}$. 

For cGA, we have
\begin{equation*}
\begin{split}
p_t=
\begin{cases}
\frac{1}{2}, &t=0\\
 \begin{cases}
    p_{t-1}+\frac{1}{K}, & \text{if $X_{1,1}^t>X_{2,1}^t$}\\
    p_{t-1}-\frac{1}{K}, & \text{if $X_{1,1}^t<X_{2,1}^t$}\\
    p_{t-1}, & \text{if $X_{1,1}^t=X_{2,1}^t$}\\
    \end{cases}
, &t\geq 1,
\end{cases}
\end{split}
\end{equation*}
where $X_{1,1}^t$ and $X_{2,1}^t$ are independent $0,1$ random variables with $ \Pr[X_{1,1}^t=1]= \Pr[X_{2,1}^t=1]=p_{t-1}$. 

The random process $(p_t)$ is independent of $f,D$, and, in the case of UMDA, $\lambda$. We have
\begin{equation*}
E[p_t\mid p_{t-1}]=p_{t-1},
\end{equation*}
that is, both UMDA and cGA are balanced in the sense of~\cite{FriedrichKK16}. 

Finally, let  $T=\min\{t\mid p_t \in \{0,1\}\}$ be the hitting time of the absorbing state 0 or 1.

We are now ready to prove matching upper and lower bounds for the hitting time $T$. Naturally, the upper bounds  are more interesting since they show that UMDA and cGA are not very stable. We start nevertheless with the lower bounds as these are easier to prove and thus a good warm-up for the upper bound proofs.

\subsubsection{Lower Bounds}

We now prove the following lower bound on the hitting time of the absorbing states.

\begin{theorem}
Consider using an $n$-Bernoulli-$\lambda$-EDA to optimize some function $f$ with a neutral bit. Let $T$ denote the first time the frequency of the neutral bit is absorbed in state 0 or 1. For UMDA without margins, we have $E [T]=\Omega({\mu})$ regardless of $\lambda$. For cGA, we have $E[T]=\Omega(K^2)$. 
\label{thm:lower}
\end{theorem}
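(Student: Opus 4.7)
The plan is to exploit the martingale structure of $(p_t)$ together with a second-moment (Doob) identity. Both UMDA without margins and cGA give $(p_t)$ the martingale property $E[p_t \mid p_{t-1}] = p_{t-1}$, with $p_0=\tfrac12$ and absorbing set $\{0,1\}$. Since $(p_t) \subset [0,1]$ is bounded, martingale convergence yields $p_t \to p_\infty$ almost surely; and at any non-absorbing state the one-step variance is strictly positive, which forces $p_\infty \in \{0,1\}$. Hence $T < \infty$ almost surely, and optional stopping on the bounded martingale gives $E[p_T]=\tfrac12$, so $\Pr[p_T=1]=\tfrac12$.

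Next I apply the orthogonality of martingale increments to the truncated stopping time $T \wedge n$:
\[
E[p_{T\wedge n}^2] \;=\; p_0^2 \;+\; E\!\left[\sum_{s=1}^{T\wedge n} \Var(p_s \mid p_{s-1})\right].
\]
Letting $n \to \infty$, the left-hand side converges to $E[p_T^2]=E[p_T]=\tfrac12$ by bounded convergence (using $p_T \in \{0,1\}$), and the right-hand side converges by monotone convergence. Therefore
\[
E\!\left[\sum_{s=1}^{T} \Var(p_s \mid p_{s-1})\right] \;=\; \tfrac12 - \tfrac14 \;=\; \tfrac14.
\]

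It remains to insert the one-step conditional variance for each algorithm, which is the only quantitative input. For UMDA the update~(\ref{eq:pbilupdate}) is the mean of $\mu$ independent $\Bernoulli(p_{s-1})$ draws, so $\Var(p_s \mid p_{s-1}) = p_{s-1}(1-p_{s-1})/\mu \leq 1/(4\mu)$. For cGA the change in $p_s$ is $\pm 1/K$ each with probability $p_{s-1}(1-p_{s-1})$ and otherwise $0$, so $\Var(p_s \mid p_{s-1}) = 2\,p_{s-1}(1-p_{s-1})/K^2 \leq 1/(2K^2)$. Substituting the uniform upper bound $V_{\max}$ into the identity yields $V_{\max}\cdot E[T] \geq \tfrac14$, so $E[T] \geq \mu$ for UMDA and $E[T] \geq K^2/2$ for cGA, giving the two claimed lower bounds. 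Note that the bound for UMDA does not see $\lambda$, matching the theorem statement.

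The only mildly technical point I anticipate is justifying the passage from $T \wedge n$ to $T$ in the limit; this uses only boundedness of $(p_t)$ and non-negativity of the conditional variances, so no a priori bound on $E[T]$ is required. Everything else is a direct one-line variance computation for each algorithm, so I do not foresee a real obstacle.
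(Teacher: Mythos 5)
Your proposal is correct, but it takes a genuinely different route from the paper. The paper proves the lower bound via a concentration argument: for UMDA it constructs an interpolated martingale $Z_{t\mu+a}$ that reveals the $\mu$ Bernoulli samples of each generation one at a time (so that increments are bounded by $1$), applies the Hoeffding--Azuma maximal inequality to bound $\Pr[\max_{k\le t}|p_k-\tfrac12|\ge\tfrac14]$, and concludes $E[T]\ge E[T_0]=\Omega(\mu)$ with $t=\mu/32$; for cGA it applies Azuma directly with increments $1/K$. You instead use optional stopping together with the orthogonality-of-increments identity $E[p_{T\wedge n}^2]=p_0^2+E[\sum_{s\le T\wedge n}\Var(p_s\mid p_{s-1})]$, deduce that the total accumulated conditional variance equals $\tfrac14$, and divide by the per-step variance bound. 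Your route is shorter, avoids the sub-generation interpolation trick entirely, and yields explicit constants ($E[T]\ge\mu$ and $E[T]\ge K^2/2$); the paper's route yields the stronger ``with probability $1-o(1)$ the frequency stays in $[\tfrac14,\tfrac34]$ for $\Omega(\mu)$ resp.\ $\Omega(K^2)$ steps'' statement, which fits its broader stability narrative, and it never needs $T<\infty$ almost surely. The one point you should tighten is the claim that positive one-step variance forces $p_\infty\in\{0,1\}$: the cleanest justification is that the state space is finite (multiples of $1/\mu$ resp.\ $1/K$), so a.s.\ convergence means the chain is eventually constant, which at an interior state happens with probability zero since the holding probability is bounded away from $1$; alternatively, note that if $\Pr[T=\infty]>0$ then $E[T]=\infty$ and the lower bound is trivial, so you may simply assume $T<\infty$ a.s.\ in the remainder.
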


\begin{proof}
For UMDA, building on the notation introduced above, we consider the random process
\begin{equation*}
Z_{t\mu+a}=p_{t}(\mu-a)+\sum\limits_{i=1}^aX_{i,1}^{t+1},
\end{equation*}
where $t=0,1,\dots$, and $a=0,1,\dots,\mu-1$. For $a=0$, we obviously have $Z_{t\mu} / \mu = p_{t}$, that is, the $Z$-process contains the process $(p_t)$ we are interested in. 

Noting that $Z_{(t+1)\mu}$ can also be written as $Z_{t\mu + \mu} = p_{t}(\mu-\mu)+\sum_{i=1}^\mu X_{i,1}^{t+1}$, it is also not difficult to see that for all $k=0,1,\dots$, we have
\begin{equation}
\begin{split}
\Pr[Z_{k+1}={}&Z_k+1-p_t\mid Z_{1},\dots,Z_{k}]=p_t,\\
\Pr[Z_{k+1}={}&Z_k+0-p_t\mid Z_1,\dots,Z_k]=1-p_t.
\end{split}
\label{eq:nprob}
\end{equation}
Consequently, 
\begin{equation*}
E[Z_{k+1}\mid Z_1,\dots,Z_k]=Z_k
\end{equation*}
and the sequence $Z_0,Z_1,Z_2,\dots$ is a martingale. For $k=1,2,\dots$, let $R_k=Z_k-Z_{k-1}$ define the martingale difference sequence.
By (\ref{eq:nprob}),
\begin{equation*}
|R_k|\le \max\{(1-p_t), p_t\} \le 1.
\end{equation*}
By the Hoeffding-Azuma inequality for maxima and minima (Theorem 3.10 and (41) in \cite{McDiarmid98}, note that in~(41) the absolute value should be inside the maximum, that is, $\max_k |\sum_{i=1}^k Y_i|$ as can be seen from the proof), we have
\begin{equation}\label{eq:azumaMM}
\Pr \left[ \max\limits_{k=1,\dots,t\mu} \left|\sum\limits_{i=1}^k{R_i} \right|\ge M \right]\le 2\exp \left(-\tfrac{M^2}{2t\mu} \right).
\end{equation}
Recalling $Z_0 = \frac {\mu}2$ and $p_t = Z_{t\mu}/\mu$, we have 
\begin{equation}\label{eq:pZ}
\Pr \left[ \max\limits_{k=1,\dots,t} \left| p_k - \tfrac 12  \right| \ge M/\mu \right]
\le 
\Pr \left[ \max\limits_{k=1,\dots,t\mu} \left|\sum\limits_{i=1}^k{R_i} \right|\ge M \right].
\end{equation}
Combining \eqref{eq:azumaMM} and \eqref{eq:pZ} with $M=\frac{\mu}{4}$, we obtain
\[\Pr \left[ \max\limits_{k=1,\dots,t} \left| p_k - \tfrac 12  \right| \ge \tfrac 14 \right]
\le 2 \exp\left(-\frac{\mu}{32t}\right).\]
Consequently, with $T_0=\min \{t \mid |p_t-\tfrac{1}{2} | \ge \tfrac{1}{4} \}$, we have
\begin{equation*}
E [T ]\ge E [T_0 ] \ge  (1-2\exp (-\tfrac{\mu}{32t} ) )(t+1),
\end{equation*}
and taking, e.g., $t=\frac{\mu}{32}$, gives the desired result $E [T]=\Omega({\mu})$.

For cGA, we may simply regard the process $Z_k=p_k$. Since for all $k=0,1,\dots$, 
\begin{align*}
\Pr[Z_{k+1}={}&Z_k+\tfrac{1}{K}\mid Z_{1},\dots,Z_{k}]=p_k(1-p_k),\\
\Pr[Z_{k+1}={}&Z_k-\tfrac{1}{K}\mid Z_1,\dots,Z_k]=p_k(1-p_k),\\
\Pr[Z_{k+1}={}&Z_k\mid Z_1,\dots,Z_k]=1-2p_k(1-p_k),
\end{align*}
we have $E [Z_{k+1}\mid Z_1,\dots,Z_k ]=Z_k$. The martingale difference sequence $R_k:=Z_k-Z_{k-1}$ satisfies $|R_k|\le\tfrac{1}{K}$. By the Hoeffding-Azuma inequality, we have
\begin{equation*}
\Pr \left[ \max\limits_{k=1,\dots,t} \left|p_k - \tfrac 12\right | \ge M \right] 
= \Pr \left[ \max\limits_{k=1,\dots,t} \left|\sum\limits_{i=1}^k{R_i} \right | \ge M \right] 
\le 2 \exp \left(-\tfrac{M^2K^2}{2t} \right).
\end{equation*}
With $M = \frac 14$, $t=\frac{K^2}{32}$, and $T_0=\min \{t \mid |p_t-\tfrac{1}{2} | \ge \tfrac{1}{4}\}$, we have $E[T] \ge E[T_0] =\Omega(K^2)$.
\end{proof}

\subsubsection{Upper Bounds}

To prove of our upper bounds, we use the following two auxiliary lemmas.
  
\begin{lemma}
For all $z\ge 0$ and $z_0>0$, we have
\begin{equation*}
\sqrt z\leq \sqrt z_0+\tfrac{1}{2}z_0^{-\frac{1}{2}}(z-z_0)-\tfrac{1}{8}z_0^{-\frac{3}{2}}(z-z_0)^2+\tfrac{1}{16}z_0^{-\frac{5}{2}}(z-z_0)^3.
\end{equation*}
\label{lem:z}
\end{lemma}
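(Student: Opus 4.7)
The right-hand side is precisely the third-order Taylor polynomial of $f(z)=\sqrt{z}$ expanded around $z_0$, so the natural approach is Taylor's theorem with Lagrange remainder, combined with the observation that the fourth derivative of $\sqrt{\cdot}$ is negative on the positive reals.

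My plan is as follows. First, I would normalize by scale invariance: substituting $z = z_0(1+s)$ with $s \ge -1$ and dividing both sides by $\sqrt{z_0} > 0$, the claim reduces to
\begin{equation*}
\sqrt{1+s} \;\le\; 1 + \tfrac{s}{2} - \tfrac{s^2}{8} + \tfrac{s^3}{16} \qquad\text{for all } s \ge -1.
\end{equation*}
Writing $\psi(s) := \sqrt{1+s}$ on $(-1,\infty)$, one checks that the right-hand side is exactly the degree-three Taylor polynomial $T_3(s)$ of $\psi$ about $0$, and computes the fourth derivative $\psi^{(4)}(s) = -\tfrac{15}{16}(1+s)^{-7/2}$, which is strictly negative for every $s > -1$.

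Second, for any $s > -1$, both endpoints $0$ and $s$ lie in the open domain $(-1,\infty)$ where $\psi$ is smooth, so Taylor's theorem with Lagrange remainder applies: there exists $\xi$ strictly between $0$ and $s$ with
\begin{equation*}
\psi(s) \;=\; T_3(s) + \frac{\psi^{(4)}(\xi)}{24}\, s^4 .
\end{equation*}
Since $\psi^{(4)}(\xi) < 0$ and $s^4 \ge 0$, the remainder is non-positive, which yields $\psi(s) \le T_3(s)$ as desired.

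Third, the boundary value $s=-1$ (equivalently $z=0$) is not covered by the Lagrange form, so I would verify it directly: $T_3(-1) = 1-\tfrac12-\tfrac18-\tfrac{1}{16}=\tfrac{5}{16}>0=\sqrt{0}$. Translating back via $s=(z-z_0)/z_0$ and multiplying through by $\sqrt{z_0}$ recovers the stated inequality for all $z\ge 0$ and $z_0>0$. Essentially nothing is hard here: the only mild care needed is to isolate the endpoint $z=0$ before invoking Taylor's theorem, since the formula for $\psi^{(4)}$ blows up there; everything else is a routine derivative computation.
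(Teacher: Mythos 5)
Your proof is correct, but it takes a genuinely different route from the paper's. You recognize the right-hand side as the third-order Taylor polynomial of $\sqrt{\cdot}$ about $z_0$ and invoke Taylor's theorem with Lagrange remainder, concluding from $\psi^{(4)}<0$ that the remainder is non-positive; the endpoint $z=0$, where the Lagrange form is unavailable, is handled by the direct evaluation $T_3(-1)=\tfrac{5}{16}>0$, and the scale normalization $z=z_0(1+s)$ is a clean way to organize the computation. The paper instead substitutes $x=\sqrt z$, $a=\sqrt{z_0}$, forms the degree-six polynomial $g(x)$ equal to the left side minus the right side, and shows $g\le 0$ by computing $g''(x)=-\tfrac{15}{8}a^{-5}(x^2-a^2)^2\le 0$, so that $g'$ is decreasing with $g'(a)=0$, whence $g$ attains its maximum $g(a)=0$ at $x=a$. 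The two arguments are essentially the same fact seen from two angles (the paper's $g''\le 0$ in the $x$-variable encodes the negativity of the fourth derivative that you use), but yours is shorter and more conceptual at the cost of citing Taylor's theorem and separating out the boundary case, while the paper's is entirely self-contained elementary calculus with no case distinction at $z=0$.
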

\begin{proof}
For the convenience of proof, let $x=\sqrt z$ and $a=\sqrt {z_0}$. We consider function
\begin{equation*}
\begin{split}
g(x)={}&x-a-\tfrac{1}{2}a^{-1}(x^2-a^2)+\tfrac{1}{8}a^{-3}(x^2-a^2)^2-\tfrac{1}{16}a^{-5}(x^2-a^2)^3\\
={}&-\tfrac{1}{16}a^{-5}x^6+\tfrac{5}{16}a^{-3}x^4-\tfrac{15}{16}a^{-1}x^2+x-\tfrac{5}{16}a
\end{split}
\end{equation*}
and show that $g(x) \le 0$.
Since
\begin{equation*}
g'(x)=-\tfrac{3}{8}a^{-5}x^5+\tfrac{5}{4}a^{-3}x^3-\tfrac{15}{8}a^{-1}x+1
\end{equation*}
and
\begin{equation*}
\begin{split}
g''(x)={}&-\tfrac{15}{8}a^{-5}x^4+\tfrac{15}{4}a^{-3}x^2-\tfrac{15}{8}a^{-1}\\
={}&-\tfrac{15}{8}a^{-5}(x^4-2a^2x^2+a^4)=-\tfrac{15}{8}a^{-5}(x^2-a^2)^2\le 0,
\end{split}
\end{equation*}
we know that $g'(x)$ is monotonically decreasing. Since $g'(0)=1$ and $g'(a)=0$, we observe that $g(x)$ increases on $[0,a)$ and decreases on $[a,\infty)$. Therefore, $g(x)\le g(a)=0$.
\end{proof}

An easy calculation gives the following second-order and third-order central moments of the frequency of a neutral bit in UMDA and cGA.
\begin{lemma}
For UMDA, we have
\begin{equation*}
\begin{split}
\Var[p_t\mid p_{t-1}]={}&\tfrac{1}{\mu}p_{t-1}(1-p_{t-1}),\\
E[(p_t-E[p_t\mid p_{t-1}])^3\mid p_{t-1}]={}&\tfrac{1}{\mu^2}p_{t-1}(1-p_{t-1})(1-2p_{t-1}).
\end{split}
\end{equation*}
For cGA, we have
\begin{equation*}
\begin{split}
\Var[p_t\mid p_{t-1}]={}&\tfrac{2}{K^2}p_{t-1}(1-p_{t-1}),\\
E[(p_t-E[p_t\mid p_{t-1}])^3\mid p_{t-1}]={}&0.
\end{split}
\end{equation*}
\label{lem:moments}
\end{lemma}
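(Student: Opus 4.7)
The plan is to compute both moments directly from the one-step transition laws established earlier in Section 4.4.1, since these are the defining distributions of $p_t$ conditional on $p_{t-1}$.

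For UMDA, I would use that conditional on $p_{t-1}$, the random variables $X_{1,1}^t,\dots,X_{\mu,1}^t$ are i.i.d.\ $\mathrm{Bernoulli}(p_{t-1})$, so $S:=\sum_{i=1}^\mu X_{i,1}^t$ follows $\mathrm{Bin}(\mu,p_{t-1})$ and $p_t=S/\mu$. The conditional mean is $E[p_t\mid p_{t-1}]=p_{t-1}$. From $\mathrm{Var}[S]=\mu p_{t-1}(1-p_{t-1})$ the first claim follows on dividing by $\mu^2$. For the third central moment I would invoke the standard formula $E[(S-\mu p_{t-1})^3]=\mu p_{t-1}(1-p_{t-1})(1-2p_{t-1})$ (which follows, e.g., by expanding $E[(\sum_i (X_{i,1}^t-p_{t-1}))^3]$ using independence: cross terms vanish because each centered Bernoulli has zero mean, leaving $\mu\cdot E[(X_{1,1}^t-p_{t-1})^3]=\mu p_{t-1}(1-p_{t-1})(1-2p_{t-1})$). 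Dividing by $\mu^3$ yields the stated expression.

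For cGA, I would simply list the three possible values of $p_t-p_{t-1}$ together with their probabilities as already spelled out in the definition of the cGA update: $+1/K$ with probability $p_{t-1}(1-p_{t-1})$, $-1/K$ with the same probability, and $0$ with probability $1-2p_{t-1}(1-p_{t-1})$. This immediately gives $E[p_t\mid p_{t-1}]=p_{t-1}$, and then
\begin{equation*}
\mathrm{Var}[p_t\mid p_{t-1}]=E[(p_t-p_{t-1})^2\mid p_{t-1}]=2\cdot\tfrac{1}{K^2}\cdot p_{t-1}(1-p_{t-1}).
\end{equation*}
For the third central moment, the two nonzero outcomes are symmetric around $0$ (equal probability, opposite signs, equal cubes in absolute value), so
\begin{equation*}
E[(p_t-p_{t-1})^3\mid p_{t-1}]=\tfrac{1}{K^3}p_{t-1}(1-p_{t-1})-\tfrac{1}{K^3}p_{t-1}(1-p_{t-1})=0.
\end{equation*}

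Since both calculations are elementary one-step computations with independent Bernoullis and a three-point distribution respectively, there is no real obstacle in the proof; the only care to take is to verify the sign pattern $1-2p_{t-1}$ in the UMDA third moment (which disappears at $p_{t-1}=1/2$, consistent with the symmetry of a symmetric binomial) and to make explicit that the cGA cubes cancel by symmetry. The lemma is then a direct consequence of these two short computations.
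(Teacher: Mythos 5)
Your proof is correct and is exactly the ``easy calculation'' the paper alludes to without writing out: the paper states the lemma with no proof, relying on the one-step transition laws for $p_t$ given in the Notation subsection, and your direct computation of the binomial variance and third central moment for UMDA and of the symmetric three-point distribution for cGA fills in precisely those details. Nothing is missing and no different route is taken.
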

%

We are now ready to prove the following upper bound for the hitting time of the absorbing states of the frequency of a neutral bit.
\begin{theorem}
Consider using an $n$-Bernoulli-$\lambda$-EDA to optimize some function $f$ with a neutral bit. Let $T$ denote the first time the frequency of the neutral bit is absorbed in state 0 or 1. For UMDA without margins, we have $E[T]=O({\mu})$ regardless of $\lambda$. For cGA, we have $E[T]=O(K^2)$.
\label{thm:upper}
\end{theorem}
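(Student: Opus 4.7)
The plan is to use the potential $\phi(p_t) := \sqrt{p_t(1-p_t)}$, which vanishes exactly on the absorbing set $\{0,1\}$, is bounded above by $1/2$, and takes a minimum positive value of order $1/\sqrt{\mu}$ for UMDA (resp.\ $1/\sqrt{K}$ for cGA). The strategy is to combine Lemma~\ref{lem:z} with Lemma~\ref{lem:moments} to derive a drift inequality on $\phi$ and then convert it into an additive-drift statement sharp enough to give $O(\mu)$ (resp.\ $O(K^2)$) without any logarithmic factor.

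First I would apply Lemma~\ref{lem:z} pointwise with $z_0 = p_{t-1}(1-p_{t-1})$ and $z = p_t(1-p_t)$ and pass to conditional expectations, obtaining
\[
E[\phi(p_t)\mid p_{t-1}] \le \phi(p_{t-1}) + \frac{E[z-z_0\mid p_{t-1}]}{2\phi(p_{t-1})} - \frac{E[(z-z_0)^2\mid p_{t-1}]}{8\phi(p_{t-1})^3} + \frac{E[(z-z_0)^3\mid p_{t-1}]}{16\phi(p_{t-1})^5}.
\]
Using the identity $z-z_0 = (1-2p_{t-1})\Delta - \Delta^2$ with $\Delta = p_t - p_{t-1}$ and substituting the second and third central moments from Lemma~\ref{lem:moments} (supplemented for UMDA by the standard fourth central moment of the binomial, and simplified for cGA by $|\Delta|\le 1/K$ together with $E[\Delta^3\mid p_{t-1}] = 0$ by symmetry), I would compute each of these conditional expectations. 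The first-order term contributes $-\phi(p_{t-1})/(2\mu)$ for UMDA and $-\phi(p_{t-1})/K^2$ for cGA. The second-order term is always nonpositive and contributes an additional decrease of order $(1-4\phi(p_{t-1})^2)/(\phi(p_{t-1})\mu)$ for UMDA (analogously with $K^2$ for cGA), which becomes the dominant contribution as soon as $\phi(p_{t-1})$ drops below a universal constant.

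Combining these pieces yields a drift lower bound $E[\phi(p_{t-1}) - \phi(p_t)\mid p_{t-1}] \ge h(\phi(p_{t-1}))$ with $h(\phi) \ge \tfrac{\phi}{2\mu} + \tfrac{1-4\phi^2}{8\phi\mu}$ for UMDA (and the analogue for cGA). Defining the transformed potential $F(\phi) := \int_\phi^{1/2} d\psi/h(\psi)$ and a first-order Taylor step on $F$ converts this into the additive drift $E[F(\phi_t) - F(\phi_{t-1})\mid p_{t-1}] \ge 1$, so the additive drift theorem gives $E[T] \le F(\phi_{\min})$. Splitting the defining integral of $F$ at the cross-over point $\phi^* = 1/(2\sqrt{2})$ of the two branches of $h$ and evaluating each piece in closed form (in the boundary-regime piece via the substitution $u = 1-4\psi^2$) gives $F(\phi_{\min}) = 2\mu\log 2 + o(\mu) = O(\mu)$ for UMDA, and analogously $O(K^2)$ for cGA; notably, no logarithmic factor in $\mu$ (resp.\ $K$) appears because the cross-over point is bounded away from both $0$ and $1/2$.

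The hard part will be controlling the positive third-order term $E[(z-z_0)^3]/(16\phi(p_{t-1})^5)$ for UMDA---the cGA case is easier because the sign parity of the polynomial combined with vanishing odd moments of $\Delta$ forces $E[(z-z_0)^3] \le 0$ there---together with the second-order Taylor remainder for the transformed potential $F$, both of which blow up when $\phi(p_{t-1})$ is close to $\phi_{\min}$. For this I would show from the explicit moment formulas that $|E[(z-z_0)^3]|$ is bounded by a universal constant times $\phi(p_{t-1})^2 \cdot E[(z-z_0)^2\mid p_{t-1}]$, so the third-order correction is dominated by at most a constant fraction of the second-order contribution on every grid value strictly larger than $\phi_{\min}$, which at worst rescales the constant in $h$. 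Since the analysis only needs to be tight on the finitely many discrete grid values of $\phi$, the lowest one or two (those of order $\phi_{\min}$) are handled by a direct combinatorial observation: at $p_{t-1} = O(1/\mu)$ in UMDA the probability of sampling the all-zero individual in the next iteration is $\Omega(1)$, and an elementary gambler's-ruin computation on the grid $\{0, 1/K, \ldots, c/K\}$ gives the analogous $O(K^2)$ bound for cGA, so a geometric-restart argument adds only $O(\mu)$ (resp.\ $O(K^2)$) to the total expected hitting time.
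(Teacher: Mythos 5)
Your core mechanism is the same as the paper's: expand a square-root potential via Lemma~\ref{lem:z}, substitute the central moments from Lemma~\ref{lem:moments}, observe that the troublesome positive third-moment term is at most a constant fraction of the negative second-moment term because $p_{t-1}>0$ forces $p_{t-1}\ge 1/\mu$, and finish with a drift theorem. The differences are in the choice of potential and in how you convert the drift into a hitting time. The paper works with $Y_t=\sqrt{\min\{p_t,1-p_t\}}$, for which the drift computation needs only the second and third central moments of $p_t$ and yields a \emph{uniform} lower bound of order $1/\mu$ (resp.\ $1/K^2$), so the plain additive drift theorem of~\cite{HeY01} applied to $Y_0=\sqrt{1/2}$ immediately gives $O(\mu)$ (resp.\ $O(K^2)$). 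Your choice $\phi=\sqrt{p(1-p)}$ avoids the case distinction but makes the moment computations heavier: $(z-z_0)^k$ with $z-z_0=(1-2p)\Delta-\Delta^2$ involves $E[\Delta^4]$ up to $E[\Delta^6]$, which Lemma~\ref{lem:moments} does not provide and which you would have to bound separately.

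The one step I would not accept as written is the variable-drift conversion. Your own bound $h(\phi)\ge \tfrac{\phi}{2\mu}+\tfrac{1-4\phi^2}{8\phi\mu}$ is a \emph{decreasing} function of $\phi$ on $(0,\tfrac12]$ (its derivative is $-\tfrac{1}{8\phi^2\mu}$), so the drift grows as the process approaches the target. The Johannsen-type argument behind ``a first-order Taylor step on $F$ gives additive drift $\ge 1$'' requires $h$ to be non-decreasing away from the target (equivalently $F$ convex); with $h$ decreasing, the pointwise inequality $F(\phi_{t-1})-F(\phi_t)\ge (\phi_{t-1}-\phi_t)/h(\phi_{t-1})$ fails on steps that move toward the boundary, and the concavity of $F$ gives the Taylor bound in the wrong direction. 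Fortunately the detour is unnecessary: since $h$ is decreasing, it attains its minimum at $\phi=\tfrac12$, so $h(\phi)\ge\tfrac{1}{4\mu}$ uniformly (up to the constant-factor loss from the third-order term), and the ordinary additive drift theorem applied directly to $\phi$ already yields $E[T]\le 2\phi_0\cdot 4\mu=O(\mu)$, and likewise $O(K^2)$ for cGA. This also makes the split integral, the gambler's-ruin analysis near the boundary, and the geometric-restart argument superfluous. I would therefore either switch to the paper's potential or keep yours but replace the integral transformation by the uniform additive drift bound.
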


\begin{proof}
Let $q_t=\min\{p_t,1-p_t\}$ and $Y_t=\sqrt{q_t}$. Then $T=\min\{t\mid q_t=0\}$. Due to the symmetry, we just discuss the case that $q_{t-1}=p_{t-1}$. Obviously, $p_{t-1}\le \tfrac{1}{2}$ in this case. Let us assume that $p_{t-1} > 0$. Using Lemma \ref{lem:z} with $z=p_t$ and $z_0=p_{t-1}$, we have
\begin{equation*}
\begin{split}
E[\sqrt{p_t}\mid p_{t-1}]
\le{}& E[Y_{t-1}\mid p_{t-1}]+\tfrac{1}{2}p_{t-1}^{-\frac{1}{2}}E[p_t-p_{t-1}\mid p_{t-1}]\\
{}&-\tfrac{1}{8}p_{t-1}^{-\frac{3}{2}}E [(p_t-p_{t-1})^2\mid p_{t-1}]
+ \tfrac{1}{16}p_{t-1}^{-\frac{5}{2}}E [(p_t-p_{t-1})^3\mid p_{t-1} ]
\end{split}
\end{equation*}
and thus
\begin{equation}
\begin{split}
E[Y_{t-1}{}&-\sqrt{p_t}\mid Y_{t-1}]
\ge-\tfrac{1}{2}p_{t-1}^{-\frac{1}{2}}E[p_t-p_{t-1}\mid p_{t-1}]\\
+{}&\tfrac{1}{8}p_{t-1}^{-\frac{3}{2}}E [(p_t-p_{t-1})^2\mid p_{t-1} ]
-\tfrac{1}{16}p_{t-1}^{-\frac{5}{2}}E [(p_t-p_{t-1})^3\mid p_{t-1} ].
\end{split}
\label{eq:drift2}
\end{equation}
Via Lemma~\ref{lem:moments}, we have for UMDA
\begin{equation*}
\begin{split}
E[Y_{t-1}{}&-\sqrt{p_t}\mid Y_{t-1}]\\
\ge{}&\tfrac{1}{8}p_{t-1}^{-\frac{3}{2}}\tfrac{1}{\mu}p_{t-1}(1-p_{t-1})-
\tfrac{1}{16}p_{t-1}^{-\frac{5}{2}}\tfrac{1}{\mu^2}p_{t-1}(1-p_{t-1})(1-2p_{t-1})\\
={}&\tfrac{1}{16\mu}p_{t-1}^{-\frac{1}{2}}(1-p_{t-1}) (2-\tfrac{1}{\mu p_{t-1}}(1-2p_{t-1}) )\\
\ge{}&\tfrac{1}{16\mu}p_{t-1}^{-\frac{1}{2}}(1-p_{t-1}),
\end{split}
\end{equation*}
where the last estimate follows from the fact that $p_{t-1} > 0$ implies $p_{t-1} \ge \frac 1 \mu$. Since $p_{t-1}\le \tfrac{1}{2}$, we have $p_{t-1}^{-\frac{1}{2}}(1-p_{t-1})\ge \tfrac{\sqrt 2}{2}$. Hence $E[Y_{t-1}-\sqrt{p_t}\mid Y_{t-1}]\ge \tfrac{\sqrt2}{32\mu}$. Using $q_t=\min\{p_t,1-p_t\}$, we have
\begin{equation*}
E[Y_{t-1}-Y_t\mid Y_{t-1}]\ge E[Y_{t-1}-\sqrt{p_t}\mid Y_{t-1}]\ge \tfrac{\sqrt2}{32\mu}.
\end{equation*}

Via the additive drift theorem \cite{HeY01} and $Y_0=\sqrt{\tfrac{1}{2}}$, we know that the expected time of $Y$-process hitting zero is at most $Y_0 / \frac{\sqrt2}{32\mu}=O({\mu})$.

Similarly, for cGA, with Lemma~\ref{lem:moments} equation~\eqref{eq:drift2} becomes
\begin{equation*}
\begin{split}
E[Y_{t-1}-\sqrt{p_t}\mid Y_{t-1}]
\ge\tfrac{1}{8}p_{t-1}^{-\frac{3}{2}}\tfrac{2}{K^2}p_{t-1}(1-p_{t-1})
=\tfrac{1}{4}p_{t-1}^{-\frac{1}{2}}\tfrac{1-p_{t-1}}{K^2}
\ge \tfrac{1}{4}\tfrac{\sqrt2}{2}\tfrac{1}{K^2}=\tfrac{\sqrt 2}{8K^2}.
\end{split}
\end{equation*}
Hence,
\begin{equation*}
E[Y_{t-1}-Y_t\mid Y_{t-1}]\ge E[Y_{t-1}-\sqrt{p_t}\mid Y_{t-1}]\ge \tfrac{\sqrt 2}{8K^2},
\end{equation*}
and via the additive drift theorem \cite{HeY01} and $Y_0=\sqrt{\tfrac{1}{2}}$, we conclude that the expected time of the $Y$-process reaching zero is at most $Y_0 / \tfrac{\sqrt 2}{8K^2} = 4K^2$. 
\end{proof}

\section{Behavior of Dominant Bits}
\label{sec:dominant}
One particular strength of BDE, as we will see in this section, is that it optimizes the most important decision variables quickly. We shall prove rigorously that BDE lets the frequency of a dominant bit in the population grow to the optimal bit value in time logarithmic in the population size. Taking the LeadingOnes and BinaryValue functions as examples, we demonstrate that BDE is also able to find and optimize a sequence of bits having the property that they become dominant one after the other. Due to the difficulties of analyzing full runs of BDE, for these results we again need to resort to iBDE or to at least take the assumption that the frequencies of bits that are momentarily neutral do not leave the middle range. 

The LeadingOnes and BinaryValue results suggest that BDE is able to optimize in a greedy fashion the most profitable decision variables first. This appears to be a valuable property when not necessarily aiming at finding the absolute optimum, but when rather aiming at finding a reasonably good solution in reasonable time. We shall not make this formal here, but note that the problem of finding approximate solutions has been formalized via notions like fixed-budget computation~\cite{JansenZ14} or the time-to-target runtimes $T_{A,f}(a)$ defined in~\cite[Section~3]{DoerrJWZ13}.

\subsection{Convergence Time of a Dominant Bit}\label{ssec:convdominant}

We take the dominant bit as example to discuss the behavior on the most important decision variables. A \emph{dominant bit} is a bit such that the fitness is always better if the value of the bit is one than if the bit value is zero, regardless of the values of the other bits. In the following, let us assume that we optimize some $D$-dimensional function $f$ via BDE and that $f$ is such that the first bit is dominant, that is, we have $f(1,x_2,\dots,x_D)>f(0,x_2,\dots,x_D)$ for all $x_2,\dots,x_D\in\{0,1\}$. Theorem~\ref{thm:domiTime} further below shows that the frequency of the dominant bit converges to the optimal value in time logarithmic in the population size. 

To ease reading the main proof, we first show separately two technical results on the behavior of the dominant bit. Note that by the parent-offspring selection and the definition of dominant bits, an individual having a one in the dominant bit can never be replaced by an individual having a zero in the dominant bit. Therefore, the main question is how difficult it is to replace a zero by a one in the dominant bit. This is what we analyze in the following lemma.

\begin{lemma}
Consider using BDE with population size $N$ to optimize a $D$-dimensional function $f$ with the first bit being dominant. Let $Z_g$ denote the number of zeros in the first bit position among all individuals of generation $g$. Let $X_i^g$ be an individual with first bit equal to $0$. Then the probability for changing this bit value to $1$ is 
\begin{align*}
\Pr[X_{i,1}^{g+1}=1 \mid X_{i,1}^g=0]=\frac{A_1Z_g^3+A_2Z_g^2+A_3Z_g+A_4}{(N-1)(N-2)(N-3)},
\end{align*}
where
\begin{equation*}
\begin{split}
A_1&=-4FC, \\
A_2&=(6N+6)FC,\\
A_3&=-C((1+2F)N^2+(8F-5)N+6+2F),\\
A_4&=CN^3+(2F-5)CN^2+(6+2F)CN.
\end{split}
\end{equation*}
\label{lem:conp}
\end{lemma}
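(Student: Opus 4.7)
The plan is to reduce the claim to the probability that the trial vector's first bit equals one, and then mimic the case-by-case computation from Lemma~\ref{lem:expectation}, only re-parametrised in terms of $Z_g$ rather than $Y_g$. My first step is to invoke the dominance property: whenever $U_{i,1}^g = 1$ while the parent has $X_{i,1}^g = 0$, the value $f(U_i^g)$ exceeds $f(X_i^g)$, so selection picks $U_i^g$ and hence $X_{i,1}^{g+1} = 1$. Conversely, if $U_{i,1}^g = 0$, the first bit stays at $0$ irrespective of which of $X_i^g$ and $U_i^g$ survives. This identifies the target probability with $\Pr[U_{i,1}^g = 1 \mid X_{i,1}^g = 0]$.

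Next I would decompose this probability along the binomial crossover: only the branch $\crand_1 \le C$ can produce a one in the first position (the other branch copies $X_{i,1}^g = 0$), so the probability equals $C \cdot \Pr[V_{i,1}^g = 1 \mid X_{i,1}^g = 0]$. To evaluate $\Pr[V_{i,1}^g = 1]$ I would enumerate the three mutually exclusive cases already used in the Needle analysis: (i) $X_{r_1,1}^g = 1$ and $X_{r_2,1}^g = X_{r_3,1}^g$; (ii) $X_{r_1,1}^g = 1$ and $X_{r_2,1}^g \neq X_{r_3,1}^g$ with $\mrand_1 \ge F$ (no flip); (iii) $X_{r_1,1}^g = 0$ and $X_{r_2,1}^g \ne X_{r_3,1}^g$ with $\mrand_1 < F$ (flip from $0$ to $1$). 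Conditional on $X_{i,1}^g = 0$, among the remaining $N-1$ individuals there are $N - Z_g$ ones and $Z_g - 1$ zeros in the first position, so each case contributes a product of three falling-factorial counts over $(N-1)(N-2)(N-3)$, strictly analogous to the formula derived inside the proof of Lemma~\ref{lem:expectation} but with the roles of zeros and ones interchanged.

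Collecting the three contributions and multiplying by $C$ gives a cubic polynomial in $Z_g$ whose four coefficients I would then identify with $A_1, A_2, A_3, A_4$. I expect the main obstacle to be purely bookkeeping: the expansion is long, and the $N$-dependence of $A_2$ together with the coupled $F$- and $N$-dependencies in $A_3$ and $A_4$ leave ample room for sign and factor slips. Before trusting the final expansion I would verify two sanity checks that the claimed formula must satisfy: at $Z_g = 1$ (only $X_i^g$ has a zero) it should collapse to $C$, since the mutant is deterministically $1$ and only the crossover coin is random; and at $Z_g = N$ (no ones in the population) it should vanish, since no one-bit can be produced. Both checks reduce to simple evaluations of the claimed cubic numerator and either succeed, confirming the calculation, or pinpoint an error.
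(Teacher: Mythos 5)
Your proposal is correct and follows essentially the same route as the paper's proof: reduce to $\Pr[U_{i,1}^g=1]$ via the dominance/selection argument, factor out the crossover probability $C$, enumerate the same three mutation cases, and count with $N-Z_g$ ones and $Z_g-1$ zeros among the $N-1$ eligible individuals. Your two sanity checks are a sensible addition (and both do hold: the numerator equals $C(N-1)(N-2)(N-3)$ at $Z_g=1$ and vanishes at $Z_g=N$), but the argument itself matches the paper's.
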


\begin{proof}
%
%
In the notation of Algorithm~\ref{alg:originalBDE}, we observe that to have $X_{i,1}^{g+1}=1$, $U_{i,1}^g$ must be 1 (since $X_{i,1}^{g+1}\in\{U_{i,1}^g,X_{i,1}^g\}$ and $X_{i,1}^g=0$). In order to have $U_{i,1}^g=1$, $U_{i,1}^g$ must stem from $V_{i,1}^g$, and $V_{i,1}^g$ must be 1. Hence, $X_{i,1}^{g+1}=1$ if and only if one of the following cases holds.
\begin{itemize}
\item $X_{r_1,1}^g=1, X_{r_2,1}^g = X_{r_3,1}^g, \crand_1 \le C$.
\item $X_{r_1,1}^g=1, X_{r_2,1}^g \ne X_{r_3,1}^g, \mrand_1\ge F, \crand_1 \le C$.
\item $X_{r_1,1}^g=0, X_{r_2,1}^g \ne X_{r_3,1}^g, \mrand_1< F, \crand_1 \le C$.
\end{itemize}
Recalling that $Z_g$ is the number of zeros in the first bit among all individuals of the generation $g$, we compute
\begin{equation*}
\begin{split}
\Pr[X_{i,1}^{g+1}{}{}&=1 \mid X_{i,1}^g=0]\\
={}&\frac{(N-Z_g) ((N-Z_g-1)(N-Z_g-2)+(Z_g-1)(Z_g-2) )}{(N-1)(N-2)(N-3)}C\\
{}&+\frac{(N-Z_g)(Z_g-1)(N-Z_g-1)}{(N-1)(N-2)(N-3)}2(1-F)C\\
{}&+\frac{(Z_g-1)(N-Z_g)(Z_g-2)}{(N-1)(N-2)(N-3)}2FC\\
={}&\frac{A_1Z_g^3+A_2Z_g^2+A_3Z_g+A_4}{(N-1)(N-2)(N-3)},
\end{split}
\end{equation*}
with $A_1,\dots,A_4$ as in the statement of the lemma.
\end{proof}

To gain a better understanding of the quantity $\Pr[X_{i,1}^{g+1}=1 \mid X_{i,1}^g=0]$ just computed, let us define (for implicitly given $F$ and $C$) the function $S_N : [1,N]\mapsto [0,\infty)$ by
\begin{equation*}
S_N(z) = \frac{A_1z^3+A_2z^2+A_3z+A_4}{(N-1)(N-2)(N-3)}
\end{equation*}
with $A_1,\dots,A_4$ as in Lemma~\ref{lem:conp}, 
so that $\Pr[X_{i,1}^{g+1}=1 \mid X_{i,1}^g=0]=S_N(Z_g)$. Going from absolute numbers to relative numbers, we also define $s(x)=S_N(Nx)$ for all $x \in [\tfrac{1}{N},1]$. Figure~\ref{fig:sx} visualizes this function for two sets of parameter values.

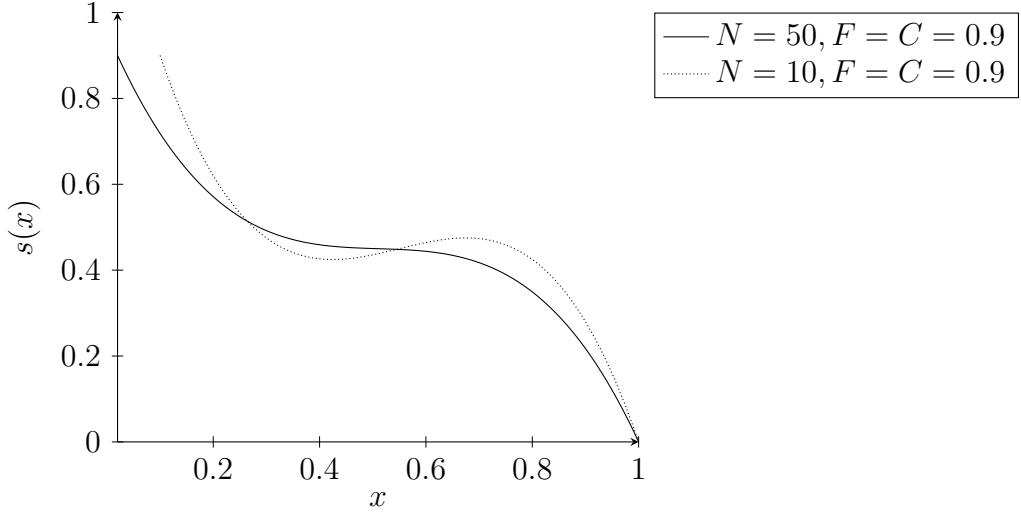
\begin{figure}
\centering
\begin{tikzpicture}
\begin{axis}[
    axis lines = left,
    xlabel = $x$,
    ylabel = {$s(x)$},
    ymin = 0,
    ymax=1,
    legend style={
      cells={anchor=east},
      legend pos=outer north east,},
    ]

\addplot [
    domain=0.02:1, 
    samples=100, 
    color=black,
]
{(-4*0.9*0.9*50^3*x^3+(6*50+6)*0.9*0.9*50^2*x^2-0.9*((1+2*0.9)*50^2+(8*0.9-5)*50+6+2*0.9)*50*x+0.9*50^3+(2*0.9-5)*0.9*50^2+(6+2*0.9)*0.9*50)/(49*48*47)};
\addlegendentry[right]{$N=50,F=C=0.9$}
 
\addplot [densely dotted,
    domain=0.1:1, 
    samples=100, 
    color=black,
    ]
{(-4*0.9*0.9*10^3*x^3+(6*10+6)*0.9*0.9*10^2*x^2-0.9*((1+2*0.9)*10^2+(8*0.9-5)*10+6+2*0.9)*10*x+0.9*10^3+(2*0.9-5)*0.9*10^2+(6+2*0.9)*0.9*10)/(9*8*7)};
\addlegendentry[right]{$N=10,F=C=0.9$} 
\end{axis}
\end{tikzpicture}
\caption{A visualization of $s(x):=S_N(Nx)$, that is, the probability for changing the first bit value to 1 when the population has a ratio of $x$ zeros in the first bit position. For $N$ not too small, this function is monotonically decreasing.}\label{fig:sx}
\end{figure}

In the following Lemma~\ref{lem:monode}, we show that the probability for changing the first bit value from 0 to 1 is monotonically decreasing with respect to $Z_g$ when $N$ is at least some constant (depending on $F$).
\begin{lemma}
Let $F, C \in (0,1)$ and $N\in [\max\{\frac{5-2F}{1-F},11\}, \infty ) \cap \N$. 
Then the following statements hold.
\begin{itemize}
\item $S_N(z)$ is monotonically decreasing.
\item For any $a\in(0,1)$, we have $S_N(aN)\ge C(1-a)(\frac{1}{2}-\frac{F}{8})$.
\end{itemize}
\label{lem:monode}
\end{lemma}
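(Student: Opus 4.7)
The plan is to prove the two statements separately. For the monotonicity claim, I would study the numerator $g(z) := A_1 z^3 + A_2 z^2 + A_3 z + A_4$ directly, since $(N-1)(N-2)(N-3)>0$ implies $S_N$ and $g$ share their monotonicity. A one-line calculation gives $g''(z) = 12FC\bigl[(N+1) - 2z\bigr]$, so $g'$ is concave with its maximum at $z_0 := (N+1)/2$. Evaluating $g'(z_0)$ and simplifying should yield
\[ g'(z_0) = -C\bigl[(1-F)N^2 - (5-2F)N + (6-F)\bigr]. \]
The hypothesis $N \ge (5-2F)/(1-F)$ is tailored precisely so that $(1-F)N^2 - (5-2F)N \ge 0$, and combined with $6-F>0$ this forces the bracket to be positive; hence $g'(z) \le g'(z_0) < 0$ everywhere and $S_N$ is strictly decreasing.

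For the lower bound, the crucial algebraic observation I would exploit is that the numerator of $S_N(aN)$, viewed as a cubic in $a$, vanishes at $a = 1$; this is intuitively clear because if $Z_g = N$ every individual is zero and no mutant can sample a one. Factoring out $(1-a)$ gives
\[ S_N(aN) = \frac{C(1-a)\bigl[(1 - 2Fa + 4Fa^2)N^3 - (5 - 2F + 6Fa)N^2 + 2(3+F)N\bigr]}{(N-1)(N-2)(N-3)}. \]
Dividing the target inequality by $C(1-a)/8$ and multiplying out $(N-1)(N-2)(N-3) = N^3-6N^2+11N-6$ reduces the claim to $E(a,F,N) \ge 0$, where
\[ E(a,F,N) := \bigl[4-F+32F(a-\tfrac14)^2\bigr]N^3 - (16-10F+48Fa)N^2 + (4+27F)N + 6(4-F). \]
Since $\partial^2 E/\partial a^2 = 64FN^3 > 0$, $E$ is strictly convex in $a$ with unique minimizer $a^* = (N+3)/(4N)$, which belongs to $(0,1)$ as soon as $N \ge 3$. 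Substituting $a^*$ collapses the cross terms to leave
\[ E(a^*, F, N) = (4-F)N^3 - (16+2F)N^2 + (4+9F)N + 6(4-F), \]
and using only $F \in (0,1)$ I would bound this below by $3N^3 - 18N^2 + 4N + 18 = 3N^2(N-6) + 4N + 18$, manifestly positive for $N \ge 6$ and in particular for the assumed range $N \ge 11$.

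The main obstacle is algebraic bookkeeping: verifying the $(1-a)$ factorization and then tracking the cancellations produced by substituting $a=a^*$. Both steps are routine once set up correctly, and a helpful consistency check throughout is the limit $a \to 1$, in which $S_N(aN)$ and the claimed lower bound both vanish --- the $(1-a)$ factor sitting out front makes this transparent. Everything beyond these manipulations is elementary convexity and polynomial analysis.
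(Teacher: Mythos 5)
Your proof is correct. The monotonicity argument is the same as the paper's: both compute $g''(z)=12FC\left((N+1)-2z\right)$, locate the maximum of $g'$ at $z_0=(N+1)/2$, and use $N\ge\frac{5-2F}{1-F}$ together with $6-F>0$ to conclude $g'(z)\le g'(z_0)<0$. For the lower bound the two proofs share the same skeleton (factor $C(1-a)$ out of the numerator of $S_N(aN)$, then reduce to a polynomial positivity claim) but finish differently. The paper inserts an intermediate bound: it shows the remaining fraction is at least $\tfrac12-Fa+2Fa^2=\tfrac12-\tfrac F8+2F\left(a-\tfrac14\right)^2$ by proving the difference of numerators is positive (grouping terms into completed squares and using $N\ge 11$ to get a final bound of $\tfrac{31}{8}N>0$), and only then drops the $2F(a-\tfrac14)^2$ term. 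You instead clear denominators against the final target $\tfrac12-\tfrac F8$ directly, obtain a single cubic $E(a,F,N)$, and minimize it exactly over $a$ via convexity at $a^*=(N+3)/(4N)$; your coefficient computations for $E$ and for $E(a^*,F,N)=(4-F)N^3-(16+2F)N^2+(4+9F)N+6(4-F)$ check out, and the crude bound $3N^2(N-6)+4N+18>0$ closes the argument already for $N\ge 6$, slightly better than the paper's $N\ge 11$ (and since $E$ is globally convex in $a$, you do not even need $a^*\in(0,1)$). The paper's route yields the marginally stronger intermediate inequality $S_N(aN)\ge C(1-a)(\tfrac12-Fa+2Fa^2)$; yours is a cleaner one-shot optimization with a mildly weaker hypothesis on $N$.
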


\begin{proof}
Let $g(z)=A_1z^3+A_2z^2+A_3z+A_4$. It is easy to see that $g(z)$ and $S_N(z)$ have the same monotonicity. We compute
\begin{equation*}
\begin{split}
g'(z)&=3A_1z^2+2A_2z+A_3,\\
g''(z)&=6A_1z+2A_2=-24FCz+12(N+1)FC.
\end{split}
\end{equation*}
Since $FC>0$, we know that $g''(z) \ge 0$ for $z \le \frac{N+1}{2}$ and $g''(z) \le 0$ for $z \ge \frac{N+1}{2}$. Hence $g'(z)$ has a unique maximum at $\frac{N+1}{2}$. Therefore,
\begin{align*}
g'(z)\le{}& g'(\tfrac{N+1}{2})\\
={}&3(-4FC)\tfrac{(N+1)^2}{4}+2(6N+6)FC\tfrac{N+1}{2}\\
{}&-C((1+2F)N^2+(8F-5)N+6+2F)\\
={}&C(-(1-F)N^2+(5-2F)N+F-6).
\end{align*}
With $F<1$ and $N\ge\frac{5-2F}{1-F}$, we estimate
\begin{align*}
-(1-F)N^2&+(5-2F)N+F-6\\
& \le -(5-2F)N+(5-2F)N+F-6
=F-6<0.
\end{align*} 
Hence $g'(z)$ is negative and thus $g(z)$ and $S_N(z)$ are monotonically decreasing.

When $z=aN$,
\begin{equation*}
S_N(aN)=C(1-a)\frac{(1-2Fa+4Fa^2)N^3+(2F-5-6Fa)N^2+(2F+6)N}{(N-1)(N-2)(N-3)}.
\end{equation*}
We compute
\begin{equation}
\begin{split}
(1-2F{}&a+4Fa^2)N^3+(2F-5-6Fa)N^2+(2F+6)N\\
{}&-(\tfrac{1}{2}-Fa+2Fa^2)(N-1)(N-2)(N-3)\\
={}&(\tfrac{1}{2}-Fa+2Fa^2)N^3+(2F-2-12Fa+12Fa^2)N^2\\
{}&+(\tfrac{1}{2}+2F+11Fa-22Fa^2)N+6(\tfrac{1}{2}-Fa+2Fa^2)\\
={}&(\tfrac{1}{2}-\tfrac{F}{8}+2F(a-\tfrac{1}{4})^2)N^3+(-F-2+12F(a-\tfrac{1}{2})^2)N^2\\
{}&+(\tfrac{1}{2}+\tfrac{27}{8}F-22F(a-\tfrac{1}{4})^2)N+6(\tfrac{1}{2}-\tfrac{F}{8}+2F(a-\tfrac{1}{4})^2)\\
\ge{}&(\tfrac{1}{2}-\tfrac{F}{8})N^3+(-F-2)N^2+(\tfrac{1}{2}+\tfrac{27}{8}F-22F(1-\tfrac{1}{4})^2)N\\
={}&(\tfrac{1}{2}-\tfrac{F}{8})N^3+(-F-2)N^2+(\tfrac{1}{2}-9F)N\\
\ge{}&\tfrac{3}{8}N^3-3N^2-\tfrac{17}{2}N=\tfrac{N}{8}(3(N-4)^2-116)\\
\ge{}&\tfrac{N}{8}(3(11-4)^2-116)=\tfrac{31}{8}N>0,
\end{split}
\label{eq:comp}
\end{equation}
where the last inequality in (\ref{eq:comp}) uses the fact $N \ge 11$. Consequently, 
\begin{align*}
S_N(aN)\ge{}& C(1-a)\frac{(\tfrac{1}{2}-Fa+2Fa^2)(N-1)(N-2)(N-3)}{(N-1)(N-2)(N-3)}\\
={}&C(1-a)(\tfrac{1}{2}-Fa+2Fa^2)=C(1-a)(\tfrac{1}{2}-\tfrac{F}{8}+2F(a-\tfrac{1}{4})^2)\\
\ge{}&C(1-a)(\tfrac{1}{2}-\tfrac{F}{8}).
\end{align*}
\end{proof}

We are now in the position to show that with high probability the whole population has a one in the dominant bit after $O(\ln N)$ iterations. 

\begin{theorem}
Consider using BDE with population size $N\ge\frac{5-2F}{1-F}$ to optimize a $D$-dimensional function $f$ with the first bit being dominant. Let $Z_g$ denote the number of zeros in the first bit position among all individuals of generation $g$. Let $T:=\min\{g \mid Z_g=0\}$ denote the convergence time of the first bit. 

Then there is a constant $\delta\in(0,1)$ depending on $F$ and $C$ such that conditional on $Z_0 \le 0.7 N$, which is an event that holds with probability $1 - \exp(-2N/25)$, the following statements hold.
\begin{itemize}
\item $E[T \mid Z_0] \le \tfrac{\ln(Z_0)+1}{\delta}\le \tfrac{\ln(N) +1}{\delta}$.
\item $\Pr[T\ge \tfrac{\ln(N)+r}{\delta}] \le \Pr[T\ge \tfrac{\ln(Z_0)+r}{\delta} \mid Z_0] \le e^{-r}, \forall r>0$.
\end{itemize}
\label{thm:domiTime}
\end{theorem}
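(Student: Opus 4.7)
My plan is to apply the multiplicative drift theorem to the random sequence $(Z_g)_{g \ge 0}$. The two ingredients are monotonicity of $Z_g$ (so that once $Z_0 \le 0.7N$, the bound $Z_g \le 0.7N$ is preserved for all $g$) and a uniform constant lower bound on the one-step multiplicative drift coming from Lemma~\ref{lem:conp} and the second item of Lemma~\ref{lem:monode}.

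First I control the initial value. Since each $X_{i,1}^0$ is an independent fair coin flip, $Z_0$ is binomial with parameters $N$ and $1/2$, so the additive Chernoff bound (Theorem~1.11 in~\cite{Doerr11bookchapter}) gives
\begin{equation*}
\Pr[Z_0 \ge 0.7 N] \le \exp(-2 \cdot 0.04 \cdot N) = \exp(-2N/25),
\end{equation*}
matching the probability stated in the theorem. All subsequent steps are carried out conditionally on the event $Z_0 \le 0.7N$.

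Next I compute the drift. As argued in the paragraph preceding the theorem, the dominance of the first bit combined with parent-offspring selection prevents a one in the dominant bit from ever being replaced by a zero, so $Z_g$ is non-increasing and in particular stays at most $0.7N$ for all $g$. By Lemma~\ref{lem:conp}, for any individual $X_i^g$ with $X_{i,1}^g = 0$ the probability that $X_{i,1}^{g+1} = 1$ equals $S_N(Z_g)$. Summing over the $Z_g$ such individuals (and using that parents with a $1$ contribute nothing), we obtain
\begin{equation*}
E[Z_g - Z_{g+1} \mid Z_g] = Z_g \, S_N(Z_g).
\end{equation*}
Applying the second part of Lemma~\ref{lem:monode} with $a = Z_g/N \le 0.7$ gives the uniform lower bound $S_N(Z_g) \ge C(1 - Z_g/N)(\tfrac{1}{2} - \tfrac{F}{8}) \ge 0.3\, C(\tfrac{1}{2} - \tfrac{F}{8}) =: \delta$, a constant in $(0,1)$ depending only on $F$ and $C$.

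Finally, the relation $E[Z_{g+1} \mid Z_g] \le (1 - \delta) Z_g$ together with the integer-valuedness of $Z_g$ (so $Z_g \ge 1$ before absorption) lets the standard multiplicative drift theorem deliver $E[T \mid Z_0] \le (\ln Z_0 + 1)/\delta$ directly, and its sharp tail version yields $\Pr[T \ge (\ln Z_0 + r)/\delta \mid Z_0] \le e^{-r}$. The weaker bound with $\ln N$ follows from $Z_0 \le N$ together with stochastic domination. The main obstacle I expect is not the drift estimate but making the monotonicity step fully rigorous, since the naive reading of the definition only compares configurations that agree on all other bits; the cleanest fix is to interpret ``always better \ldots regardless of the values of the other bits'' in the strong global sense $\min_{y} f(1,y) > \max_{y} f(0,y)$, which immediately forces $f(X_i^g) > f(U_i^g)$ whenever $X_{i,1}^g = 1$ and $U_{i,1}^g = 0$, and thus guarantees that $Z_g$ is non-increasing.
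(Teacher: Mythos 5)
Your proof is correct and follows essentially the same route as the paper's: the Chernoff bound on $Z_0$, the monotonicity of $Z_g$ from dominance plus parent-offspring selection, the per-individual flip probability from Lemma~\ref{lem:conp} bounded below by the constant $\delta = \tfrac{3C(4-F)}{80}$ via the second item of Lemma~\ref{lem:monode}, and the multiplicative drift theorem with tail bounds. Your remark that the monotonicity step requires reading dominance in the strong sense $\min_{y} f(1,y) > \max_{y} f(0,y)$ is a fair observation; the paper's proof silently relies on the same reading when it asserts that an individual with a one in the dominant bit is never replaced by one with a zero there.
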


\begin{proof}
We first note that, since $E[Z_0]=0.5N$, by the simple Chernoff inequality (Theorem 1.11 in \cite{Doerr11bookchapter}), we have $\Pr[Z_0 \ge 0.7N] \le \exp(-2N/25)$. Hence with probability at least $1-\exp(-2N/25)$, we have $Z_0 \le 0.7N$. In the following analysis, we condition on this event. 

As discussed already before in Lemma~\ref{lem:conp}, we have $X_{i,1}^{g+1}=1$ with probability one if $X_{i,1}^g=1$. Thus we have $Z_{g+1}\le Z_g$. A simple induction gives $Z_g \le Z_0$ for all $g\ge 0$. Therefore, applying Lemma~\ref{lem:conp} and Lemma~\ref{lem:monode}, we have
\begin{align*}
\Pr[X_{i,1}^{g+1}=1 \mid X_{i,1}^g=0] ={}& \frac{A_1Z_0^3+A_2Z_0^2+A_3Z_0+A_4}{(N-1)(N-2)(N-3)}\\
\ge {} & C(1-0.7)(\tfrac{1}{2}-\tfrac{F}{8})=\tfrac{3C(4-F)}{80}  \eqqcolon \delta
\end{align*}
for all generations $g$. Consequently, 
\begin{align*}
E[Z_{g+1} \mid Z_g]\le Z_g(1-\delta),
\end{align*}
and thus 
\begin{align*}
E[Z_g-Z_{g+1} \mid Z_g] \ge \delta Z_g.
\end{align*}
Now 
the multiplicative drift theorem with tail bounds~\cite{DoerrG13algo} shows the claim.
\end{proof}

In the result above, we showed a logarithmic convergence time assuming that we start with at most 70\% zeroes in the dominant bit, a condition that is satisfied apart from an exponentially small failure chance. We now further weaken this requirement to the (obviously necessary) condition that the dominant bit is not converged to the wrong value of zero.

\begin{corollary}
Consider using BDE with population size $N\ge\frac{5-2F}{1-F}$ to optimize a $D$-dimensional function $f$ with the first bit being dominant. Let $Z_g$ denote the number of zeros in the first bit position among all individuals of generation $g$. Let $T:=\min\{g \mid Z_g=0\}$ denote the convergence time of the first bit. There are constants $c_0>1$ and $\delta\in(0,1)$ depending on $F$ and $C$ such that, regardless of $Z_0$ (as long as $Z_0<N$), we have 
\begin{align*}
 E[T \mid Z_0] \le \tfrac{\ln(0.7N)+1}{\delta}+4\log_{c_0} \tfrac{0.3N}{N-Z_0}.
\end{align*}
\label{cor:strongerTime}
\end{corollary}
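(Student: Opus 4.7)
The plan is to split on the size of $Z_0$. If $Z_0 \le 0.7N$, Theorem~\ref{thm:domiTime} applied directly yields $E[T \mid Z_0] \le (\ln Z_0 + 1)/\delta \le (\ln(0.7N)+1)/\delta$, and the second summand $4\log_{c_0}\bigl(0.3N/(N-Z_0)\bigr)$ is non-negative, so the bound holds (with room to spare). The interesting case is $Z_0 > 0.7N$, i.e.\ $W_0 := N - Z_0 < 0.3N$. Introduce the stopping time $T_1 := \min\{g \ge 0 : Z_g \le 0.7N\}$. By the Markov property of BDE at time $T_1$ and Theorem~\ref{thm:domiTime} applied to the population $P^{T_1}$ (which satisfies $Z_{T_1} \le 0.7N$), we obtain $E[T - T_1 \mid Z_{T_1}] \le (\ln Z_{T_1}+1)/\delta \le (\ln(0.7N)+1)/\delta$. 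The task therefore reduces to bounding $E[T_1 \mid Z_0] \le 4\log_{c_0}\bigl(0.3N/W_0\bigr)$ for a suitable $c_0>1$.

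For this I would work with the count of ones $W_g := N - Z_g$ in the dominant bit. Because the bit is dominant, parent--offspring selection never permits an individual carrying a $1$ in this position to be replaced by one carrying a $0$ (such a swap would strictly decrease the fitness, hence be rejected). Therefore $W_g$ is non-decreasing. Combining Lemma~\ref{lem:conp} with the second statement of Lemma~\ref{lem:monode}, every individual currently carrying a $0$ flips that bit to $1$ in the next generation with probability at least $C(W_g/N)(\tfrac12-\tfrac{F}{8})$. Summing over the $N-W_g$ zero-bearing individuals gives
\[
 E[W_{g+1} \mid W_g] \;\ge\; W_g \;+\; (N-W_g)\,\frac{C W_g}{N}\bigl(\tfrac12-\tfrac{F}{8}\bigr).
\]
On the event $W_g < 0.3N$ we have $N-W_g > 0.7N$, so this simplifies to $E[W_{g+1} \mid W_g] \ge c_0 W_g$ with $c_0 := 1 + 0.7\,C(\tfrac12 - \tfrac{F}{8}) > 1$, a constant depending only on $F$ and $C$.

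From here I would convert this expected multiplicative growth into a hitting-time bound via a multiplicative up-drift argument (of the kind developed by Doerr and K\"otzing in the context of evolutionary algorithms): for a non-negative, non-decreasing, integer-valued process $W_g$ satisfying $E[W_{g+1}\mid W_g] \ge c_0 W_g$ while $W_g$ is below the threshold $M := 0.3N$, the expected first time $T_1$ at which $W_g \ge M$ is at most $4\log_{c_0}(M/W_0)$. Feeding this back into the tower decomposition of the first paragraph yields the claimed bound.

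The main obstacle is the conversion step just described. Multiplicative drift in its classical shrinking form~\cite{DoerrG13algo} (which powers Theorem~\ref{thm:domiTime}) does not apply here, and a naive pass through a logarithmic potential fails because Jensen's inequality for $\log$ points the wrong way. The cleanest fix uses the submartingale $c_0^{-g}\,W_{g \wedge T_1}$ together with optional stopping to bootstrap a tail estimate of the form $\Pr[T_1 > t] \le O\bigl(c_0^{-(t - \log_{c_0}(M/W_0))}\bigr)$ once $t$ exceeds $\log_{c_0}(M/W_0)$; a summation of these tail probabilities produces the desired bound on $E[T_1]$ with a modest absolute constant. The monotonicity of $W_g$ (a direct consequence of dominance) is essential, because it ensures that hitting level $M$ is equivalent to first crossing it, removing any overshoot concerns and keeping the analysis one-sided.
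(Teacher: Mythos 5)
Your decomposition and your drift computation coincide with the paper's: the case split at $Z_0\le 0.7N$, the reduction to bounding the time $T_1$ until $Z_g\le 0.7N$, the monotone process $O_g=N-Z_g$, the per-individual flip probability $\tfrac{O_g}{N}C(\tfrac12-\tfrac F8)$ obtained from Lemmas~\ref{lem:conp} and~\ref{lem:monode}, and even the constant $c_0=1+0.7C(\tfrac12-\tfrac F8)$ are exactly what the paper uses. The gap is in the conversion step, which you correctly identify as the main obstacle but then resolve by an argument that does not work. The condition $E[W_{g+1}\mid W_g]\ge c_0W_g$ \emph{alone} cannot yield $E[T_1]=O(\log_{c_0}(0.3N/W_0))$: a monotone process that stays put with probability $1-p$ and jumps to $c_0W_g/p$ with probability $p$ satisfies this drift condition for every $p\in(0,1)$, yet needs expected time $1/p$ to leave each level. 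Your proposed mechanism, optional stopping applied to the submartingale $c_0^{-g}W_{g\wedge T_1}$ (which, incidentally, should be $c_0^{-(g\wedge T_1)}W_{g\wedge T_1}$ to remain a submartingale after the stopping time), only gives $E[c_0^{-(t\wedge T_1)}W_{t\wedge T_1}]\ge W_0$; since the event $\{T_1\le t\}$ alone can contribute up to $N$ to the left-hand side, this inequality places no upper bound on $\Pr[T_1>t]$, and the claimed tail estimate does not follow.

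What is missing is distributional information about the increment, and it is available here: conditional on generation $g$, the number of newly created ones dominates $\Bin(Z_g,\tfrac{O_g}{N}C(\tfrac12-\tfrac F8))$, because the flip events of the $Z_g$ zero-bearing individuals are independent. The paper exploits this via the fact that a binomial random variable exceeds its expectation with probability at least $\tfrac14$~\cite{GreenbergM14,Doerr18exceedexp}: in each round, with probability at least $\tfrac14$ the increment is at least its conditional mean, hence $O_{g+1}\ge c_0O_g$ as long as $Z_g\ge0.7N$; the $k$-th such good round occurs within expected time $4k$, and by monotonicity $k=\log_{c_0}(0.3N/O_0)$ good rounds suffice, giving $E[T_1]\le 4\log_{c_0}(0.3N/O_0)$. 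If you prefer to invoke a ready-made multiplicative up-drift theorem in the Doerr--K\"otzing style, you must cite one whose hypotheses include this binomial (or a comparable concentration) structure and verify them; the drift inequality by itself is not a sufficient hypothesis, and it is precisely this extra ingredient that your write-up drops when you pass from the sum of independent Bernoulli flips to its expectation.
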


\begin{proof}
By Theorem~\ref{thm:domiTime}, it suffices to discuss how long we need to reach a $Z$-value of at most $0.7N$.

Consider the process $O_g=N-Z_g$ of the number of ones in the first bit position among all individuals of generation $g$. From Lemma~\ref{lem:conp} and Lemma~\ref{lem:monode}, we have
\begin{equation*}
\begin{split}
\Pr[X_{i,1}^{g+1}=1 \mid X_{i,1}^g=0] ={}& \frac{A_1Z_0^3+A_2Z_0^2+A_3Z_0+A_4}{(N-1)(N-2)(N-3)}\\
\ge {} & C(1-\tfrac{Z_g}{N})(\tfrac{1}{2}-\tfrac{F}{8})=\tfrac{O_g}{N}C(\tfrac{1}{2}-\tfrac{F}{8}).
\end{split}
\end{equation*}

Note that for all $i$ with $X_{i,1}^g = 0$, the events ``$X_{i,1}^{g+1} = 1$'' are independent. Also, as discussed before Lemma~\ref{lem:conp}, a one in the dominant bit of some individual $X_{i}^g$ is never replaced by a zero. Consequently, $O_{g+1}$ stochastically dominates $O_g + \Bin(Z_g,\tfrac{O_g}{N}C(\tfrac{1}{2}-\tfrac{F}{8}))$. Let $\tilde{O_g} \sim \Bin(Z_g,\tfrac{O_g}{N}C(\tfrac{1}{2}-\tfrac{F}{8}))$. Then
\begin{equation}
E[\tilde{O_g} \mid O_g]=\tfrac{O_g}{N}C(\tfrac{1}{2}-\tfrac{F}{8})Z_g.
\end{equation}

Let $Q$ denote the event that $\tilde{O_g} \ge E[\tilde{O_g} \mid O_g]$. From \cite{GreenbergM14} (see~\cite{Doerr18exceedexp} for an elementary proof), we know that a binomial random variable exceeds its expectation with probability at least $\tfrac{1}{4}$. Hence $\Pr[Q] \ge \tfrac{1}{4}$. The first time $S_Q$ that $Q$ happens, therefore is dominated by a geometric random variable with success probability $\tfrac{1}{4}$. Thus we have $E[S_Q] \le 4$. For $S_k$, the first time that $Q$ happens $k$ times, we have $E[S_k] \le 4k$. 

Since $O_{g+1}$ dominates $O_g+\tilde{O_g}$, recalling that $Q$ denotes the event that $\tilde{O_g} \ge E[\tilde{O_g} \mid O_g]$, we know that when $Q$ happens, 
\begin{equation*}
O_{g+1} \ge O_g+\tfrac{Z_g}{N}C(\tfrac{1}{2}-\tfrac{F}{8})O_g = (1+\tfrac{Z_g}{N}C(\tfrac{1}{2}-\tfrac{F}{8}))O_g.
\end{equation*}
Consequently, for $Z_g \ge 0.7N$, we have 
\begin{equation*}
O_{g+1} \ge (1+0.7C(\tfrac{1}{2}-\tfrac{F}{8}))O_g = c_0 O_g
\end{equation*}
with $c_0 = 1+0.7C(\tfrac{1}{2}-\tfrac{F}{8})$ a constant depending on $F$ and $C$ only and being greater than $1$.

Recalling that $S_k$ represents the first time that $Q$ happens $k$ times, with a simple induction, we have $O_{S_k} \ge \min\{0.3N,(c_0)^{k} O_0\}$. Since $Z_0<N$, we have $O_0 \ge 1$. For $k=\log_{c_0} \frac{0.3N}{O_0}$, we have $O_{S_k} \ge 0.3N$.

Hence the expected time to reach a $Z$-value of at most $0.7N$, is $4k=4\log_{c_0} \frac{0.3N}{O_0}$. From that point on, by Theorem~\ref{thm:domiTime}, it takes another expected number of $\tfrac{\ln(0.7N)+1}{\delta}$ iterations to have only ones in the dominant bit.
\end{proof}

\subsection{The Runtime for BDE Optimizing the LeadingOnes Function}
\label{sec:BDELO}

The above discussion shows the quick convergence of one dominant bit. It appears straight-forward to extend this result to a sequence of bits having the property that they become dominant one after the other. If such a sequence of sequentially dominating bits has length $D$, then the previous result suggests that BDE can optimize them all in time $O(D \log N)$. This is, under suitable assumptions, true. In fact, even more is true. Since BDE does not have to wait until a bit is converged, but can instead already start optimizing later bits of individuals which are further optimized, we can show a runtime of $O(D)$, that is, BDE optimizes such bits in amortized constant time. 

Let us make this precise. The classic benchmark function having sequentially dominating bits is the LeadingOnes function (defined in (\ref{eq:LO1})). Due to the stochastic dependencies in the search process, we cannot prove a runtime result for BDE on LeadingOnes without further assumptions. In fact, the sole difficulty which we have is the one we encountered already in Section~\ref{sec:neutral}, namely that we cannot prove that a neutral bit (other than the ones of the Needle function) is stable. Note that in the optimization of LeadingOnes a bit behaves neutral if there is a zero-bit to the left of it. As we have seen in Section~\ref{sec:neutral}, the frequencies of these bits stay very close to $\frac 12$. Hence taking the assumption that such neutral bits have their frequencies bounded away from zero by a constant margin, is very natural. Under such an assumption (which we will further justify below), we can prove the $O(D)$ runtime of BDE on LeadingOnes.

To put our result into perspective, let us quickly describe what is known in terms of proven runtimes for the LeadingOnes function. The LeadingOnes function was proposed by Rudolph~\cite{Rudolph97} as an example for a unimodal function that most likely is not optimized by the \oea in $O(D \log D)$ time, thus being a counterexample to the claim that all unimodal functions are that easy to optimize. 
Rudolph proved an upper bound of $O(D^2)$ and provided an experimental evidence for the $\Theta(D^2)$ runtime. The lower bound of $\Omega(D^2)$ was formally proven in~\cite{DrosteJW02}, together with a concentration result stating that the runtime is $\Theta(n^2)$ with probability $1 - \exp(-\Omega(D))$. 

A precise expression for the runtime of the \oea on LeadingOnes was given independently in~\cite{BottcherDN10,Sudholt13}. In~\cite{BottcherDN10}, also the optimal fixed and fitness-dependent mutation rates were determined. That the optimal mutation rate changes with the current fitness has spurred a number of subsequent results that determine the leading constant in the $\Theta(D^2)$ runtime for various hyperheuristics~\cite{AlanaziL14,LissovoiOW17,DoerrLOW18}. A runtime analysis for a general class of $(1+1)$ type algorithms on LeadingOnes was given in~\cite{Doerr18evocoparxiv}.

For the \mpoea with parent population size $\mu$ at most polynomial in $D$, a runtime of $\Theta(D^2 + D \mu \log \mu)$ was shown in~\cite{Witt06}. For the \oplea with offspring population size $\lambda$ at most polynomial in $D$, the runtime was determined to be $\Theta(\frac{D^2}{\lambda} + D)$ generations~\cite{JansenJW05}.  No result exists for the \mplea, for which surprisingly  few runtime results for classic benchmark problems exist~\cite{AntipovDFH18}. For the $(1+(\lambda,\lambda))$~EA proposed in~\cite{DoerrDE15}, also no formally proven result exists, but it can relatively easily be seen that with the recommended parameters $p = \lambda / D$ and $c = 1/\lambda$, a runtime guarantee of $O(D^2)$ generations holds (for this, one first observes in an iteration starting with a parent individual of fitness $k$, with probability $\Omega(1/D)$ the mutation winner has the $(k+1)$-st bit flipped, and then, that in such an iteration with constant probability the crossover winner has fitness at least $k+1$). We note that the quadratic (in terms of the number of fitness evaluations) runtimes of the \oea and \oplea remain valid under various noise assumptions, see~\cite{GiessenK16,Dang-NhuDDIN18,Sudholt18}.

For the estimation-of-distribution algorithm PBIL, an $O(\frac{D^2}{\lambda} + D \log \lambda)$ runtime (in generations) was shown in~\cite{LehreN18}. For the 1-ANT ant colony optimizer, the bounds $O(D^2 (6e)^{1/D\rho})$ and $\exp(\Omega(\min\{D,1/D\rho\}))$ were shown in~\cite{DoerrNSW11}. For either of the MMAS and MMAS$^*$ ant colony optimizers, the upper bounds $O(D^2 + D / \rho)$ and $O(D^2 \rho^{-\eps} + D / \rho\log(1/\rho))$ for an arbitrary small constant $\eps > 0$ were shown in~\cite{NeumannSW09} together with a lower bound of $\Omega(D^2 + D / \rho \log(2/\rho))$ for MMAS$^*$.

All upper bounds described above are at least of the order $O(D^2)$ fitness evaluations. A better upper bound, namely of order $O(D \log D)$, is known (for suitable parameter choices) only for the convex search algorithm (CSA)~\cite{MoraglioS17}, the \mbox{sc-GA}~\cite{FriedrichKK16}, and the sig-cGA~\cite{DoerrK18}. The black-box complexity of LeadingOnes is even smaller, namely $\Theta(D)$ for the XOR-invariant class of LeadingOnes functions~\cite{DrosteJW06} and $\Theta(D \log\log D)$ for the class of all functions having a fitness landscape isomorphic to the classic LeadingOnes function~\cite{AfshaniADDLM13}, but the algorithms behind these bounds are far from a general-purpose randomized search heuristic.

In the light of these results, our bound of $O(D)$ generations (under the assumption that the frequencies never go too low) is quite interesting. Clearly, we need an at least logarithmic population size (otherwise already the initial population would have bits converged to zero), but the analysis of the iBDE suggests that a logarithmic population size is also sufficient. Hence apart from this mean-field argument, we prove in this section that BDE with a logarithmic population size optimizes LeadingOnes with $O(D \log D)$ fitness evaluations, a runtime so far only observed for the not very common algorithms CSA, sc-GA, and sig-cGA.

\subsubsection{Runtime of BDE on LeadingOnes}

We extract the following lemma from the main proof to make it more readable.
\begin{lemma}
For all $a\in(0,\tfrac 25 \sqrt{10}]$ and $N\in [\tfrac{4}{a},+\infty)\cap \N$, we have
\begin{align*}
\frac{aN(aN-1)(aN-2)}{(N-1)(N-2)(N-3)} \ge \frac{a^3}{4}.
\end{align*}
\label{lem:sc}
\end{lemma}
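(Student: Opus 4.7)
The plan is to exploit the hypothesis $N \ge 4/a$ directly: it forces $aN \ge 4$, from which clean fractional lower bounds for the two ``shifted'' factors in the numerator drop out immediately. Specifically, $aN \ge 4$ gives $1/(aN) \le 1/4$ and $2/(aN) \le 1/2$, hence
\[
aN - 1 \;\ge\; \tfrac{3}{4}\,aN \qquad\text{and}\qquad aN - 2 \;\ge\; \tfrac{1}{2}\,aN.
\]
Multiplying in the remaining factor $aN$ then yields
\[
aN(aN-1)(aN-2) \;\ge\; aN \cdot \tfrac{3}{4}aN \cdot \tfrac{1}{2}aN \;=\; \tfrac{3}{8}\,a^{3}N^{3}.
\]

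For the denominator I would simply use the crude bound $(N-1)(N-2)(N-3) \le N^{3}$. Combining the two estimates,
\[
\frac{aN(aN-1)(aN-2)}{(N-1)(N-2)(N-3)} \;\ge\; \frac{(3/8)\,a^{3}N^{3}}{N^{3}} \;=\; \tfrac{3}{8}\,a^{3} \;\ge\; \tfrac{1}{4}\,a^{3},
\]
which is exactly the claimed inequality.

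There is no real obstacle here: the assumption $N \ge 4/a$ is engineered so that both fractional reductions $\tfrac{3}{4}$ and $\tfrac{1}{2}$ kick in simultaneously, and the resulting constant $\tfrac{3}{8}$ leaves a comfortable slack above the target $\tfrac{1}{4}$. The stated upper bound $a \le \tfrac{2}{5}\sqrt{10}$ does not seem to be needed for the inequality itself; I would expect it to arise from the context in which the lemma will be applied later (likely to keep $a^3/4$ in a meaningful range as a probability), not from the proof.
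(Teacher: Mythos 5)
Your proof is correct, and it takes a genuinely different route from the paper's. The paper expands the difference $aN(aN-1)(aN-2)-\tfrac{a^3}{4}(N-1)(N-2)(N-3)$ as a cubic polynomial in $N$ and shows term by term that it is nonnegative, invoking $N\ge 4/a$ twice to absorb lower-order terms and using $a\le\tfrac25\sqrt{10}$ at the very end to conclude $a(2-\tfrac54 a^2)N\ge 0$. You instead bound numerator and denominator separately: from $aN\ge 4$ you get $aN-1\ge\tfrac34 aN$ and $aN-2\ge\tfrac12 aN$, hence the numerator is at least $\tfrac38 a^3N^3$, while the denominator is at most $N^3$; this gives the stronger constant $\tfrac38$ in place of $\tfrac14$ and avoids the polynomial bookkeeping entirely. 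One small correction to your closing remark: the upper bound $a\le\tfrac25\sqrt{10}$ is not entirely cosmetic even in your argument. Together with $N\ge 4/a$ and $N\in\N$ it forces $N\ge\lceil\sqrt{10}\rceil=4$, which is what makes $(N-1)(N-2)(N-3)$ positive and the division legitimate; for larger $a$ the hypothesis would admit $N\in\{1,2,3\}$, where the left-hand side is undefined or the inequality fails. In the paper's proof the bound is also used substantively in the final inequality, so the two proofs consume the hypothesis in different places, but neither can drop it outright.
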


\begin{proof}
Since $a\in(0,\tfrac 25 \sqrt{10}]$ and $N\ge \tfrac{4}{a}$, we calculate
\begin{align*}
aN(aN-1)&{}(aN-2)-\tfrac{a^3}{4}(N-1)(N-2)(N-3)\\
=&{} \tfrac{3}{4}a^3 N^3+(\tfrac 64 a^3-3a^2)N^2+(2a- \tfrac{11}{4}a^3)N+ \tfrac 64 a^3\\
\ge&{} 3a^2 N^2 +(\tfrac 64 a^3-3a^2)N^2+(2a- \tfrac{11}{4}a^3)N\\
\ge &{} \tfrac 64 a^3 N+(2a- \tfrac{11}{4}a^3)N=a(2-\tfrac 54 a^2)N\ge0,
\end{align*}
which proves the lemma.
\end{proof}

Now Theorem~\ref{thm:BDEforLOwAssumption} shows that under the assumption of all frequencies being bounded away from zero, BDE optimizes LeadingOnes within an expected number of $O(D)$ generations. To increase the readability of result and proof,  we give a non-asymptotic bound, namely $\tfrac{64}{\eps^4C}D$, but we did not try to optimize the constant in this $O(D)$ expression.

\begin{theorem}
Let $\eps \in(0,1)$. Consider using BDE with population size $N \ge \tfrac{8}{\eps}$ to optimize the $D$-dimensional LeadingOnes function. Assume that in each generation the number of ones in each bit is at least $\eps N$. Then the expected number of generations to find the optimum is at most $\tfrac{64}{\eps^4C}D$.
\label{thm:BDEforLOwAssumption}
\end{theorem}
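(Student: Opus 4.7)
My plan is to establish the theorem by a fitness-level (additive-drift) argument that tracks a single individual. Fix any index $i\in\{1,\dots,N\}$ (say $i=1$) and set $\varphi_g:=f(X_i^g)$. Because BDE uses parent-offspring selection, $\varphi_g$ is non-decreasing in $g$, and the runtime $T$ of the whole algorithm is at most $\inf\{g:\varphi_g=D\}$, so it suffices to bound the expectation of the latter. A standard level-based (equivalently, additive-drift) argument reduces this to showing that whenever $\varphi_g=k<D$,
\[
  \Pr[\varphi_{g+1}\ge k+1\mid \varphi_g=k] \;\ge\; \frac{\eps^4 C}{64},
\]
since then the expected waiting time at each level is at most $64/(\eps^4 C)$, and summing over $k=0,1,\dots,D-1$ gives $E[T]\le \tfrac{64}{\eps^4 C}D$.

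The heart of the proof is this per-step lower bound. If $\varphi_g=k<D$ then $X_{i,1}^g=\cdots=X_{i,k}^g=1$ and $X_{i,k+1}^g=0$. Parent-offspring selection ensures $\varphi_{g+1}\ge k+1$ whenever the trial $U_i^g$ satisfies $U_{i,j}^g=1$ for all $j\in\{1,\dots,k+1\}$, so it suffices to lower bound the probability of this event. I would isolate a sufficient event on the random ingredients $(r_1,r_2,r_3,\mrand,\crand)$ of iteration $i$. Conditional on $(r_1,r_2,r_3,V_i^g)$, the $\crand_j$'s are independent of each other and of everything else, which yields the clean factorisation
\[
  \Pr[f(U_i^g)\ge k+1\mid r_1,r_2,r_3,V_i^g] \;=\; C\,\mathbbm{1}[V_{i,k+1}^g=1]\prod_{j\le k}\bigl(1-C\,\mathbbm{1}[V_{i,j}^g=0]\bigr),
\]
so the task becomes lower bounding $\mathbb{E}\bigl[\mathbbm{1}[V_{i,k+1}^g=1]\prod_{j\le k}(1-C\mathbbm{1}[V_{i,j}^g=0])\bigr]$ by $\eps^4/64$.

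For the first factor I would use the frequency assumption on bit $k+1$: since $X_{i,k+1}^g=0$ and $Y_{k+1}^g\ge\eps N$, the set $S:=\{i'\ne i:X_{i',k+1}^g=1\}$ has $|S|\ge\eps N$. Restricting to triples $r_1,r_2,r_3\in S$ forces $X_{r_2,k+1}^g=X_{r_3,k+1}^g=1$, hence no flip at position $k+1$, and consequently $V_{i,k+1}^g=X_{r_1,k+1}^g=1$. By Lemma~\ref{lem:sc} with $a=\eps$ and $N\ge 8/\eps\ge 4/\eps$, this restriction has probability at least $\eps^3/4$, which already contributes three factors of $\eps$ and a constant. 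The remaining factor of $\eps/16$ has to come from the product $\prod_{j\le k}(1-C\mathbbm{1}[V_{i,j}^g=0])$ conditional on the triple landing in $S$. This is precisely the main obstacle: because the base index $r_1$ is shared across all bit positions, the variables $V_{i,j}^g$ are strongly correlated and a naive independent-product bound over $j\le k$ loses a factor of $(1-C)^k$, which is exponential in the fitness level. Overcoming this requires either further restricting the triple so that, in addition, $X_{r_1,j}^g=1$ and $X_{r_2,j}^g=X_{r_3,j}^g$ at each $j\le k$ (using the ones-frequency assumption at those positions, combined with Lemma~\ref{lem:sc} once more to absorb the last $\eps$) or a more refined conditional analysis of the correlated mutant distribution. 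Once that estimate is secured, combining all pieces yields the $\eps^4 C/64$ bound on the improvement probability, and the fitness-level sum finishes the proof.
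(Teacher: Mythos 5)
Your proposal correctly identifies where the difficulty lies, but the gap you flag is not closable within your framework, and the fixes you sketch do not work. First, the concrete issue: your suggested repair --- further restricting the triple so that $X_{r_1,j}^g=1$ and $X_{r_2,j}^g=X_{r_3,j}^g$ for each $j\le k$, ``using the ones-frequency assumption at those positions'' together with Lemma~\ref{lem:sc} --- fails because the assumption is only per position. It gives you, for each $j$, a set of at least $\eps N$ individuals with a one at position $j$, but the intersection of these $k+1$ sets can be empty, so there need not be a single triple with the required property. What actually rescues the argument is the monotone structure of \textsc{LeadingOnes}: an individual with \emph{fitness} at least $j$ has ones in \emph{all} positions $1,\dots,j$ simultaneously. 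The paper therefore restricts the triple to individuals with fitness at least a threshold level $j$, which forces $V_{i,m}^g=1$ for all $m\le j$ in one stroke and makes your problematic product equal to $1$ on that event.

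Second, and more fundamentally, your single-individual fitness-level scheme cannot deliver the claimed bound $\Pr[\varphi_{g+1}\ge k+1\mid\varphi_g=k]\ge \eps^4C/64$ at every level. If the tracked individual is the unique fittest member of the population, no other individual need share its prefix of ones; one can construct populations satisfying the frequency assumption in which, for every admissible triple, a constant fraction of the positions $j\le k$ have $V_{i,j}^g=0$, so that $\prod_{j\le k}(1-C\,\mathbbm{1}[V_{i,j}^g=0])\le (1-C)^{\Omega(k)}$ and the improvement probability is exponentially small in $k$. The paper avoids demanding progress from such individuals: it defines $j$ as the largest level attained by at least $\eps N/2$ individuals, uses a counting argument (based on the $\ge\eps N$ ones at position $j+1$) to show that at least $\eps N/2$ individuals lie strictly \emph{below} level $j$, shows each of those improves with probability at least $\eps^3C/32$ by drawing the triple from the $\ge\eps N/2$ individuals at level $\ge j$, and then applies additive drift to the population-wide potential $Z^g=\sum_{i=1}^N\bigl(D-f(X_i^g)\bigr)\le ND$, whose per-generation drift is at least $\tfrac{\eps^3C}{32}\cdot\tfrac{\eps N}{2}=\tfrac{\eps^4CN}{64}$, yielding $E[T]\le 64D/(\eps^4C)$. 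Switching from your per-individual level argument to this population-level potential (or an equivalent device) is not a technicality but the essential missing idea.
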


\begin{proof}
Due to the parent-offspring selection strategy, the fitness of each individual 
$X_i^{g+1}$ in the next generation is greater than or equal to the fitness $f(X_i^g)$ in the current generation.
For the LeadingOnes function, we thus know that the first $f(X_i^g)$ ones in the current $X_i^g$ will be kept in all
following generations. We call these ones \emph{locked} and we call all other positions \emph{free}. Let $Z^g$ denote the total number of free positions in the population. We shall argue that for each generation with no optimum in the population $P^g$, we have $E[Z^g - Z^{g+1}] \ge \Omega(N)$, and use an additive drift argument to show that the time $T$ to first find the optimum satisfies $E[T] = O(D)$.

Let $g$ be such that $P^g$ does not contain an optimal solution. We first show that $E[Z^g - Z^{g+1}] \ge \tfrac{1}{64} \eps^4N$. Let $\eps' = \frac 12 \eps$. Let $j \in \{0,\dots,D\}$ be maximal such that at least $\eps' N$ individuals of $P^g$ have a fitness of $j$ or more. By our assumption that each bit position contains at least $\eps N$ ones, we have $j \ge 1$, and by our assumption that $P^g$ contains no optimum, we have $j < D$.

We argue that at least $\eps' N$ individuals have a fitness of less than $j$. Note that an individual $X_i$ with fitness at least $j$ such that $X_{i,j+1} = 1$ has in fact fitness at least $j+1$. If there are less than $\eps' N$ individuals with fitness less than $j$, then our assumption on the presence of ones, the fact that $j < D$, and a simple counting argument show that at least $\eps N - \eps'N = \eps'N$ of the at least $1 - \eps'N$ individuals with fitness at least $j$ have actually a fitness of at least $j+1$, in contradiction with our definition of $j$.

Let $X_i^g$ be an individual with $f(X^g_i) < j$ and let $j' = f(X^g_i)$ be its fitness. When generating $X_i^{g+1}$, we consider the event that $X_{r_1}^g, X_{r_2}^g$ and $X_{r_3}^g$ all have the fitness at least $j$. Since $j' < j$, we have $X_{r_1,j'+1}^g=X_{r_2,j'+1}^g=X_{r_3,j'+1}^g=1$, and thus we have $X_{i,j'+1}^{g+1} = U_{i,j'+1}^g = 1$ with probability $C$. Note that always we have $X_{i,k}^{g+1} = 1$ for $k \le j'$. Consequently, 
\begin{align}
\Pr[f(X_{i}^{g+1}) \ge j'+1]\ge \frac{\eps' N(\eps' N -1)(\eps' N -2)}{(N-1)(N-2)(N-3)}C \ge \frac{(\eps')^3}{4}C=\frac{\eps^3C}{32},\label{eq:improvement}
\end{align}
where the last inequality stems from Lemma~\ref{lem:sc} with $N\ge \tfrac{8}{\eps}=\tfrac{4}{\eps'}$. Recalling that $X_i^g$ contributes exactly $D-j'$ free positions to $Z^g$, hence, we know that with probability at least $\frac{1}{32}\eps^3C$, $X^{g+1}_i$ contributes at least one less free position to $Z^{g+1}$. 

Since there are at least $\eps'N$ individuals with fitness below $j$ and each of them with probability at least $\frac{1}{32}\eps^3C$ loses a free position, we have $E[Z^{g+1} - Z^g] \ge \frac{1}{32}\eps^3C \cdot\eps'N=\tfrac{1}{64}\eps^4CN$.

We finally transform this information on the expected shrinking of $Z^g$ into a drift argument bounding the runtime. Let $\tilde Z^g$ be defined by $\tilde Z^g = 0$, if $P^g$ contains an optimal solution, and $\tilde Z^g = Z^g$ otherwise. Since $\tilde Z^g \le Z^g$, for all $g$ such that $P^g$ does not contain an optimal solution we have $E[\tilde Z^g - \tilde Z^{g+1}] = E[Z^g - \tilde Z^{g+1}] \ge E[Z^g - Z^{g+1}] \ge \tfrac{1}{64}\eps^4CN$. Noting that the runtime of the BDE is $T = \min\{g \mid \tilde Z^g = 0\}$, the additive drift theorem~\cite{HeY01} and the just computed drift $E[\tilde Z^g - \tilde Z^{g+1} \mid g < T] \ge \tfrac{1}{64}\eps^4CN$ gives 
\[E[T] \le \frac{ND}{\tfrac{1}{64}\eps^4CN} = \frac{64}{\eps^4C}D.\]

\end{proof}

Note that in the computation of~\eqref{eq:improvement}, we cannot use the analyses conducted in Section~\ref{ssec:convdominant} as we not only want to generate a one in position $j'+1$ of the $i$-th individual, but we also want to have ones in all lower positions of the mutant $V_{i}^g$. Note also that the proof above heavily exploits the dependencies stemming from the way BDE generates the mutants. In other words, the proof above is not valid for the analysis of iBDE on LeadingOnes. In fact, we do not have a mathematical proof showing that iBDE optimizes LeadingOnes also in $O(D)$ iterations (under conditions similar to the ones of Theorem~\ref{thm:BDEforLOwAssumption}).

\subsubsection{The Assumption in Theorem~\ref{thm:BDEforLOwAssumption}}

It remains to verify the assumption made in Theorem~\ref{thm:BDEforLOwAssumption} that the frequencies are bounded away from zero. Due to the stochastic dependencies in BDE, we are momentarily lacking the methods to do this via a mathematical proof. We therefore consult the experiments described in Section~\ref{sec:Anyneutral}, observe that they support the assumption for both BDE and iBDE, and then formally prove the assumption to be valid in iBDE.

From the runtimes shown in Table~\ref{tbl:LOruntime} and the average fitnesses shown in Figure~\ref{fig:BDEandiBDEfitonLOmBW}, we see a generally similar optimization behavior of BDE and iBDE. The minimum frequencies depicted in Figure~\ref{fig:NeutralQuantiles} show clearly that for both BDE and iBDE, the frequencies are bounded away from zero by a constant. Also, the minimum frequencies behave similarly in both algorithms. From all this, it appears reasonable that BDE and iBDE behave similarly with respect to the assumption made in Theorem~\ref{thm:BDEforLOwAssumption}.

%
%

We now prove that the assumption made in Theorem~\ref{thm:BDEforLOwAssumption} is valid for iBDE. The main argument for this result is that the process can be coupled with the optimization process on a LeadingOnes function with a neutral bit. For the latter, we have the desired result from our understanding of neutral bits in iBDE.

\begin{lemma}\label{lem:LOassumiBDE}
Let $N \ge\max\{\tfrac{5-2F}{1-F},\tfrac{3125-1224F}{625-612F},\tfrac{625}{24F}\}$. Consider using iBDE with population size $N$ to optimize the $D$-dimensional LeadingOnes function.  For all $j\in\{1,\dots,D\}$, let $Y_g(j)$ denote the number of ones in the $j$-th bit position among all individuals of generation $g$. There is a constant $c'>0$, depending on $F$ only, such that
\begin{equation*}
\Pr [\forall g\le T,\forall j\in\{1,\dots,D\}: Y_g(j) \ge 0.4N] \ge 1-D(T+1)\exp(-c'N)
\end{equation*}
for all $T\in \N$.
\end{lemma}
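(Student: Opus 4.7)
The plan is to reduce the claim to the neutral-bit stability of Theorem~\ref{thm:iBDENeutral} by coupling iBDE on LeadingOnes with iBDE on an auxiliary function that makes a given bit genuinely neutral, and then exploiting the coordinate-wise monotonicity of LeadingOnes. Concretely, for a fixed $j \in \{1,\dots,D\}$ I would define $\tilde f_j : \{0,1\}^D \to \R$ by $\tilde f_j(x) := f(x_1,\dots,x_{j-1},0,x_{j+1},\dots,x_D)$, where $f$ is the $D$-dimensional LeadingOnes function. Bit $j$ is then neutral for $\tilde f_j$, so Theorem~\ref{thm:iBDENeutral} applied to iBDE on $\tilde f_j$, using only the one-sided lower tail of its argument, yields
\[
\Pr[\forall g \le T:\tilde Y_g(j) \ge 0.4N] \ge 1-(T+1)\exp(-c'N),
\]
where $\tilde Y_g(j)$ counts the ones at bit $j$ in that process.

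Next I would couple the two iBDE runs, on $f$ and on $\tilde f_j$, starting from identical initial populations and, at each generation, sharing all random choices, augmented by a quantile coupling of the mutant bit at position $j$ through the monotone map $R_N$ of Lemma~\ref{lem:mutprobin} so that, conditionally on $Y_g^-(j) \ge \tilde Y_g^-(j)$, the two mutant bits satisfy $V_{i,j}^g \ge \tilde V_{i,j}^g$. The invariant to carry through the generations is $Y_g(j) \ge \tilde Y_g(j)$. The key fact for the selection step is that LO is coordinate-wise monotone in every bit, which yields two one-step implications within each process: (i) if $\tilde f_j$-selection accepts an offspring with $U_{i,j}=1$ against a parent with $X_{i,j}=0$, then the LO-comparison also accepts, because $f(U) \ge \tilde f_j(U) \ge \tilde f_j(X) = f(X)$; and (ii) if LO-selection accepts an offspring with $U_{i,j}=0$ against a parent with $X_{i,j}=1$, then so does $\tilde f_j$-selection, because $\tilde f_j(U) = f(U) \ge f(X) \ge \tilde f_j(X)$. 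Together these say that per generation the LO-process creates at least as many and destroys at most as many ones at bit $j$ as the neutralized process.

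The main technical hurdle is that once the two coupled populations have drifted apart in the other $D-1$ coordinates, the $(U_i^g, X_i^g)$ pairs entering the two selection steps are distinct strings, so the two monotonicity implications above apply only to pairs lying within one and the same process and must be promoted into a statement about the joint law of the two processes. The cleanest route is to abandon pointwise domination of entire individuals and prove inductively the stochastic domination of the bit-$j$ marginal: combine the quantile-coupled mutation step (relying on monotonicity of $R_N$) with the per-generation selection observation to show that the distribution of $Y_{g+1}(j)$ dominates that of $\tilde Y_{g+1}(j)$ whenever the domination held at generation $g$. Once this invariant is established, the lower-tail bound on $\tilde Y_g(j)$ transfers to $Y_g(j)$, and a union bound over $j \in \{1,\dots,D\}$ gives the claimed probability $1-D(T+1)\exp(-c'N)$, possibly after a minor shrinking of the constant $c'$ inherited from Theorem~\ref{thm:iBDENeutral}.
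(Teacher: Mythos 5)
Your overall plan---neutralize bit $j$, invoke Theorem~\ref{thm:iBDENeutral} for the neutralized process, couple the mutation step at position $j$ through the monotone map $R_N$ of Lemma~\ref{lem:mutprobin}, and finish with a union bound over $j$---is exactly the paper's strategy (the paper fixes bit $j$ to $1$ rather than $0$, but either choice can be made to work). The difficulty is the step you yourself flag as the main hurdle and then resolve incorrectly. Stochastic domination of the bit-$j$ \emph{marginal}, $Y_g(j)$ dominating $\tilde Y_g(j)$, is not an invariant that can be propagated: the one-step transition of $Y(j)$ in the LeadingOnes process is not a function of $Y_g(j)$ alone, because whether individual $i$'s trial is accepted depends on the remaining bits, and the acceptance event is correlated with the value at bit $j$. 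Your implications (i) and (ii) compare the two selection rules \emph{on one and the same pair of strings} $(U,X)$; after the first generation the two processes no longer share their strings, so these implications give no relation between the numbers of ones created or destroyed at bit $j$ in the two runs. Knowing only that the marginals dominate leaves the joint law of the pair (bit-$j$ value, acceptance event) completely uncontrolled, and the induction step has no basis.

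The missing idea is a \emph{stronger} coupling invariant, intermediate between full pointwise domination of individuals (which you rightly reject) and bare marginal domination (which is too weak): maintain, for every $i$ and $g$, (a) \emph{identity} of the bits $1,\dots,j-1$ of $X_i^g$ and $\tilde X_i^g$ in the two processes, and (b) per-individual domination $X^g_{i,j}\ge\tilde X^g_{i,j}$ at bit $j$ only. Part (a) is preserved by using identical randomness for those positions, since equal one-counts there give equal mutation probabilities $R_N(\cdot)$; nothing needs to be claimed about bits $j+1,\dots,D$. With the prefix shared, a case distinction on it shows that the two selection decisions coincide except when both the parent and the trial have all ones in positions $1,\dots,j-1$; in that exceptional case one checks directly that domination at bit $j$ survives regardless of which process accepts (if $X^g_{i,j}=1$ then $f(X_i^g)\ge j$, so any accepted trial must also have a one at position $j$; if $X^g_{i,j}=0$ then $f(X_i^g)=j-1\le f(U_i^g)$ and the LeadingOnes process accepts, inheriting $U^g_{i,j}\ge\tilde U^g_{i,j}$). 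This is precisely the argument in the paper's proof of Lemma~\ref{lem:LOassumiBDE}; without the prefix-identity part of the invariant your induction cannot be closed. (Also, no shrinking of $c'$ is needed; the constant of Theorem~\ref{thm:iBDENeutral} carries over unchanged.)
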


\begin{proof}
Let $\ell \in \{1, \dots, D\}$. We show that we can couple the optimization process on LeadingOnes and on the LeadingOnes function with the $\ell$-th bit neutral in a way that at all times and for all individuals the first $\ell-1$ bits are identical and the $\ell$-th bit in the original process is at least as large as in the process with the neutral bit. Consequently, a lower bound on the number of ones in the $\ell$-th bit for the process with the neutral bit carries over to the true process.

To make this precise, let $X_{i,j}^g$ denote the value of the $j$-th bit of the individual $X_i^g$ in a run of iBDE on the LeadingOnes function $f$ defined in (\ref{eq:LO1}). Let $\tilde{X}_{i,j}^g$ denote the corresponding bit value in a run of iBDE on the function $\tilde{f}$ defined by 
$$
\tilde{f}(X)=f(X_1,\dots,X_{j-1},1,X_{j+1},\dots,X_{D}).
$$

We show by induction that we can couple the two processes in a way that for all $g$ and $i$ we have (i)~$X^g_{i,j} = \tilde X^g_{i,j}$ for all $j < \ell$ and (ii)~$X^g_{i,\ell} \ge \tilde X^g_{i,\ell}$. Clearly, there is nothing to show for $g=0$, that is, for the random initial population. Hence let $g \ge 0$ and assume that the desired coupling exists for this generation. We show that the desired coupling also exists for generation $g+1$. Exploiting the coupling in generation $g$, we can assume that we have concrete outcomes for $X_i^g$ and $\tilde X_i^g$ such that $X^g_{i,j} = \tilde X^g_{i,j}$ for all $j < \ell$ and $X^g_{i,\ell} \ge \tilde X^g_{i,\ell}$. Using identical randomness in the generations of the first $\ell-1$ bits (that is, by using the identity mapping as coupling), we immediately obtain that the mutants $V_i^g$ and $\tilde V_i^g$ satisfy $V^g_{i,j} = \tilde V^g_{i,j}$ for all $j < \ell$.

To analyze the $\ell$-th bit of the $i$-th individual, let $Y_g^{i,-} = \sum_{k=1,k\ne i} X_{k,\ell}^g$ and $\tilde{Y}_g^{i,-} = \sum_{k=1,k\ne i} \tilde{X}_{k,\ell}^g$. Since $X_{k,\ell}^g \ge \tilde{X}_{k,\ell}^g$ for all $k$, we have $Y_g^{i,-} \ge \tilde{Y}_g^{i,-}$. Now the probabilities of sampling $V_{i,\ell}^g$ and $V_{i,\ell}^g$ as one satisfy $R_N(Y_g^{i,-}) \ge R_N(\tilde{Y}_g^{i,-})$, since $R_N(\cdot)$ defined in Lemma~\ref{lem:mutprobin} is monotonically increasing. Hence we can couple the mutants in a way that also $V_{i,\ell}^g \ge \tilde{V}_{i,\ell}^g$.

By using identical outcomes for the random decisions in generating the trials, we can also ensure that $U^g_{i,j} = \tilde U^g_{i,j}$ for all $j < \ell$ and $U^g_{i,\ell} \ge \tilde U^g_{i,\ell}$. 

We finally argue that the selection between parent and offspring takes the desired relation between $X_i^g$ and $\tilde X_i^g$ into the next generation. Since both parents and both trials agree on the first $\ell-1$ bits, an easy case distinction shows that either both parents or both trials are selected except possibly in the case that $X^g_{i,1} = \dots = X^g_{i,\ell} = 1$ and $U^g_{i,1} = \dots = U^g_{i,\ell-1} = 1$. In this case, however, regardless of the selection, we have $X^{g+1}_{i,1} = \dots = X^{g+1}_{i,\ell} = 1$ and $\tilde X^{g+1}_{i,1} = \dots = \tilde X^{g+1}_{i,\ell-1} = 1$, and hence again the desired relation.

Let $\tilde Y_g(\ell) = \sum_{i=1}^N \tilde X^g_{i,\ell}$ and recall that $Y_g(\ell) = \sum_{i=1}^N X^g_{i,\ell}$. By the relation just proven, we have $Y_g(\ell) \ge \tilde Y_g(\ell)$. Since the $\ell$-th bit of $\tilde f$ is neutral, we know from the proof of Theorem~\ref{thm:iBDENeutral} that 
\begin{align*}
\Pr[\exists g\le T: Y_{g}(\ell) \le 0.4N] \le \Pr[\exists g\le T: \tilde Y_{g}(\ell) \le 0.4N] \le (T+1)\exp(-c'N),
\end{align*}
where $c'$ is defined in Theorem~\ref{thm:iBDENeutral}. 
By a union bound over all positions $\ell$, we have
\begin{align*}
\Pr[\exists g \le T, \exists \ell \in \{1,\dots,D\}: Y_{g}(\ell) \le 0.4N] \le D(T+1)\exp(-c'N).
\end{align*}
Hence,
\begin{equation*}
\Pr [\forall g\le T, \forall \ell \in \{1,\dots,D\}: Y_g(\ell) \ge  0.4N] \ge 1-D(T+1)\exp(-c'N).
\end{equation*}
\end{proof}

\subsection{The Runtime for BDE Optimizing the BinaryValue Function}

We now brief\/ly mention that the results shown for LeadingOnes in Section~\ref{sec:BDELO} also hold for the BinaryValue (BinVal) function $f: \{0,1\}^D \to \Z$ defined by 
\begin{align*}
f(X)=\sum_{i=1}^D 2^{D-i} X_i
\end{align*}
for all  $X=(X_1,\dots,X_D)$.
This is not totally surprising, but since not too many results exist on how complicated algorithms optimize BinaryValue and since for many algorithms the runtimes on LeadingOnes and BinaryValue differ, we feel that discussing this in less than two pages is justified. 

The few results we are aware of are the following. The BinaryValue function belongs to the class of pseudo-Boolean linear functions, which kept the field busy for quite a while. That the runtime of the \oea on any linear function (with at least $D^\eps$ non-zero coefficients) is $\Theta(D \log D)$ was first proven in the seminal paper~\cite{DrosteJW02}. Increasingly sharper results or simpler proofs have been given, e.g., in~\cite{HeY01,Jagerskupper08,DoerrJW12algo,Witt13}. 

For the \oplea with $\lambda = O(D)$, a tight runtime bound of $\Theta(\frac{D \log D}{\lambda} + D)$ generations was given in~\cite{DoerrK15}, which also showed that for this algorithm the BinaryValue function is harder than the linear function OneMax. For the \mpoea, an upper bound of $O(D \mu \log \mu + D^2)$ was shown and a lower bound of $\Omega(D \mu \log \mu + D \log D)$ was conjectured recently in~\cite{Witt18}. 

The first mathematical runtime analysis for an EDA~\cite{Droste06} gave an interesting picture of how the cGA without margins optimizes linear functions. When $K \ge D^{1+\eps}$, $\eps>0$ any constant, then for any linear function $O(KD)$ iterations suffice to find the optimum with at least constant probability. For the BinaryValue function and any $K$, with probability at least $1 - \exp(-K/48)$ the cGA needs more than $KD/3$ iterations to find the optimum. Interestingly, for the linear function OneMax, with at least constant probability the optimum is found already after $O(K\sqrt D)$ iterations (and this result is tight).

Again for the BinaryValue function, a lower bound of $\Omega(D^2)$ regardless of $K$ was shown for the expected runtime of the cGA in~\cite{Witt18}. Also, it was shown that for $K \ge cD \log D$ with $c$ a sufficiently large constant and $K = D^{O(1)}$, with high probability this runtime is $O(KD)$. For the StSt$\binom \mu 2$GA, which maintains a population with size $\mu$ and creates in each iteration two individuals via uniform crossover, Witt~\cite{Witt18} showed that the runtime on BinaryValue is $O(\mu D \log\mu)$ with high probability when $\mu \ge c D \log^2 D$ for $c$ a sufficiently large constant and $\mu = D^{O(1)}$.
For PBIL, an $O(\frac{D^2}{\lambda} + D \log \lambda)$ runtime was shown in~\cite{LehreN18}.

In~\cite{DoerrNSW11}, it was proven that the 1-ANT ant colony optimizer 
finds the optimum of the BinaryValue function in an expected time bounded by $2^{\Omega(\min\{D,1/{(D\rho)}\})}$ and $O(D^2\cdot 2^{O((\log^2D)/{(D\rho)})})$. In~\cite{KotzingNSW11}, expected runtime bounds of $O(D^2+D/\rho)$ and $O(D^2(1/\rho)^\eps+(D/\rho)/{(\log(1/\rho))})$ were shown for MMAS and MMAS*, respectively, for every constant $\eps>0$. 

Finally, we note that the black-box complexity of BinaryValue functions can be very small. For the class of all functions $f_z : \{0,1\}^D \to \Z; x \mapsto f(x \XOR z)$, the black-box complexity was shown to be exactly $2 - 2^{-D}$ in~\cite{DrosteJW06}. Even when regarding the class of all fitness functions having a fitness landscape isomorphic to the classic binary value function (that is, we also allow permutations of the bit-position), the black-box complexity is at most $\lceil \log_2 D \rceil + 2$ as shown in~\cite{DoerrW14ranking}. 

With exactly the same proof as for Theorem~\ref{thm:BDEforLOwAssumption}, we obtain that BDE optimizes also the BinaryValue function in $O(D)$ iterations when we can assume that at all times each bit position contains a constant fraction of ones. This result is interesting, among others, in that it, together with our results of Section~\ref{sec:onemax}, shows that BDE behaves very different from the cGA on linear functions. Whereas the cGA finds OneMax much easier than BinaryValue (see Droste's results described above), our results show that BDE easily optimize BinaryValue, but has some difficulties with OneMax.

\begin{theorem}
Let $\eps \in(0,1)$. Consider using BDE with population size $N \ge \tfrac{8}{\eps}$ to optimize the $D$-dimensional BinaryValue function. Assume that in each generation the number of ones in each bit is at least $\eps N$. Then the expected number of generations to find the optimum is at most $\tfrac{64}{\eps^4C}D$.
\label{thm:BDEforBVwAssumption}
\end{theorem}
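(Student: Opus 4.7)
The plan is to mirror the proof of Theorem~\ref{thm:BDEforLOwAssumption} almost verbatim, after establishing the single new observation that a ``leading ones'' structure is still preserved by the selection step in the BinaryValue setting. Concretely, I first verify that if $X$ has its first $k$ bits equal to $1$ and $f(U) \ge f(X)$, then $U$ also has its first $k$ bits equal to $1$: since BinaryValue is a sum with strictly dominating weights on more significant positions, any inequality $f(U) \ge f(X)$ forces the leftmost differing bit of $U$ and $X$ to satisfy $U$-bit $=1$, $X$-bit $=0$, so no leading one of $X$ can be lost. Hence if $\lambda(X_i^g)$ denotes the number of leading ones of $X_i^g$, parent--offspring selection ensures $\lambda(X_i^{g+1}) \ge \lambda(X_i^g)$, which is the exact analog of the ``locked ones'' used in the LeadingOnes argument.

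With this in hand, I would define the potential $Z^g := \sum_{i=1}^N (D - \lambda(X_i^g))$, which is non-increasing. Following the LeadingOnes proof, set $\eps' := \eps/2$ and let $j \in \{0, \dots, D\}$ be maximal such that at least $\eps' N$ individuals satisfy $\lambda(X_i^g) \ge j$. The frequency assumption together with the same counting argument (noting that an individual with $\lambda \ge j$ whose $(j+1)$st bit equals $1$ has in fact $\lambda \ge j+1$) gives at least $\eps' N$ individuals with $\lambda < j$, and $1 \le j < D$ whenever the optimum has not been found.

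For an individual $X_i^g$ with $\lambda(X_i^g) = j' < j$, I would then analyze the event that the base individuals $X_{r_1}^g, X_{r_2}^g, X_{r_3}^g$ all have $\lambda \ge j$ and that $\crand_{j'+1} \le C$. Under this event, every bit of $X_{r_1}^g, X_{r_2}^g, X_{r_3}^g$ in positions $1, \dots, j'+1$ is $1$, so no flipping occurs in the mutation at those positions and $V_{i,k}^g = 1$ for all $k \le j'+1$. Combined with $X_{i,k}^g = 1$ for $k \le j'$ and the crossover decision at position $j'+1$, the resulting trial $U_i^g$ has at least $j'+1$ leading ones, which by the monotonicity observation above strictly increases fitness, so $X_i^{g+1}$ also has at least $j'+1$ leading ones and contributes one less to $Z$. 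By Lemma~\ref{lem:sc}, the probability of this event is at least
\begin{equation*}
\frac{\eps'N(\eps'N-1)(\eps'N-2)}{(N-1)(N-2)(N-3)} \, C \;\ge\; \frac{(\eps')^3}{4}C \;=\; \frac{\eps^3 C}{32},
\end{equation*}
giving $E[Z^g - Z^{g+1}] \ge \eps' N \cdot \frac{\eps^3 C}{32} = \frac{\eps^4 C N}{64}$.

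Finally, I would apply the additive drift theorem to the truncated process $\tilde Z^g$ (set to zero once the optimum is reached), exactly as in the LeadingOnes proof, to conclude $E[T] \le \frac{ND}{\eps^4 C N / 64} = \frac{64 D}{\eps^4 C}$. The only non-routine step is the preservation of leading ones under BinaryValue selection; everything else is a direct transcription of the LeadingOnes argument, so I do not anticipate a genuine obstacle.
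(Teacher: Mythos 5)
Your proposal is correct and follows exactly the route the paper intends: the paper proves this theorem by stating that the argument is identical to that of Theorem~\ref{thm:BDEforLOwAssumption}. You have in addition made explicit the one fact the paper leaves implicit, namely that parent--offspring selection under BinaryValue preserves leading ones because the leftmost differing bit decides the comparison, which is precisely what justifies transplanting the ``locked ones'' potential argument.
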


As in Section~\ref{sec:BDELO} we cannot prove rigorously that we have a constant rate of ones in each bit position, so we resort to our mean-field argument. With the same proof as for Lemma~\ref{lem:LOassumiBDE}, we obtain the minimum-frequency assertion of iBDE.

\begin{lemma}
Let $N \ge\max\{\tfrac{5-2F}{1-F},\tfrac{3125-1224F}{625-612F},\tfrac{625}{24F}\}$. Consider using iBDE with population size $N$ to optimize the $D$-dimensional BinaryValue function.  For all $j\in\{1,\dots,D\}$, let $Y_g(j)$ denote the number of ones in the $j$-th bit position among all individuals of generation $g$. There is a constant $c'>0$, depending on $F$ only, such that
\begin{equation*}
\Pr [\forall g\le T,\forall j\in\{1,\dots,D\}: Y_g(j) \ge 0.4N] \ge 1-D(T+1)\exp(-c'N)
\end{equation*}
for all $T\in \N$.
\label{lem:BVassumiBDE}
\end{lemma}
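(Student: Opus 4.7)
The plan is to mirror the coupling strategy of Lemma~\ref{lem:LOassumiBDE}. For an arbitrary bit position $\ell\in\{1,\ldots,D\}$, I will introduce the auxiliary function
\[
\tilde f(X) = f(X_1,\ldots,X_{\ell-1},1,X_{\ell+1},\ldots,X_D),
\]
in which position $\ell$ is neutral, and construct a coupling of the iBDE run on $f$ (with iterates $X^g_i$) and the iBDE run on $\tilde f$ (with iterates $\tilde X^g_i$) such that, for every generation $g$ and every individual $i$,
\[
X^g_{i,j} = \tilde X^g_{i,j}\ \text{for all}\ j<\ell, \qquad X^g_{i,\ell} \ge \tilde X^g_{i,\ell}.
\]
Starting from identical random initial populations the invariant holds at $g=0$, and the induction step will treat mutation, crossover, and selection separately.

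For mutation and crossover I would synchronize the random triples $(r_1,r_2,r_3)$, the thresholds $\mrand_j$, and the crossover variables $\crand_j$ across both processes. At positions $j<\ell$ the parent bits coincide in the two processes, hence so do the mutants and the trials. At $j=\ell$, the inequality $Y_g^{i,-}\ge \tilde Y_g^{i,-}$ combined with the monotonicity of $R_N$ established in Lemma~\ref{lem:mutprobin} gives $R_N(Y_g^{i,-})\ge R_N(\tilde Y_g^{i,-})$, so a maximal coupling of the two Bernoulli draws produces $V^g_{i,\ell}\ge \tilde V^g_{i,\ell}$; sharing $\crand_\ell$ then yields $U^g_{i,\ell}\ge \tilde U^g_{i,\ell}$ in either branch of the crossover rule. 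Positions $j>\ell$ may be sampled arbitrarily since no invariant is required there.

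The hard part will be the selection step, because BinaryValue is lexicographic in \emph{every} bit whereas $\tilde f$ skips position $\ell$. I would argue by a case distinction on the smallest index $k$ at which $X^g_i$ and $U^g_i$ disagree. If $k<\ell$, then the coupled pair $(\tilde X^g_i,\tilde U^g_i)$ first disagrees at the same $k$ with identical bit values, so $f$- and $\tilde f$-selection return the same verdict and the invariant propagates directly from $X^g_{i,\ell}\ge \tilde X^g_{i,\ell}$ and $U^g_{i,\ell}\ge \tilde U^g_{i,\ell}$. If $k=\ell$, BinaryValue selects the candidate whose $\ell$-th bit is $1$, so $X^{g+1}_{i,\ell}=1$ dominates $\tilde X^{g+1}_{i,\ell}\in\{0,1\}$ trivially; parent and offspring still agree at all positions below $\ell$, so those bits are preserved regardless of the verdict in either process. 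If $k>\ell$ (including the tie case $X^g_i=U^g_i$), then $X^g_{i,\ell}=U^g_{i,\ell}$, so $X^{g+1}_{i,\ell}$ is unambiguous, and both $\tilde X^g_{i,\ell}$ and $\tilde U^g_{i,\ell}$ are bounded above by this common value, leaving $X^{g+1}_{i,\ell}\ge \tilde X^{g+1}_{i,\ell}$ irrespective of how $\tilde f$-selection resolves; bits below $\ell$ match in the same way.

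Once the invariant is established, it delivers $Y_g(\ell)\ge \tilde Y_g(\ell)$ for every $g$. Because position $\ell$ is neutral for $\tilde f$, Theorem~\ref{thm:iBDENeutral} applied to the $\tilde f$-process gives
\[
\Pr[\exists g\le T:\tilde Y_g(\ell)< 0.4N]\le (T+1)\exp(-c'N),
\]
and a final union bound over $\ell\in\{1,\ldots,D\}$ produces the claimed failure probability of at most $D(T+1)\exp(-c'N)$.
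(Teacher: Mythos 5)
Your proposal is correct and follows essentially the same route as the paper, which simply states that the lemma is obtained ``with the same proof as for Lemma~\ref{lem:LOassumiBDE}'', i.e., the coupling with the process on $\tilde f$ having the $\ell$-th bit neutral, followed by Theorem~\ref{thm:iBDENeutral} and a union bound over positions. You have in fact supplied the one detail the paper leaves implicit, namely that the selection step (which for BinaryValue is lexicographic in every bit rather than only up to the first zero) still preserves the coupling invariant; your case distinction on the first disagreement index $k$ relative to $\ell$ handles this correctly.
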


It remains to argue with experimental data for the fact that BDE and iBDE behave similarly when optimizing the BinaryValue function. 

In our experiments we use the setting $D=1000, N=1000, F=0.2$, and $C=0.3$ (for both BDE and iBDE). For each algorithm, 100 independent runs are conducted. Table~\ref{tbl:BVruntime} gives the minimum, average and maximum runtimes, Figure~\ref{fig:BDEandiBDEfitonBV} plots the average number of ones in the best individual and the LeadingOnes value of the best individual of BDE and iBDE, and Figure~\ref{fig:QuantilesBV} plots the minimum, maximum, and $10\%,50\%,90\%$ quantiles of the frequency of ones in the last bit (which has the least influence on the fitness) as well as the minimum frequency of ones among all bit positions and all runs. 

\begin{table}[H]
\centering
  \caption{The runtimes of BDE and iBDE optimizing the BinaryValue function in 100 independent runs ($D=1000, N=1000, F=0.2$, $C=0.3$).}
  \label{tbl:BVruntime}
    \begin{tabular}{cccc}
    \hline
    & minimum & average & maximum\\
   \hline
    BDE & 1179 & 1195 & 1208\\
    iBDE & 1180 & 1195 & 1216 \\
  \hline
 \end{tabular}
\end{table}%


\begin{figure}[H]
  \centering
  \begin{minipage}[t]{1\textwidth}
  \centering
  \includegraphics[width=3.8in]{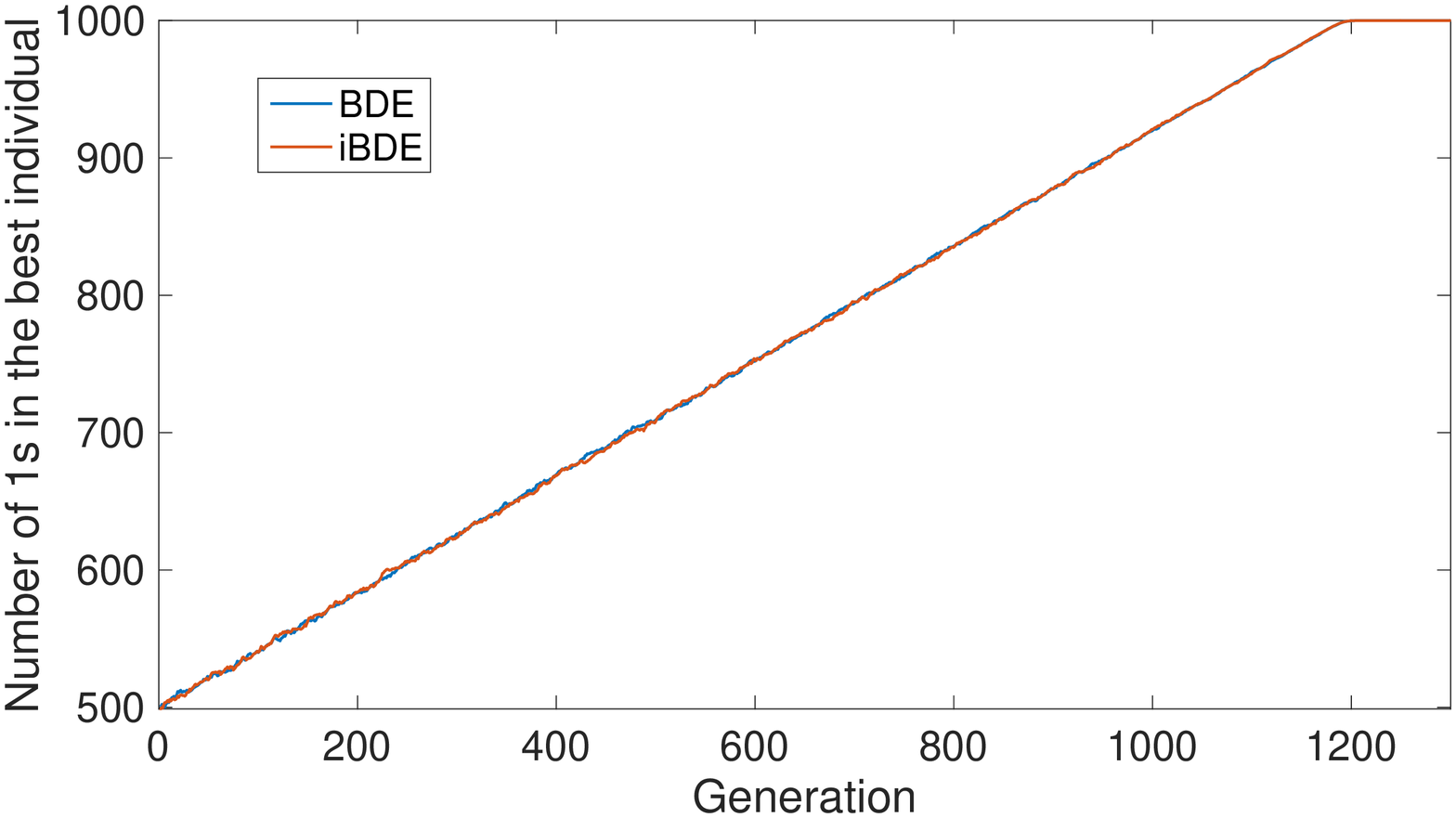}
  \end{minipage}
\vspace{1.pt}
  \begin{minipage}[t]{1\textwidth}
  \centering
  \includegraphics[width=3.8in]{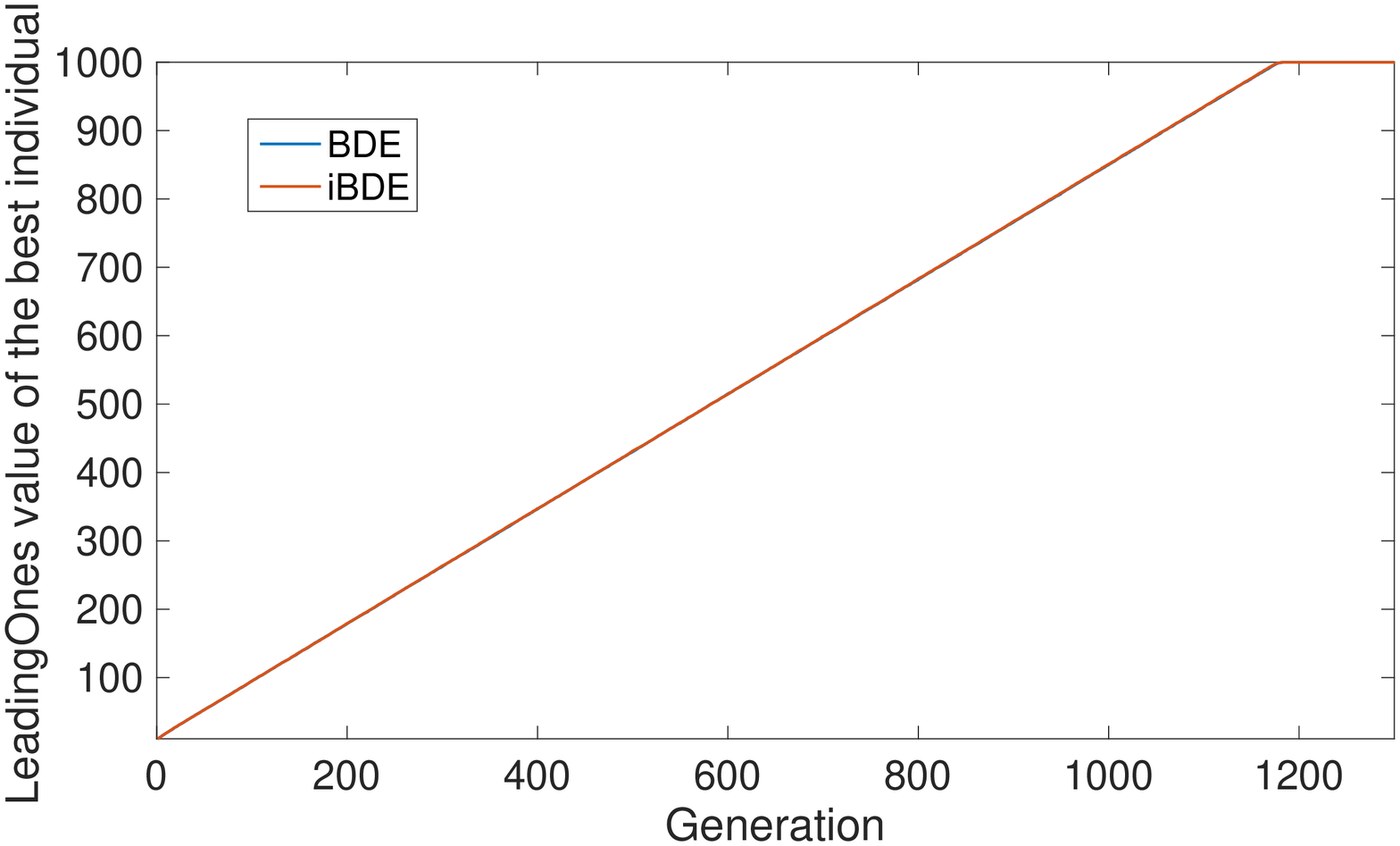}
  \end{minipage}
  \caption{Number of ones in the best individual (top) and LeadingOnes value of the best individual (bottom) among 100 runs BDE and iBDE optimizing BinaryValue function  ($D=1000, N=1000, F=0.2$, $C=0.3$).}
    \label{fig:BDEandiBDEfitonBV}
\end{figure}

\begin{figure}[H]
  \centering
  \begin{minipage}[t]{1\textwidth}
  \centering
  \includegraphics[width=3.8in]{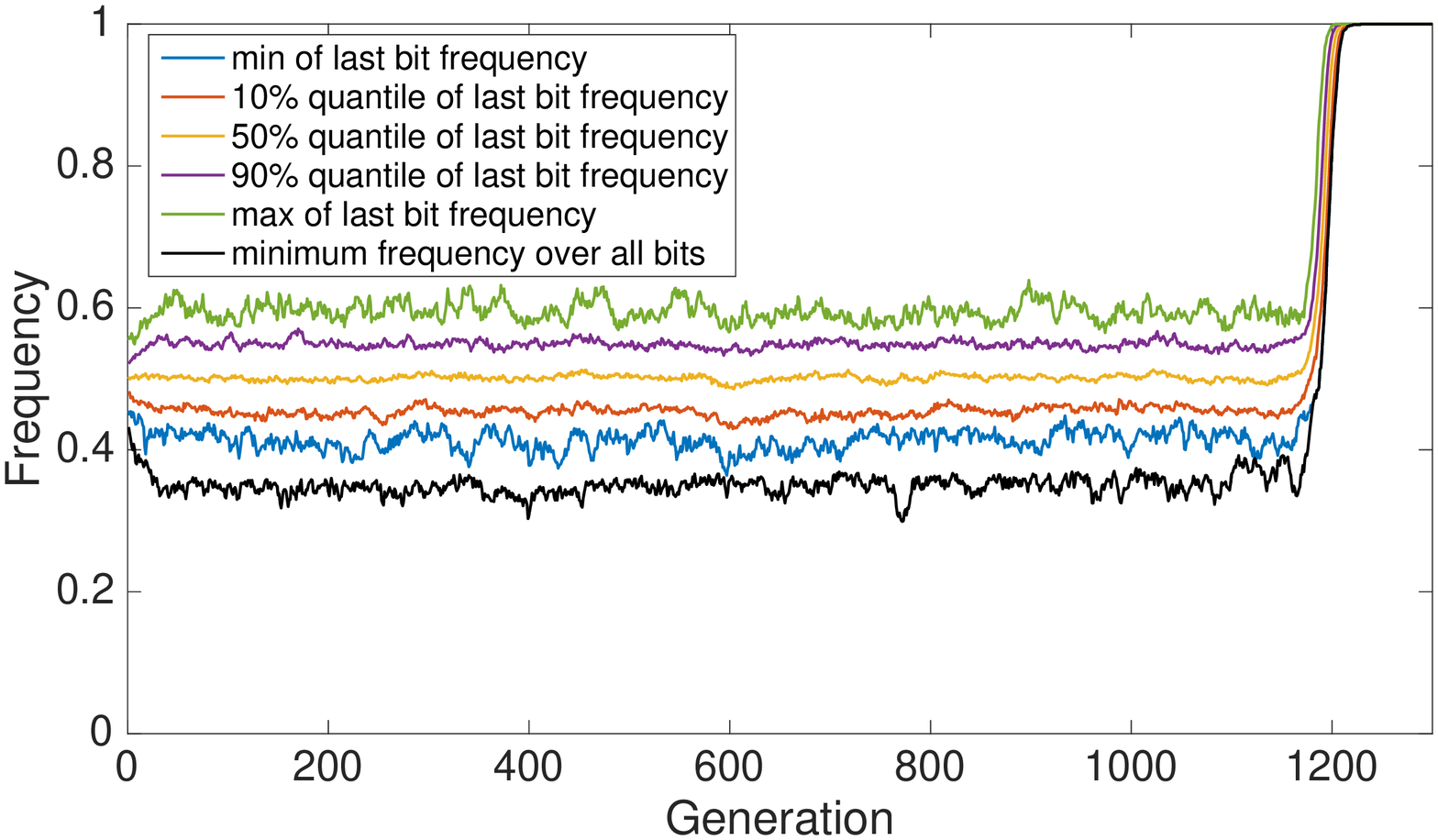}
  \end{minipage}
\vspace{1.pt}
  \begin{minipage}[t]{1\textwidth}
  \centering
  \includegraphics[width=3.8in]{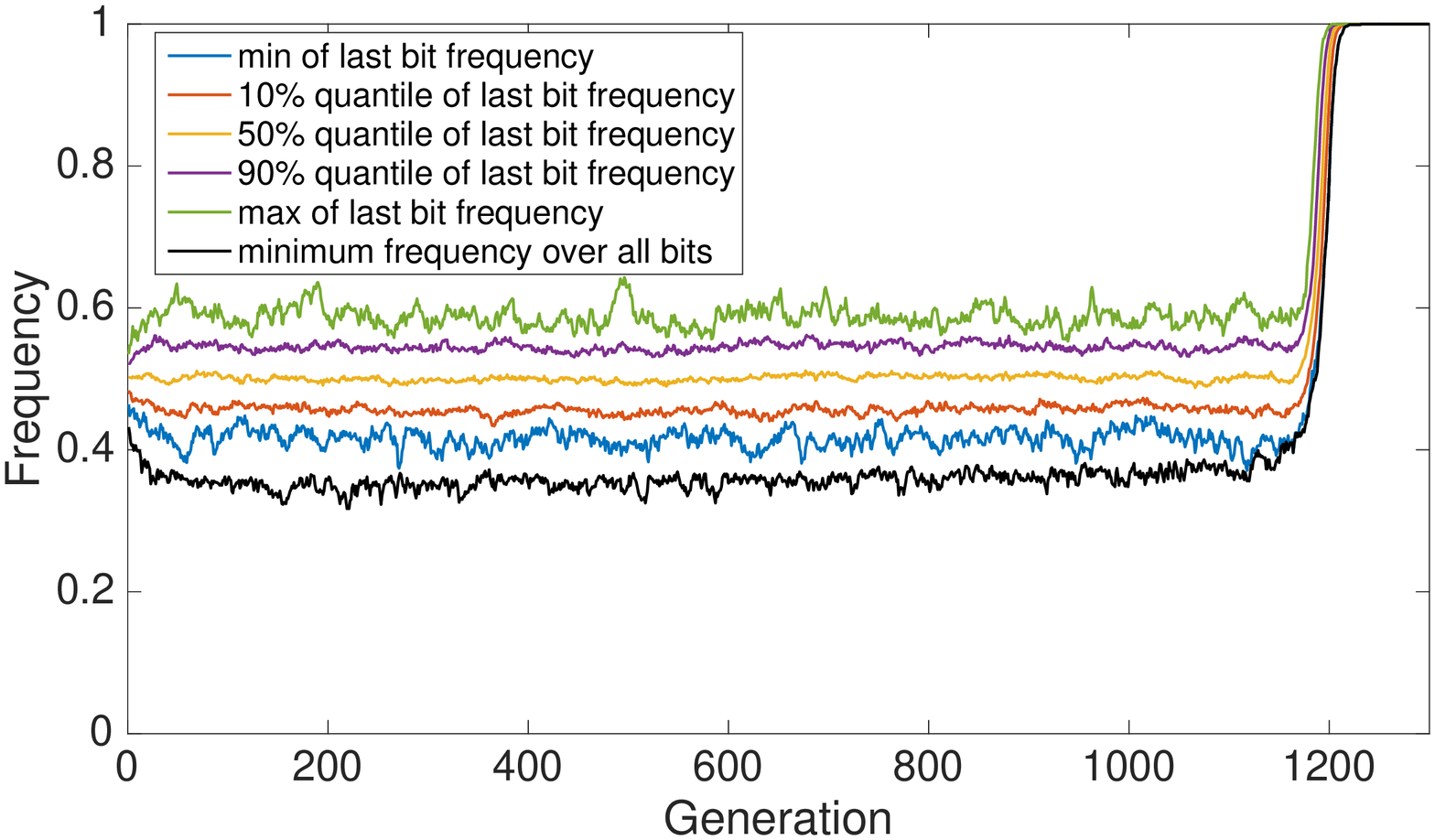}
  \end{minipage}
  \caption{The minimum, maximum, and $10\%,50\%,90\%$ quantiles of the frequency of ones in the last bit position among 100 runs of BDE (top) and iBDE (bottom) optimizing the BinaryValue function ($D=1000, N=1000, F=0.2$, $C=0.3$). Also depicted are the minimum frequency of ones in all bit positions and all runs.}
    \label{fig:QuantilesBV}
\end{figure}


All results demonstrate a very similar optimization behavior of BDE and iBDE. The minimum frequencies depicted in Figure~\ref{fig:QuantilesBV} in addition show clearly that for both BDE and iBDE, the frequencies are bounded away from zero by a constant. From all this, it appears reasonable that BDE and iBDE behave similarly with respect to the assumption made in Theorem~\ref{thm:BDEforBVwAssumption}.

\section{Negative Consequences from the Stability}\label{sec:onemax}

It has been observed that the stability of an algorithm can lead to difficulties when solving problems in which the fitness only gives a weak signal on what is the right value for a bit-position. In~\cite{DoerrK18}, it was proven that the scGA, a version of the cGA artificially made stable, has a runtime of $\exp(\Omega(\min\{n,K\}))$ on the OneMax benchmark function when the hypothetical population size is $K$. For the convex search algorithm (CSA), an at least super-polynomial runtime was shown for the optimization of OneMax~\cite{DoerrK18arxiv}. 

To see if BDE suffers from its stability in a similar manner, we now analyze its performance on the OneMax function as well. Our results will be less conclusive than those for the scGA and CSA, but still rather indicate that BDE finds it hard to optimize OneMax. As a proven result, we show that when the initial population is chosen such that each bit value is one independently with a probability strictly larger than $0.5$ (but less than one), then BDE does not profit from the better fitness of this population, but instead has a runtime exponential in the dimension~$D$. We can not prove such a result for the usual uniform random initialization. Our experiments, however, indicate a super-polynomial runtime.

\subsection{Runtime of BDE When Initialized With a Good Random Population}

In this subsection, we analyze the runtime of BDE on OneMax when initialized with each bit value being one with probability $0.5 < p < 1$ independently. Note that the expected fitness of each initial individual is $pD$, which is better than the value $0.5 D$ obtained from a uniform random initialization. Despite this fitness advantage, we can show that BDE with high probability needs an exponential time to find the optimum. The main argument is that BDE already needs that long to generate an offspring that is better than its parent. 

The following elementary estimate will be needed in our proof.

\begin{lemma}\label{lem:ineq1}
For $x \in [0,1]$,
\begin{equation}
\exp(-\tfrac{8}{3}x(1-x)(x-\tfrac{1}{2})^2) \ge x.
\label{eq:ineq1}
\end{equation}
\end{lemma}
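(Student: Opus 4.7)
The plan is to avoid a direct calculus attack on $f(x) := \exp(-\tfrac{8}{3}x(1-x)(x-\tfrac{1}{2})^2) - x$, since $f$ has equality $f(1)=0$, which makes a monotonicity argument slightly delicate. Instead, I would use the elementary inequality $e^{-t} \ge 1 - t$, which is valid for all real $t$ and is tight at $t=0$. Applying this with $t := t(x) = \tfrac{8}{3}x(1-x)(x-\tfrac{1}{2})^2 \ge 0$, it suffices to prove the purely polynomial inequality
\begin{equation*}
1 - t(x) \ge x, \quad\text{i.e.,}\quad \tfrac{8}{3}\,x(1-x)\bigl(x-\tfrac{1}{2}\bigr)^2 \le 1 - x,
\end{equation*}
for all $x \in [0,1]$. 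Since the exponent $t(x)$ vanishes at $x=1$, this passage loses nothing at the critical boundary, and similarly $1-x$ and $t(x)$ both vanish there, so we have the right kind of matching.

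Pulling out the common factor $(1-x) \ge 0$, the claim reduces to showing that
\begin{equation*}
q(x) := \tfrac{8}{3}\,x\bigl(x-\tfrac{1}{2}\bigr)^2 \le 1 \quad\text{for all } x \in [0,1].
\end{equation*}
This is a standard one-variable extremization. A quick derivative computation gives $\frac{d}{dx}[x(x-\tfrac12)^2] = (x-\tfrac12)(3x-\tfrac12)$, whose zeros in $[0,1]$ are $x=\tfrac16$ and $x=\tfrac12$. Comparing values at these critical points and at the endpoints $0$ and $1$ shows that $x(x-\tfrac12)^2$ attains its maximum on $[0,1]$ at $x=1$, with value $\tfrac14$. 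Hence $q(x) \le \tfrac{8}{3}\cdot\tfrac14 = \tfrac{2}{3} < 1$ throughout $[0,1]$.

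Putting the two pieces together, $t(x) = (1-x)\cdot \tfrac{8}{3}x(x-\tfrac12)^2 \le (1-x)\cdot\tfrac{2}{3} \le 1-x$, so $1 - t(x) \ge x$, and consequently $\exp(-t(x)) \ge 1 - t(x) \ge x$, which is what was to be shown. I do not expect any real obstacle: the only point worth checking is that the coarse bound $e^{-t}\ge 1-t$ is strong enough near $x=1$, but this is automatic because both sides of the target inequality vanish at $x=1$ while the slack $q(x) \le \tfrac23$ is a strict constant factor away from $1$.
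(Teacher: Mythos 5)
Your proof is correct, and it takes a genuinely different route from the paper. The paper works with the logarithm: it sets $\ell(x) = -\tfrac{8}{3}x(1-x)(x-\tfrac{1}{2})^2 - \ln x$, shows via a somewhat fiddly case analysis that $\ell'(x) < 0$ on $(0,1]$, and concludes from $\ell(1)=0$ that $\ell \ge 0$, handling $x=0$ separately. You instead linearize the exponential via $e^{-t} \ge 1-t$, which reduces the claim to the polynomial inequality $\tfrac{8}{3}x(1-x)(x-\tfrac12)^2 \le 1-x$; after cancelling the factor $1-x$ this becomes the one-variable maximization $\tfrac{8}{3}x(x-\tfrac12)^2 \le 1$ on $[0,1]$, which you settle correctly (the maximum of $x(x-\tfrac12)^2$ is $\tfrac14$ at $x=1$, giving the bound $\tfrac23 < 1$). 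Your approach buys elementarity: no logarithms, no sign analysis of a quintic-derivative expression, and the boundary point $x=1$ is handled automatically because both $t(x)$ and $1-x$ vanish there with a constant factor of slack. The paper's log-and-monotonicity argument is in principle lossless (it would still work if the inequality were tight in the interior), whereas your linearization only succeeds because of the genuine slack $\tfrac23 < 1$ — but that slack is present, so nothing is lost here. Both proofs are complete and rigorous.
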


\begin{proof}
When $x=0$, $\exp(-\tfrac{8}{3}x(1-x)(x-\tfrac{1}{2})^2)=1>0=x$. We consider $x\in(0,1]$ in the following.
Let $\ell(x)=-\tfrac{8}{3}x(1-x)(x-\tfrac{1}{2})^2 - \ln x$. Then
\begin{align*}
\ell'(x)={}&-\tfrac{8}{3}((1-2x)(x-\tfrac{1}{2})^2+x(1-x)2(x-\tfrac{1}{2}))-\tfrac{1}{x}\\
={}&-\tfrac{8(1-2x)}{3}(2x^2-2x+\tfrac{1}{4})-\tfrac{1}{x}\\
={} & \tfrac{2x-1}{3}((4x-2)^2-2)-\tfrac{1}{x}\\
\le {}& \tfrac{2x-1}{3}((4x-2)^2-2)-1.
\end{align*}
It is not difficult to see that when $x \in (0,\tfrac{1}{2})$, $(1-2x)(2-(2-4x)^2) \le 2$, when $x \in (\tfrac{1}{2},1]$, $(2x-1)((4x-2)^2-2) \le 2$, and when $x=\tfrac{1}{2}$, $(2x-1)((4x-2)^2-2) =0$. Thus for $x \in (0,1]$, we have $(2x-1)((4x-2)^2-2) \le 2$. Hence $\ell'(x) \le \tfrac{2}{3}-1<0$. Since thus $\ell(x)$ is monotonically decreasing, and we have $\ell(x)\ge \ell(1)=0$ for all $x \in (0,1]$, which gives the claim.
\end{proof}

Now we state and prove our lower bound on the runtime.

\begin{theorem}\label{thm:rtlargeprob}
Let $0.5 < p < 1$. Consider using BDE with population size $N$ to optimize the $D$-dimensional OneMax function when in the initial population each bit is one independently with probability $p$. Let $\gamma=\tfrac{8}{3}\tfrac{F^2Cp(1-p)(p-0.5)^2}{1+(1-2FCp(1-p))F(1-2p)^2}$, which is a positive constant depending on the constants $F$, $C$, and $p$ only. For all $t \in \N$, the runtime $T$ satisfies
\begin{align*}
\Pr[T \ge t] \ge 1 - t N \exp(-\gamma D).
\end{align*}
In particular, $E[T] \ge \frac 1{2N} \exp(\gamma D) -1$.
\end{theorem}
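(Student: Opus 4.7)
The plan is a union bound over offspring generations combined with a per-offspring bound on the probability of strict improvement. By the BDE selection rule, $X_i^{g+1}\in\{X_i^g, U_i^g\}$, and since the OneMax fitness equals the Hamming weight, $\|X_i^g\|_1$ is non-decreasing in $g$: if $f(U_i^g)\ge f(X_i^g)$ the offspring replaces the parent, otherwise the parent is kept. Hence the all-ones optimum is reached only if at some iteration some offspring strictly improves on its parent, i.e.\ the event $I_{i,g} := \{f(U_i^g) > f(X_i^g)\}$ occurs for some $i\le N$ and $g<t$. A uniform per-offspring bound $\Pr[I_{i,g}\mid P^g]\le \exp(-\gamma D)$ together with a union bound over the $tN$ offspring generations in the first $t$ iterations then yields $\Pr[T\ge t]\ge 1-tN\exp(-\gamma D)$. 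The tail bound $E[T]\ge \tfrac{1}{2N}\exp(\gamma D)-1$ follows from $E[T]=\sum_{t\ge 1}\Pr[T\ge t]$ by truncating the sum at $t^{\ast}=\lceil\exp(\gamma D)/N\rceil$ and summing the resulting arithmetic progression: the first $t^\ast$ summands are each at least $1-tN\exp(-\gamma D)$, giving a total of at least $t^\ast - N\exp(-\gamma D)t^\ast(t^\ast+1)/2 \ge t^\ast/2 - 1$.

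The heart of the proof is the per-offspring improvement bound. For each bit position $j$, set $D_j := U_{i,j}^g - X_{i,j}^g \in \{-1,0,+1\}$, so that $I_{i,g} = \{\sum_{j=1}^D D_j \ge 1\}$. Conditional on $P^g$ and on the choice of $(r_1,r_2,r_3)$, the $D_j$'s are independent across $j$ because the per-position randomness $\mrand_j,\crand_j$ is sampled independently. A direct calculation from the mutation and crossover rules of Algorithm~\ref{alg:originalBDE}, after averaging over $(r_1,r_2,r_3)$ and the Bernoulli$(p)$ initial bits, gives $\Pr[D_j = +1] = Cq(1-p)$ and $\Pr[D_j=-1]=C(1-q)p$, where $q=p(p^2+(1-p)^2)+2p(1-p)[p(1-F)+(1-p)F]=p-2Fp(1-p)(2p-1)$ is the marginal mutant-bit probability. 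Because $p>\tfrac12$ forces $\Delta:=p-q=2Fp(1-p)(2p-1)>0$, we have the crucial negative drift $E[D_j]=-C\Delta<0$: BDE's mutation step is intrinsically biased \emph{against} producing extra ones when the population is skewed toward $1$. Applying the Chernoff--Markov inequality
\[\Pr\Bigl[\sum_{j=1}^D D_j\ge 1\Bigr]\le \inf_{s>0}e^{-s}\bigl(E[e^{sD_j}]\bigr)^D\]
reduces the task to bounding the MGF $g(s)=1+Cq(1-p)(e^s-1)+C(1-q)p(e^{-s}-1)$ at a suitable $s$.

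I would then choose $s$ of order $F(2p-1)$ and expand $\ln g(s)$ to second order in $F$; this produces a term proportional to $F^2 C p(1-p)(2p-1)^2$ with a denominator of the form $1+F(2p-1)^2(1-2FCp(1-p))$. Lemma~\ref{lem:ineq1}, i.e.\ $\exp(-\tfrac{8}{3}x(1-x)(x-\tfrac12)^2)\ge x$, is exactly the vehicle that extracts the $8/3$ prefactor in the numerator of $\gamma$ and justifies the denominator, turning the MGF estimate into the clean exponential $\exp(-\gamma D)$. The main technical obstacle is making the per-offspring bound uniform across all iterations rather than only the initial one, since for $g\ge 1$ the bit frequencies of $P^g$ need not match the Bernoulli$(p)$ structure used in the calculation. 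I would handle this by conditioning on $\mathcal H_g:=\bigcap_{g'<g,\,i\le N}I_{i,g'}^c$: on $\mathcal H_g$ each individual's Hamming weight is frozen at its initial value (which is Bin$(D,p)$), so per-position frequencies in $P^g$ remain stochastically controlled by the initial Bernoulli$(p)$ distribution via a coupling, and the first-iteration MGF estimate carries through verbatim. This coupling (or equivalently a worst-case MGF bound over all $P^g$ compatible with $\mathcal H_g$) is where the bulk of the remaining technical work lies.
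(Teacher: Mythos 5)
There is a genuine gap, and it sits exactly where you locate ``the bulk of the remaining technical work.'' You take the key event to be \emph{strict} improvement $f(U_i^g)>f(X_i^g)$, and then must argue that the per-offspring bound, computed for the i.i.d.\ Bernoulli($p$) initial population, still applies to $P^g$ for $g\ge 1$. But on your event $\mathcal{H}_g$ (no strict improvement so far) the population is \emph{not} frozen: the selection rule accepts any trial with $f(U_i^g)\ge f(X_i^g)$, so equal-fitness replacements keep rewriting the bit strings while preserving Hamming weights. Conditioning on $\mathcal{H}_g$ moreover biases the joint distribution of the bits in a way that destroys the product structure your MGF computation relies on; no coupling to the Bernoulli($p$) population is given, and it is not clear one exists. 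The paper sidesteps this entirely by two moves you are missing: (i) it takes the key event to be ``the trial is \emph{at least as good} as its parent,'' which is precisely the acceptance condition, and (ii) it introduces an artificial process that never accepts any trial (so every iteration restarts from the initial population). The real and artificial processes coincide up to and including the first acceptance, hence the event ``no trial is at least as good as its parent within $t$ iterations'' has the same probability in both; in the artificial process the first-iteration bound applies verbatim to every iteration and the union bound over $tN$ trials goes through with no conditioning argument at all.

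Two smaller points. First, you omit the event that the random initial population already contains the optimum; since each initial individual is optimal with probability $p^D$, this must be added to the union bound, and Lemma~\ref{lem:ineq1} exists precisely to show $p^D\le\exp(-\tfrac83 p(1-p)(p-\tfrac12)^2 D)\le\exp(-\gamma D)$ --- it plays no role in extracting the $8/3$ prefactor of $\gamma$, which in the paper comes from a Bernstein-type Chernoff bound $\Pr[Y\ge E[Y]+|E[Y]|]\le\exp(-\tfrac13\min\{E[Y]^2/\mathrm{Var}[Y],\,|E[Y]|/(1+|E[Y_j]|)\})$ applied to $Y=\sum_j(U_{i,j}-X_{i,j})$ rather than from an explicit MGF optimization. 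Second, your single-bit computations ($q=p-2Fp(1-p)(2p-1)$ and the $\pm1$ probabilities $Cq(1-p)$, $C(1-q)p$) do agree with the paper's, and your derivation of the expectation bound from the tail bound is sound; but your MGF step is only sketched (``expand to second order in $F$'') and is not verified to produce the stated constant $\gamma$.
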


The proof involves some computations similar to those done earlier in this work, namely what is the probability to generate a $1$ in a mutant or trial. The difference, and this makes things a little easier, is that here we can assume that the bit values used in generating the mutants are independent. 

\begin{proof}
Let $t \in \N$ and let $A_t$ be the event that within the first $t$ iterations, BDE generates no trial that is at least as good as its parent. 

We first show that a trial vector $U_i$ generated from four random individuals $X_i,X_{r_1},X_{r_2}$ and $X_{r_3}$ with probability at least $1-\exp(-\gamma D)$ is worse than the parent $X_i$, where $\gamma$ is a constant specified further below. Let $X_i, X_{r_1}, X_{r_2}, X_{r_3} \in \{0,1\}^D$ such that each entry of these vectors independently is one with probability~$p$.
For each position $j\in\{1,\dots,D\}$, we have $V_{i,j}=1$ exactly if one of the following disjoint cases holds.
\begin{itemize}
\item $X_{r_1,j}=1, X_{r_2,j}=X_{r_3,j}$.
\item $X_{r_1,j}=1, X_{r_2,j} \ne X_{r_3,j}, \mrand_j \ge F$.
\item $X_{r_1,j}=0, X_{r_2,j} \ne X_{r_3,j}, \mrand_j < F$.
\end{itemize}
Since the random variables $X_{r_{1},j},X_{r_{2},j},X_{r_{3},j}$ are independent Bernoulli trials with success probability $p$, we have 
\begin{equation}
\begin{split}
\Pr[V_{i,j}=1]={}&p(p^2+(1-p)^2)+p(2p(1-p))(1-F)+(1-p)(2p(1-p))F\\
={}&p+4F\cdot p(1-p)(0.5-p).
\end{split}
\label{eq:mutprob}
\end{equation}

Note that $V_{i,j}$ is determined by $X_{r_1},X_{r_2},X_{r_3}$ and $\mrand_j$. Since $X_{r_1},X_{r_2},X_{r_3}$ and $\mrand_j$ are independent from $X_i$, we know that $V_{i,j}$ and $X_{i,j}$ are independent. Recalling the definition of $U_{i,j}$, we have
\begin{equation}
\begin{split}
\Pr[U_{i,j}{}&=1,X_{i,j}=0]\\
={}&\Pr[V_{i,j}=1,X_{i,j}=0,\crand_j \le C]
=\Pr[V_{i,j}=1]\Pr[X_{i,j}=0]\Pr[\crand_j \le C]\\
={}&(p+4F\cdot p(1-p)(0.5-p))(1-p)C
=(1+4F(1-p)(0.5-p))p(1-p)C,\\
\Pr[U_{i,j}{}&=0,X_{i,j}=1]\\
={}&\Pr[V_{i,j}=0,X_{i,j}=1,\crand_j \le C]
=\Pr[V_{i,j}=0]\Pr[X_{i,j}=1]\Pr[\crand_j \le C]\\
={}&(1-p-4F\cdot p(1-p)(0.5-p))pC
=(1-4Fp(0.5-p))p(1-p)C.
\end{split}
\label{eq:jointprob}
\end{equation}

Let $Y_j=U_{i,j}-X_{i,j}$. From (\ref{eq:jointprob}), we know that $Y_j$ is a random variable, which is $+1$ with probability $(1+4F(1-p)(0.5-p))p(1-p)C$, which is $-1$ with probability $(1-4Fp(0.5-p))p(1-p)C$ and which is zero otherwise. Hence we have
\begin{align*}
E[Y_j]=4FCp(1-p)(0.5-p)
\end{align*}
and
\begin{align*}
\Var[Y_j]={}&E[Y_j^2]-E[Y_j]^2\\
={}&(1+4F(1-p)(0.5-p))p(1-p)C+(1-4Fp(0.5-p))p(1-p)C\\
{}&-4FCp(1-p)(0.5-p)\\
={}&2Cp(1-p)(1+(1-2FCp(1-p))F(1-2p)^2).
\end{align*}

Let $Y=\sum_{j=1}^D Y_j$ and observe that this is the fitness difference between $U_i$ and $X_i$. Then $E[Y]=4FCDp(1-p)(0.5-p)$.  Note that the $Y_j$ are independent. Via a Chernoff bound like Theorem 10.12 in \cite{doerr2018probabilistic}, we have
\begin{align*}
\Pr[Y {}&\ge 0] = \Pr[Y \ge E[Y]+|E[Y]|] \\
\le{}& \exp\left(-\frac{1}{3}\min\left\{\frac{E[Y]^2}{\Var[Y]},\frac{|E[Y]|}{1+|E[Y_j]|}\right\}\right)\\
\le{}& \exp\bigg(-\frac{1}{3}\min\bigg\{\frac{(4FCDp(1-p)(p-0.5))^2}{2Cp(1-p)(1+(1-2FCp(1-p))F(1-2p)^2)D},\\
{}&4FCDp(1-p)|p-0.5|\bigg\}\bigg)\\
={}&\exp\left(-\frac{4FCp(1-p)|p-0.5|}{3}\min\left\{\frac{2F|p-0.5|}{1+(1-2FCp(1-p))F(1-2p)^2},1\right\}D\right)\\
={}&\exp\left(-\frac{8}{3}\frac{F^2Cp(1-p)(p-0.5)^2}{1+(1-2FCp(1-p))F(1-2p)^2}D\right)=\exp(-\gamma D)
\end{align*}
with $\gamma=\tfrac{8}{3}\tfrac{F^2Cp(1-p)(p-0.5)^2}{1+(1-2FCp(1-p))F(1-2p)^2}$. Consequently, with probability at least $1-\exp(-\gamma D)$, the trial $U_i$ is worse than the parent $X_i$.

We now use the above claim to show that also over a longer time frame, BDE started with this initial population will not make any progress with high probability. To overcome the dependencies stemming from the small, but positive probability of accepting a better individual, we use the following artificial process. 

The artificial process is identical to BDE except that it never replaces a parent with the trial. Consequently, it starts each iteration with the initial population. Since the artificial process and the true BDE behave identical up to (and including) the first iteration in which a search point at least as good as its parent is generated, the events that for $t$ iterations no search point as good as its parent is generated, have the same probability for both processes. It therefore suffices to analyze the probability of the event $A_t$ for the artificial process. 

To overcome the dependencies from reusing the same individuals when generating different mutants, we use the simple union bound over the $t$ iterations and the $N$ trials generated in each iterations. This gives $\Pr[\neg A_t] \le t N \exp(-\gamma D)$. 

To obtain a lower bound on the optimization time, we also need to regard the event $B$ that one of the random initial individuals is already the optimum. Each random initial individual has a probability of $p^D$ of being optimal. With~(\ref{eq:ineq1}), we estimate 
\begin{equation*}
p^D \le \exp(-\tfrac{8}{3}p(1-p)(p-\tfrac{1}{2})^2D) \le \exp(-\gamma D).
\end{equation*}
Hence $\Pr[B] \le N \exp(-\gamma D)$ and 
\[\Pr[T \ge t] \ge 1 - \Pr[(\neg A_{t-1}) \cup B] \ge 1 - t N \exp(-\gamma D),\]
which is the claimed probabilistic lower bound. To turn this into a lower bound for the expectation, we observe that the probabilistic lower bound immediately implies that the runtime $T$ stochastically dominates (cf.~\cite{Doerr18evocop}) a random variable $U$ which is uniformly distributed on $[0..u-1]$ with $u = \lfloor \frac 1N \exp(\gamma D)\rfloor$. To see this, it suffices to compute that $\Pr[U \ge t] = 1 - \frac{t}{u} \le 1 - t N \exp(-\gamma D)$. Hence $E[T] \ge E[U] = \frac{u-1}{2} = \frac 12  \lfloor \frac 1N \exp(\gamma D) -1\rfloor \ge \frac 1{2N} \exp(\gamma D) -1$.
%
%
\end{proof}

The lower bound for the runtime just proven becomes weaker with increasing population size. This stems from the fact that we only regarded the event that not a single improving offspring is generated. We do not expect that this is the true behavior. For larger population sizes, indeed we will earlier create an improving offspring, but its influence on the bit-frequencies is smaller. Since the result above is sufficient to give an runtime exponential in $D$ for sub-exponential population sizes, we do not investigate this question in more detail. The experimental results in the following section indicate that there is no advantage from increasing the population size above the level which is necessary to prevent premature convergence.

\subsection{Experimental Results for \onemax}\label{sec:onemaxexp}

The above result could indicate that BDE has significant difficulties optimizing \onemax, in particular, for larger dimensions. To obtain a first understanding of the performance of BDE on \onemax when using the usual random initialization, we perform experiments for BDE and iBDE with $D=500$, $F=0.2$, and $C=0.3$, which are the same parameter values as used in \cite{GongT07}. For each $N=25,50,100,1000,10000$, we conducted $100$ independent runs. While all runs converge within the maximum number of iterations of $2000$, for small values of $N$ the frequencies of some bit-values converge to the wrong value of zero (see Table~\ref{tbl:sucResult} for the details). This general behavior, which is of similar order for BDE and iBDE, is roughly what was to be expected -- if the population size is small, the relative variance within a bit-frequency is larger and this can lead to all individuals having a zero in one bit-position. 

When regarding the convergence curves, that is, the growth of the average fitness over time (Figure~\ref{fig:OMD500BDE}), for both BDE and iBDE we see only small differences between the different population sizes. This is matches our expectations, see the discussion following the proof of Theorem~\ref{thm:rtlargeprob}. Overall, this first set of results indicates that a certain population size is necessary to prevent premature convergence, but there is no gain from increasing the population size further. That larger population sizes give little additional benefit has been observed for classic DE in continuous optimization before, see, e.g.,~\cite{Storn17}. 

\begin{table}
\centering
  \caption{The success rates of BDE and iBDE when optimizing \onemax. Given are number of successful runs (\#Success), failed runs caused by premature convergence, that is, a bit frequency reading zero (\#Frequency0), and failed runs caused by reaching the computational budget of $2000$ generations (\#LimitedFen), each out of $100$ runs.}
  \label{tbl:sucResult}
    \begin{tabular}{lrrrrr}
    \hline
   BDE: $N$ & 25 & 50 & 100 & 1000 & 10000\\
   \hline
     \#Success & 0 & 95 & 100 & 100 & 100 \\
     \#Frequency0 & 100 & 5 & 0& 0& 0 \\
     \#LimitedGen & 0 & 0 & 0& 0&  0\\
  \hline
  \hline
   iBDE: $N$ & 25 & 50 & 100 & 1000 & 10000 \\
   \hline
     \#Success & 0 & 96 & 100 & 100 & 100  \\
     \#Frequency0 & 100 & 4 & 0& 0& 0\\
     \#LimitedGen & 0 & 0 & 0& 0&  0\\ 
  \hline
 \end{tabular}
\end{table}%

\begin{figure}[H]
\centering
  \begin{minipage}[t]{1\textwidth}
  \centering
  \includegraphics[width=3.8in]{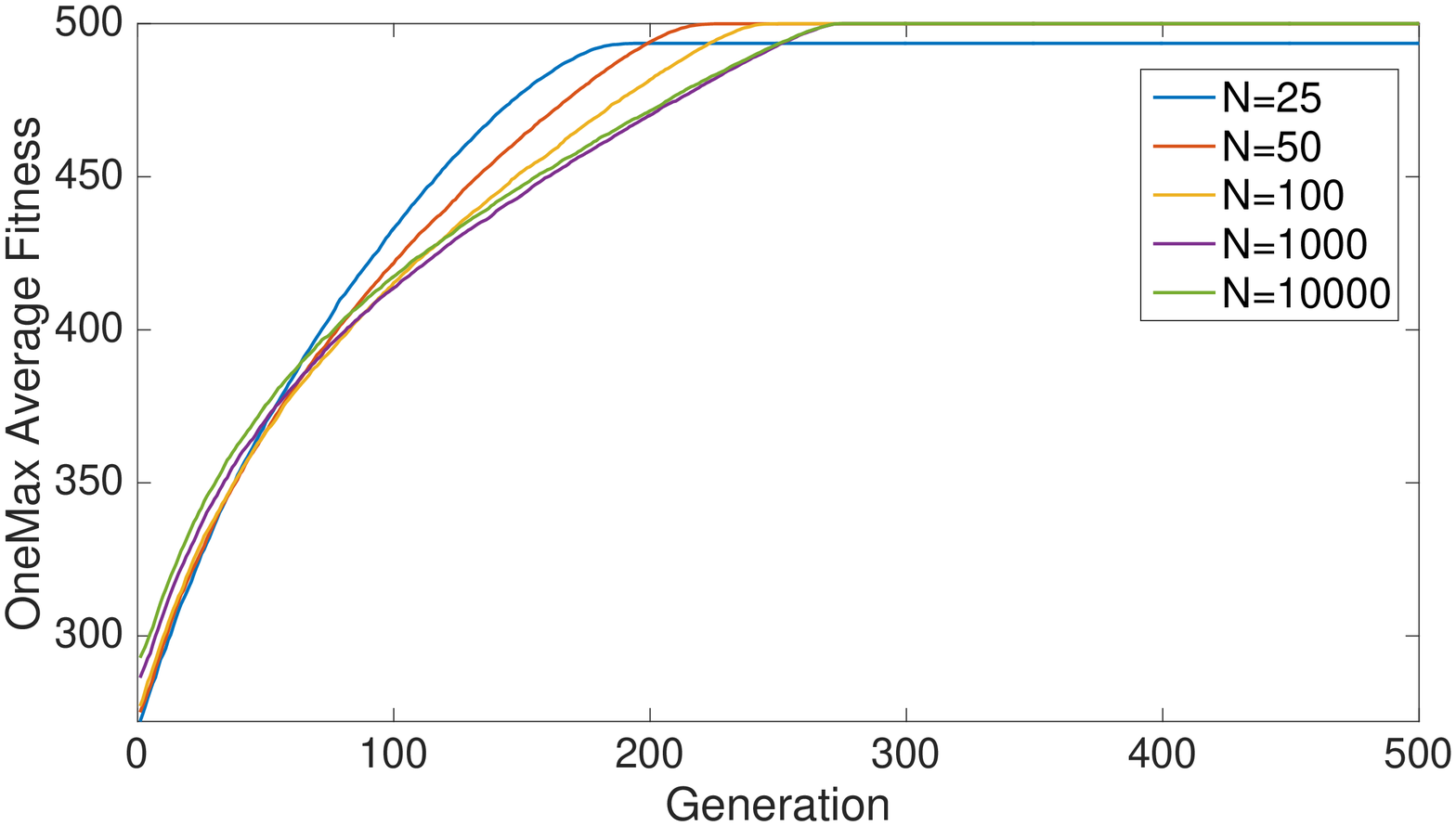}
  \end{minipage}
\vspace{1.pt}
  \begin{minipage}[t]{1\textwidth}
  \centering
  \includegraphics[width=3.8in]{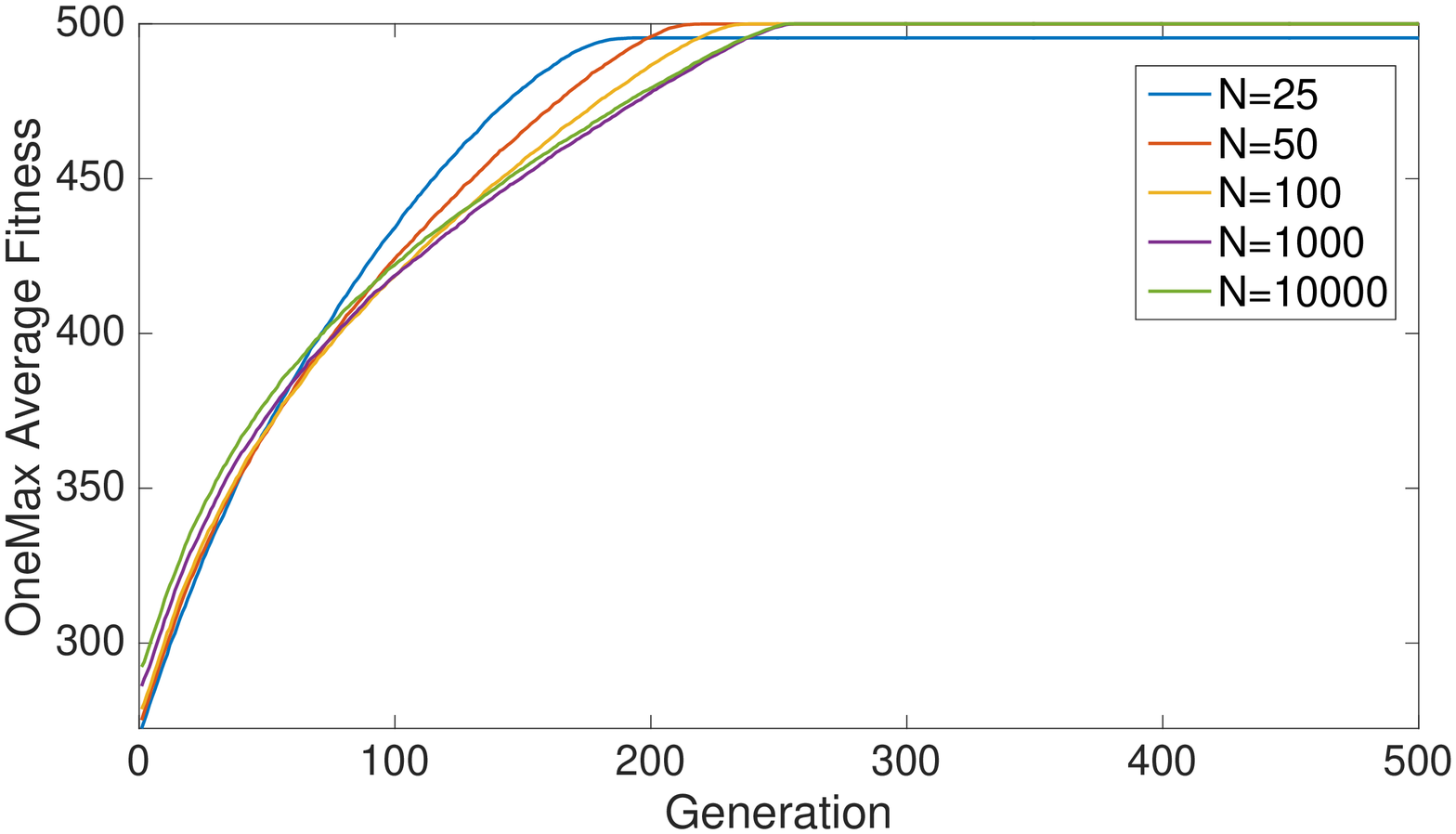}
  \end{minipage}
  \caption{Average fitness curves among 100 runs of BDE (top) and iBDE (bottom) with different population sizes when optimizing the OneMax function ($D=500, F=0.2, C=0.3, N=25, 50, 100, 1000, 10000$).}
\label{fig:OMD500BDE}
\end{figure}

The experimental results just presented do not allow a clear answer to the question whether \onemax is an easy or a difficult function for BDE. One scenario could be that BDE has a runtime exponential in $D$, but that the implicit constants are too small to let this exponential runtime behavior become visible for the problem size $D = 500$. Note that already in Theorem~\ref{thm:rtlargeprob} (where we have proven an exponential runtime for a suitable initialization), the constant $\gamma$ is $\gamma = 7.62\times 10^{-5}$ for the usual parameters $F=0.2$ and $C=0.3$ and for $p = 0.6$.

To gain more insight, we conduct experiments for varying problem size $D = 100, 200, \dots, 3300$. Based on our previous insight, we only regard the mid-range population sizes $N = 100, 200, 500$. Apart from a few runs for $N=100$ and $D \ge 2700$, all runs succeeded in finding the optimum. The average runtimes of the successful runs (as before in generations and not fitness evaluations) are depicted in Figure~\ref{fig:OMBDEVaryD}. In particular for $N=200$ and $N=500$, we see a steep increase of the runtime with growing problem size. The double-logarithmic plot of the same data in Figure~\ref{fig:loglogOMBDEVaryD} indicates that these are not polynomial functions. 

\begin{figure}[!ht]
\centering
\includegraphics[width=3.8in]{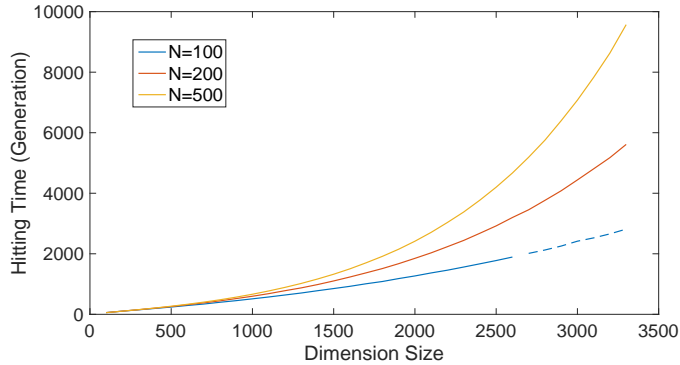}
\caption{Average hitting time curves among 100 runs of BDE optimizing OneMax function ($D=100, 200, \dots, 3300$, $N=100,200,500$, $F=0.2$, and $C=0.3$). The dashed part of the line for $N=100$ indicates that from $N=2700$ on, a few runs did not find the optimum; here the average is taken over all successful runs.}
\label{fig:OMBDEVaryD}
\end{figure}

\begin{figure}[!ht]
\centering
\includegraphics[width=3.8in]{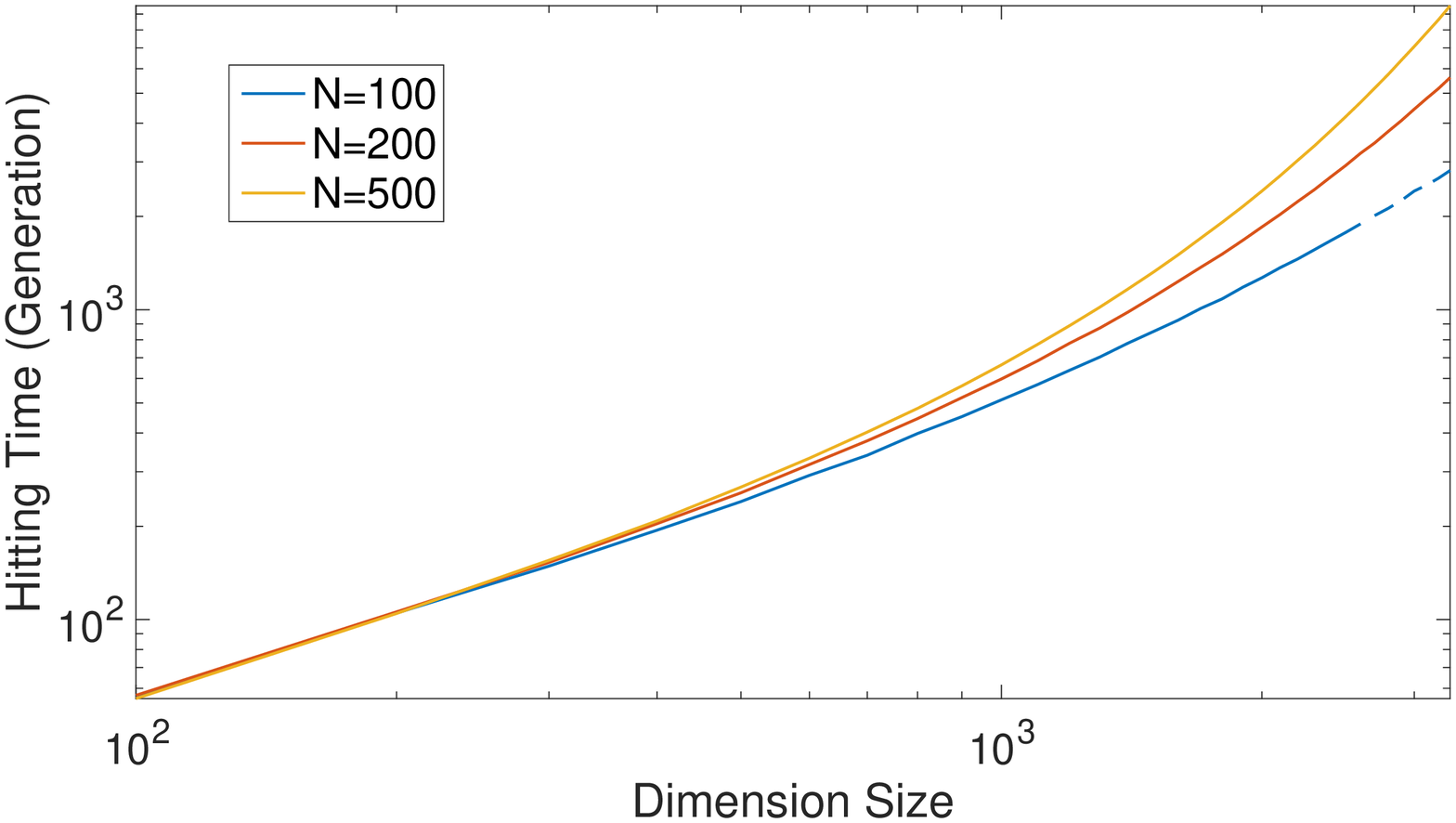}
\caption{Double-logarithmic plots of the average hitting times displayed in Figure~\ref{fig:OMBDEVaryD}.}
\label{fig:loglogOMBDEVaryD}
\end{figure}

While this is not a formal proof for a super-polynomial runtime of BDE on \onemax, these results support our previous suspicion that generally BDE has a not very convincing runtime behavior on easy functions like \onemax. However, the difficulties of making this behavior visible also suggest that with the right choice of the parameters, for moderate problem sizes still an efficient optimization is possible.

\section{Conclusion}

We have conducted the first fundamental analysis of the working principles of BDE and found that BDE behaves quite differently from classic evolutionary algorithms or distribution-based methods. The dependencies stemming from reusing the same individuals in the mutation operator and from the selection operator appear to be the main reason for this. Unfortunately, they also lead to more difficult mathematical analyses compared with the general univariate algorithms. 

While many classic evolutionary algorithms and EDAs can generate any search point from the current population, this is different for BDE. We proved that from the random initial population, only an exponentially small fraction of the search space is reachable in one iteration. This does not necessarily harm the performance, but it makes it harder to decide whether convergence to the optimum is still possible from the current population. We gave an example showing that this question is more difficult for BDE than for most other evolutionary algorithms.

One interesting feature of BDE is that it is more stable (frequencies not subject to a fitness signal stay around $1/2$ for a long time) than most other algorithms. This enables BDE to quickly optimize decision variables which initially behave neutral, but then become important (as in the LeadingOnes benchmark function). The potential downside of this is that highly symmetric functions like OneMax, in which each bit position only has a small influence on the fitness, could be more difficult to optimize. In particular from the view-point of quickly finding a good, but not necessarily optimal solution, the property to quickly optimize the currently crucial bits appears to outweigh possible performance losses on OneMax type functions.

Overall this work shows that BDE has a number of interesting feature not present in most classic evolutionary algorithms (including EDAs). This first work on the working principles of BDE suggests to explore these in more detail. This work has not identified a reason why differential evolution should in discrete search spaces not be similarly successful as in continuous one search spaces. 

One clear challenge arising from this work is to devise mathematical analysis methods that can cope with the inherent stochastic dependencies. At the moment, they make it hard to use the rigorous runtime analysis methodology which greatly improved the understanding of classic evolutionary algorithms. The obvious particular problems left open in this work are a completely rigorous runtime analysis (without mean-field arguments) for the LeadingOnes, BinaryValue, and OneMax benchmark functions.

\section*{Acknowledgement}
This work was supported in part by the National Key R\&D Program of China (Grant No. 2017YFA0604500), and by the National Natural Science Foundation of China (Grant No.5171101179, 61702297, 91530323).


\newcommand{\etalchar}[1]{$^{#1}$}

\end{document}